\documentclass{article}





\usepackage[nonatbib,final]{neurips_2022}

\usepackage[utf8]{inputenc} 
\usepackage[T1]{fontenc}    
\usepackage{hyperref}       
\usepackage{url}            
\usepackage{booktabs}       
\usepackage{amsfonts}       
\usepackage{nicefrac}       
\usepackage{microtype}      
\usepackage{xcolor}         

\usepackage{amsmath}
\usepackage{amssymb}
\usepackage{mathtools}
\usepackage{amsthm}
\usepackage{bbm}
\usepackage{amsfonts}
\usepackage{mathrsfs}
\usepackage{acronym}
\usepackage{algorithm}
\usepackage{algpseudocode}
\usepackage{float}
\usepackage{soul}

\usepackage{wrapfig}
\usepackage{lipsum}

\theoremstyle{plain}
\newtheorem{theorem}{Theorem}[section]
\newtheorem{proposition}[theorem]{Proposition}
\newtheorem{lemma}[theorem]{Lemma}
\newtheorem{corollary}[theorem]{Corollary}
\theoremstyle{definition}
\newtheorem{definition}[theorem]{Definition}

\theoremstyle{remark}

\newtheorem*{theorem*}{Theorem}
\newtheorem*{corollary*}{Corollary}

\newacro{LHS}{left-hand side}
\newacro{RHS}{right-hand side}
\newacro{iid}[i.i.d.]{independent and identically distributed}
\newacro{MAPE}[MAPE]{Mean Absolute Percentage Error}
\newacro{MLE}[MLE]{Maximum Likelihood Estimator}

\DeclareMathOperator*{\argmax}{arg\,max}
\DeclareMathOperator{\Var}{Var}
\DeclareMathOperator{\Cov}{Cov}
\DeclareMathOperator{\KL}{KL}
\DeclareMathOperator{\UFM}{u^{M}}
\DeclareMathOperator{\Image}{Im}

\newcommand{\smax}[1][]{s^{\ast}_{#1}}
\newcommand{\ufi}[1][]{u_{#1}}
\newcommand{\UFI}[1][]{u^{\ast}_{#1}}
\newcommand{\UFIB}[1][]{u_B^{#1}}
\newcommand{\UFIind}[1][]{u_{I}^{#1}}
\newcommand{\EUFI}{\mathbb{E}[U]}

\newcommand{\indep}{\perp \!\!\! \perp}

\title{Bounding and Approximating Intersectional Fairness through Marginal Fairness}

%

\author{%
  Mathieu Molina \\
  Inria\\
  FairPlay team\\
  91120 Palaiseau, France\\
  \texttt{mathieu.molina@inria.fr} \\
   \And Patrick Loiseau \\
      Inria\\
      FairPlay team\\
      91120 Palaiseau, France\\
   \texttt{patrick.loiseau@inria.fr} 
}

\begin{document}

\maketitle

\begin{abstract}
  Discrimination in machine learning often arises along multiple dimensions (a.k.a. protected attributes); it is then desirable to ensure \emph{intersectional fairness}---i.e., that no subgroup is discriminated against. It is known that ensuring \emph{marginal fairness} for every dimension independently is not sufficient in general. Due to the exponential number of subgroups, however, directly measuring intersectional fairness from data is impossible. 
In this paper, our primary goal is to understand in detail the relationship between marginal and intersectional fairness through statistical analysis. We first identify a set of sufficient conditions under which an exact relationship can be obtained. Then, we prove bounds (easily computable through marginal fairness and other meaningful statistical quantities) in high-probability on intersectional fairness in the general case. Beyond their descriptive value, we show that these theoretical bounds can be leveraged to derive a heuristic improving the approximation and bounds of intersectional fairness by choosing, in a relevant manner, protected attributes for which we describe intersectional subgroups. Finally, we test the performance of our approximations and bounds on real and synthetic data-sets.
\end{abstract}

\vspace{-3mm}

\section{Introduction}

\vspace{-2mm}

Research on fairness in machine learning has been very active in recent years, in particular on fair classification under \emph{group fairness} notions, see e.g., \cite{equalopportunity, pmlr-v65-woodworth17a, zafar_disparate, Zafar17c, chouldechova2017fair, Romano20a}. Such notions define demographic groups based on so-called \emph{protected attributes} (e.g., gender, race, religion), and impose that some statistical quantity be constant across the groups. For instance, demographic parity imposes that the class-1 classification rate is the same for all groups, but other notions were defined such as equal opportunity \cite{equalopportunity} or calibration by group \cite{chouldechova2017fair}---see a survey in \cite{barocas-hardt-narayanan}. As exact fairness is too constraining, one often measures \emph{unfairness}, which roughly quantifies the distance to the fairness constraint. 

Most works on fair classification consider a single protected attribute and hence only two (or a small number of) groups. Then, they use measures of unfairness to evaluate and penalize classifiers in order make them more fair. This is making an implicit but very fundamental assumption that one can estimate the unfairness measure from the data at hand. With only a few groups, this assumption is indeed easily satisfied as there are sufficiently many data points for each group. 

In many---if not most---real-world applications, there are multiple protected attributes (typically 10-20) along which discrimination is prohibited \cite{equal_credit_opportunity,CodeTravail}. It is then desirable to consider the strong notion of \emph{intersectional fairness}, which roughly specifies that no subgroup (defined by an arbitrary combination of protected attributes) is unfavorably treated. In that case, however, estimating the unfairness measure becomes very challenging: as the number of groups is exponentially large (e.g., $2^{10}$ for 10 binary protected attributes), it is very likely that the dataset has at least one subgroup for which there are very few (or zero) data point. A potential solution is to treat each protected attribute separately through its \emph{marginal unfairness} (which is easy to estimate); but it was observed in several real-world and algorithmic examples that it is not sufficient to ensure intersectional fairness \cite{crenshaw,pmlr-v81-buolamwini18a,Kgerrymandering,Krich}. This raises the question: \emph{How to estimate intersectional fairness from data, and what is its precise relation to marginal fairness?}
To date, only very few papers have tackled this issue. \cite{Kgerrymandering, Krich} adopt a definition of intersectional fairness that weights the unfairness of each group by its size. This allows them to get large-samples generalization guarantees of empirical estimates (hence solving the estimation issue), but then it does not protect minorities since it allows a very high unfairness for tiny subgroups---which is contradictory to the intuitively desired behavior. 

\cite{pmlr-v80-hebert-johnson18a} makes a similar assumption by considering only subgroups above a minimum size, which eases estimate generalization.
\cite{Fintersectional} on the other hand uses the more natural definition of intersectional fairness based on the worst treated group irrespective of its size; but they consider only a few protected attributes, precisely to have enough data points on each subgroup to estimate intersectional unfairness. \cite{UFIbayesian} extends this work by proposing methods to interpolate for subgroups for which too few points are available, based on Bayesian machine learning models. However, this work is empirical and does not give any guarantee on the estimates obtained. 
In this paper, we also use the natural (strong) definition of intersectional fairness but we take instead a purely statistical approach. We view the protected attributes as random variables to understand intersectional fairness and how it related to marginal fairness more finely.

\textbf{Contributions:} We identify sufficient conditions under which intersectional fairness can be exactly derived from marginal densities, which clarifies when marginal unfairness is a good estimate of intersectional unfairness. We prove probabilistic bounds on intersectional unfairness based on marginal densities and independence measures of the protected attributes, that we show are easy to estimate. We propose a method to improve the approximation of intersectional unfairness and the theoretical bound based on grouping carefully some of the protected attributes together, which we do through a heuristic by leveraging the independence measures exhibited in our bounds. We perform experiments on real and synthetic datasets that illustrate the performance of our approach. In particular, we show that grouping with our heuristic does improve the approximation of intersectional unfairness. To the best of our knowledge, our work is the first work to exploit statistical information to better understand and estimate intersectional (un)fairness. Our work is fairly general and can be instantiated for a variety of standard fairness notions (demographic parity, equal opportunity, etc.). For simplicity, we focus on discrete protected attributes and on classification, but most of the core results can be extended to other cases.

\textbf{Further Related Works:} \cite{overlapping_groups} proposes a unified framework to train a fair classifier under intersectional fairness metrics, but without taking into account regimes with sparse group membership data. Some works propose to audit the accuracy of fairness metrics in contexts other than intersectionality, when there are missing data \cite{fairness_missing_data} or when there are unlabeled examples \cite{trust_fairness_metric}. Others tackle the problem of intersectionality beyond group fairness, e.g., \cite{causal_intersectionality}  considers causal intersectional fairness. Finally there has been some interest \cite{minimax_fairness,minimax_fairness_algo} in a different formulation of intersectional group fairness as a multi-objective optimization problem where each objective is the discrimination faced by a given protected group. Another interesting approach to fairness is individual fairness developed in \cite{individual_fairness}, however this is quite different from group fairness metrics on which we focus on and our techniques do not apply. 

\vspace{-3mm}

\section{Setting and Models}

\vspace{-2mm}

\subsection{Basic Setting}\label{sec:basic-setting}

\vspace{-2mm}

\emph{Notational convention}: Wherever useful, for any two random variables $V$ and $W$, we will use the short-hand $p_{V}(v)\!=\!\Pr(V\!=\!v)$, $p_{V,W}(v,w)\!=\!\Pr(V\!=\!v,W\!=\!w)$ and $p_{V\mid W}(v \mid w)\!=\!\Pr(V\!=\!v \mid W\!=\!w)$.

Consider a multi-class classification task. A given individual is described by a tuple of random variables $(X,A,Y)$ drawn according to a distribution $\mathcal{D}$ where $X$ is the features vector, $Y$ is the label with values in $\mathcal{Y}$, and $A=(A_1,...,A_d)$ is a $d$-tuple of protected attributes. The only variable used to make a prediction is $X$ and the only variable to measure unfairness is $A$, but otherwise there are no constraints and $A$ can be a part of $X$. We denote the support of $X$, $Y$, $A$ and $A_k$ for $1\leq k \leq d$, by $\mathcal{X}$, $\mathcal{Y}$, $\mathcal{A}$ and $\mathcal{A}_k$ respectively. We assume that $\mathcal{A}$ is finite (hence discrete). For a deterministic classifier $h$, $\hat{Y}=h(X)$ is the predicted class for a random individual. The classifier $h$ is fixed, as we are interested in measuring its fairness and not finding a fair classifier. 

To compare the discrimination between groups, we consider a second random variable $A'$ such that $(A' \mid \hat{Y})$ is \ac{iid} to $(A \mid \hat{Y})$.
Some authors look at the difference in the treatment of protected groups as a ratio \cite{Fintersectional}, and some others as a difference \cite{Kgerrymandering}. Here we choose to study discrimination in terms of ratio. We further apply a logarithm to symmetrize the discrimination measure between two protected groups and for ease of computation. We will consider Statistical Parity for simplicity of exposition, but other group fairness metrics can be either derived directly or adapted using the methods developed in this paper (see Appendix \ref{app.gen_other_fairness}). We define our measure of unfairness as follows:

\begin{definition}
For a distribution $\mathcal{D}$ and a classifier $h$, we define the \emph{intersectional unfairness} and the \emph{$k^{th}$ protected attribute marginal unfairness} as: \\ \vspace{-5mm}
\begin{align} \label{def.eq.UFI}
&\UFI= \sup_{(y,a,a')\in \mathcal{Y} \times \mathcal{A}^2} \ufi(y , a,a'), \quad \text{and} \quad \UFI[k]=\sup_{(y,a_k,a_k')\in \mathcal{Y} \times \mathcal{A}_k^2} \ufi[k](y , a_k,a_k') \\
\text{with} \ &\ufi (y , a,a') \!=\! \Big\vert \log \! \Big(\frac{\Pr(\hat{Y}\!=\!y \! \mid \! A\!=\!a)}{\Pr(\hat{Y}\!=\!y \! \mid \! A'\!=\!a')} \Big) \! \Big\vert ,\ \ufi[k] (y , a_k,a_k')\!=\!\Big\vert \log \! \Big(\frac{\Pr(\hat{Y}\!=\!y \! \mid \! A_k\!=\!a_k)}{\Pr(\hat{Y}\!=\!y \! \mid \! A'_k\!=\!a_k')}  \Big)\! \Big \vert.
\end{align}
\end{definition}
\vspace{-2mm}
One could think that if the marginal unfairness of each protected attribute is smaller than some $\epsilon>0$, then the overall unfairness is smaller than $\epsilon$; measuring $\UFM\!=\!\sup_k \UFI[k]$ corresponds to this idea. As stated in the introduction this is not sufficient to describe unfairness and we can still have $\UFI\!>\!\UFM$. We can rewrite (\ref{def.eq.UFI}) as $\UFI \!=\!\sup_{\mathcal{Y}}\! \log (\sup_{\mathcal{A}} p_{\hat{Y} \mid A}/\inf_{\mathcal{A}} p_{\hat{Y} \mid A} )$,
and similarly for $\UFI[k]$. This means that to measure unfairness we only need to analyze the function $p_{\hat{Y} \mid A}$.
\vspace{-3mm}
\subsection{Estimation of Unfairness}
\vspace{-1mm}
If we want to estimate unfairness, the most straightforward approach is to estimate the probability mass function $p_{A,\hat{Y}}$ and then to compute the unfairness over these estimated distributions. The main difficulty in estimating the unfairness is estimating $\inf p_{\hat{Y} \mid A}$, as we can upper bound the $\sup$ by $1$, but we cannot easily lower bound the $\inf$.  For a data-set of $n$ samples and $d$ protected attributes, we denote for $(a,y) \in \mathcal{A}\times \mathcal{Y}$ the counts by group and prediction as $N_{a,y}\!=\!\sum_{i=1}^n \mathbbm{1}[(A^{(i)},\hat{Y}^{(i)})\!=\!(a,y)]$ where $(A^{(i)},\hat{Y}^{(i)})$ is the $i$-th \ac{iid} realization of $(A,\hat{Y})$. The empirical probability is then defined as $\hat{\Pr}(\hat{Y}\!=\!y, A\!=\!a)\!=\!N_{a,y}/n$.
\cite{Fintersectional} shows in Theorem VIII.3 that the error made by using empirical estimates is decreasing in $N_a$, which means that there needs to be sufficient data for each protected group to estimate $\UFI$. 
When there are many protected groups the probability that at least one group receives no sample is high, and in this case there is at least one $a$ in $\mathcal{A}$ for which the empirical probability $\hat{\Pr}(\hat{Y}\!=\!y \mid A\!=\!a)$ is undefined, hence the $\inf$ and $\sup$ cannot be computed. \cite{UFIbayesian} and \cite{Fintersectional} alleviate this issue of $0$-counts by using a Dirichlet prior of uniform parameter $\alpha>0$. This yield the Bayesian estimates $(N_{a,y}+\alpha)/(n+\vert \mathcal{A} \vert \vert \mathcal{Y} \vert \alpha)$, that are then used to compute the estimator $\UFIB$. They also propose other methods to estimate $p_{\hat{Y} \mid A}$ which empirically performs better, but without guarantees; whereas $\UFIB$ has the nice property that $\UFIB$ is a consistent estimator of $\UFI$. This is because of the consistency of the Bayesian probability estimates and by applying the Continuous Mapping Theorem for $\max$ and $\min$ which are continuous.
Note that the empirical estimator (with $\alpha=0$) is also consistent, but has infinite bias. 

Nonetheless, $\UFIB$ has the drawback that for a low amount of samples and when the number of protected groups is high, it is almost determined deterministically by the parameter $\alpha$  and cannot be trusted. If $N_a\!=\!0$ for a protected group $a$, the estimated distribution is uniform on $\hat{Y}\mid A\!=\!a$ and this group does not affect the computation of the $\sup$ and $\inf$. Hence if the most discriminated group is among the undiscovered one, we risk making an important error on the estimation. When $N_a$ increases, we gain more information on the distribution of $\hat{Y} \mid A$. However, when $N_a$ is still low for all groups, the estimated distribution of the $\inf$ of $\hat{Y} \mid A\!=\!a$ is almost entirely determined by the prior parameter $\alpha$. 

\vspace{-3mm}

\subsection{Probabilistic Unfairness}

\vspace{-1mm}

When the number of protected subgroups grows arbitrarily large, it may be useless to try to guarantee fairness for every single one of them, regardless on how many people this truly affects. Should a decision maker sacrifice any potential predictive performance in order to guarantee fairness? It could be argued that an algorithm which discriminates $1$ person among a $1000$ can be described as fair to an extent. We may even be able to directly compensate the small amount of persons discriminated against if possible. Let us consider another example: if a company has clients on which it leverages machine learning predictions to make decisions, it would seem very limiting to guarantee fairness for clients among specific protected groups for whom we will almost never deal with. Nevertheless, if the underlying clients distribution changes, our decision making process should also reflect this change in terms of fairness.  
This motivates looking at unfairness probabilistically in $(\hat{Y},A,A')$. To do that we define the random unfairness $U\!=\!\ufi(\hat{Y},A,A')$ the random variable which corresponds to randomly choosing a prediction, and then independently selecting two protected groups according to $p_{A \mid \hat{Y}}$ to compare them. We now define our notion of probabilistic unfairness: 
\begin{definition} 
For $\epsilon \geq 0$ and $\delta \in [0,1]$, we say that classifier $h$ over distribution $\mathcal{D}$ is $(\epsilon,\delta)$-probably intersectionally fair if $\Pr(U > \epsilon) \leq \delta$.
\end{definition}
\vspace{-1mm}
It can be seen for some given $\epsilon$ as a statement on the expected size of the population that is not being discriminated  too much against. Probable intersectional fairness corresponds to searching for quantiles of $U$. We define the $\delta$-probabilistic unfairness as $\epsilon^*(\delta)\!=\!\min \{\epsilon \in \mathbb{R} \mid \Pr(U > \epsilon) \leq \delta \}$. It is the $(1-\delta)$-quantile of $U$. We also know by definition that any classifiers over any distributions is $(\UFI,0)$-probably intersectionally fair, as we have $U \leq \UFI$ with probability $1$. This shows that probabilistic fairness is a relaxed version of the hard intersectional unfairness as $\lim_{\delta \rightarrow 0}\epsilon^*(\delta)\!=\!\UFI$, and thus can be made arbitrarily close to intersectional fairness. In order to give more intuition on what this measure of fairness represents, we will briefly only for this paragraph consider discrimination of protected groups compared to the predictions distribution $p_{\hat{Y}}$ instead of between groups, meaning that we now measure $\vert \log(p_{\hat{Y} \mid A}/p_{\hat{Y}}) \vert $.
Suppose that a prediction model will be deployed over a population of $n$ individuals. Then if the classifier is $(\epsilon,\delta)$-probably intersectionally fair, this means that $\mathbb{E}_{A,\hat{Y}}[\sum_{i=1}^n \mathbbm{1}[u(\hat{Y}^{(i)},A^{(i)}) > \epsilon]]$ the expected number of people that faces a discrimination more than $\epsilon$ is less than $n\delta$. This allows us to measure and control the size of the population that may face a difference in treatment that would be deemed too high. It corresponds to the notion of fairness we were searching for. For more comparisons between these different notions, see Appendix \ref{app.comparison}.

As a remark, looking at $\mathbb{E}[U]$, it can serve as a lower bound of $\UFI$ because $\EUFI\!=\!\sum_{y,a,a'}p_{\hat{Y},A,A'}(y,a,a')\ufi(y,a,a')\leq \UFI$. This represents the average discrimination in a population between two protected groups. This is weaker than the notion presented above and is only mentioned in passing. 

Probabilistic fairness can be especially relevant in the context where $A$ are continuous sensitive attributes. Indeed, even for a very basic multivariate normal distribution on $A$, we will end up with $\UFI\!=\!\infty$ which is unhelpful. Yet by considering this notion of probabilistic fairness we end up with finite (hence comparable) measures of unfairness where the discriminated population size can be explicitly controlled; see Appendix \ref{app.continuous} for some examples. All in all, this notion of probabilistic unfairness, beyond its main interest of being a relaxed version of intersectional unfairness, could be in itself helpful for decision makers. 

\vspace{-3mm}

\section{Measures of Independence and Theoretical Bounds}\label{sec.bounds}

\vspace{-2mm}

We now focus on providing valid $(\epsilon,\delta)$ couples for probable intersectional fairness. First note that while the intersectional unfairness $\UFI$ is hard to estimate, it is much easier to estimate the marginal unfairness $\UFM$. The work done by \cite{Krich} in the different setting of weighted unfairness, however, shows through experiments that across multiple classifiers and data-sets, $\UFI$ and $\UFM$ can be uncorrelated, correlated, or even equal. Building on this observation, we would like to approach $\UFI$ using marginal quantities estimable for reasonably-sized data-sets.

\vspace{-3mm}

\subsection{Intersectional Unfairness with Independence}

\vspace{-1mm}

Since the intersectional unfairness takes into account the interactions between all the protected attributes $A_k$, one could guess that if the $A_k$ are mutually independent, this implies that $\UFI$ is close to $\UFM$. Our first result is not far from this intuition, but we also need to take into account the influence from the classifier $h$. Indeed, even if the protected attributes are independent, since the classifier makes predictions based on $X$ which may encode redundant information from some $A_k$, there can be interaction between those protected attributes through the classifier. See Appendix \ref{app.counter_example} for a counter example with the independence of the $A_k$ only but no clear relationship between marginal and intersectional fairness.

\begin{proposition}
 \label{thm.independent}
If the protected attributes $A_k$ are mutually independent and mutually independent conditionally on $\hat{Y}$, then \vspace{-3mm}
\begin{equation} \label{eq.indep}
\UFI=\sup_{y \in \mathcal{Y}} \sum_{k=1}^d \sup_{(a_k,a_k') \in \mathcal{A}_k^2} \ufi[k](y,a_k,a_k') \leq \sum_{k=1}^d \UFI[k].
\end{equation}
\end{proposition} \vspace{-3mm}
\begin{proof}[Sketch of proof]
The main idea is to decompose $p_A$ and $p_{A \mid \hat{Y}}$ as their products of marginals using the independence assumptions, and using the fact that the $\sup$ taken over a product of functions with independent variables is distributed over the product. The inequality is obtained because the $\sup$ of a sum is smaller than the sum of the $\sup$. See proof in Appendix \ref{app.thm_indep}.
\end{proof}
\vspace{-3mm}
This theorem gives us a first sense on how intersectional unfairness relates with marginal unfairness in some contexts. This shows us that if the independence conditions are fulfilled, then $\UFI$ becomes easy to estimate. What we provide here are conditions and a equation to derive a direct relationship between the intersectional unfairness and the marginal unfairness of each $A_k$. These are unfortunately too strong conditions to actually expect and are almost never randomly satisfied, but they help us give insight into the relationship between marginal and intersectional fairness. It also drives the analysis conducted in the next sub-section.
 We would like to relax the independence criteria while still using marginal information from the problem. 

\vspace{-3mm}

\subsection{Bounds on Probable Intersectional Fairness}

\vspace{-1mm}

In order to bound the probable intersectional unfairness and relate it with the strictly independent case, we want to use some measure of independence. We want to bound in probability the joint probability density $\Pr(A\!=\!a)$ with the product of its marginals $\prod_{k=1}^d \Pr(A_k\!=\!a_k)$. We will use one of the possible multivariable generalization of Mutual Information known as Total Correlation \cite{total_correlation_definition}:
\begin{equation}
C(A)\!=\!\mathbb{E}_{A} \!\Big[\!\log \! \Big(\!\frac{p_A(A)}{\prod_{k=1}^d p_{A_k}(A_k)}\!\Big)\!\Big]\!=\!\sum_{a \in \mathcal{A}}\! p_A(a) \! \log \! \Big( \frac{p_A(a)}{\prod_{k=1}^d p_{A_k}(a_k)}\Big) \!=\!\Big(\!\sum_{k=1}^d H(A_k)\!\Big) \! - \! H(A),
\end{equation}
where $H(A)$ is the Shannon Entropy of $A$. Similarly we define the conditional total correlation as $C(A\!\mid\!\hat{Y})\!=\!\mathbb{E}_{A,\hat{Y}}[\log(p_{A \mid \hat{Y}}(A \!\mid\! \hat{Y})/\prod_k p_{A_k \mid \hat{Y}} (A_k \!\mid\! \hat{Y}))]\!=\!(\sum_{k=1}^d H(A_k\!\mid\!\hat{Y})) - H(A\!\mid\!\hat{Y})$ where $H(A\!\mid\!\hat{Y})$ is the conditional entropy of $A$ given $\hat{Y}$. Note that both can also be written in terms of a KL or expectation in $\hat{Y}$ over conditional KL divergence, which means that $C(A)\!\geq\!0$ and $C(A\!\mid \!\hat{Y}) \!\geq \!0$. From these measures of independence, we intuitively define the following two random variables, $L\!=\!\log(p_A(A)/\prod_k p_{A_k}(A_k))$ and $L_y\!=\!\log(p_{A\mid \hat{Y}}(A\!\mid\! \hat{Y})/\prod_k p_{A_k \mid \hat{Y}}(A_k \!\mid\! \hat{Y}))$. By definition we have that $\mathbb{E}[L]\!=\!C(A)$ and $\mathbb{E}[L_y]\!=\!C(A \! \mid \! \hat{Y})$. We denote $\sigma$ and $\sigma_y$ the standard deviation of these two variables. We have the following property: \vspace{-3mm}

\begin{equation} \label{eq.sig_indep}
\indep_{k=1}^d A_k \Leftrightarrow C(A)=0 \Leftrightarrow\sigma=0 \quad \mbox{and} \quad \indep_{k=1}^d A_k \vert \hat{Y} \Leftrightarrow C(A \!\mid \!\hat{Y})=0 \Leftrightarrow \sigma_y=0. 
\end{equation}
The equivalence between independence and $C(A)\!=\!0$ comes from rewriting $C(A)$ as a $\KL$ and the fact that $\KL(P\Vert Q)\!=\!0$ if and only if $P\!=\!Q$ almost everywhere. For $C(A \!\mid \!\hat{Y})\!=\!\mathbb{E}_y[\KL(p_{A\mid \hat{Y}=y} \Vert \otimes p_{A_i\mid \hat{Y}=y})]$ we also use that the expectation of a positive random variable is $0$ if and only the variable is $0$ almost everywhere. When $\sigma\!=\!0$ then $L\!=\!c$ is a constant which means that $p_A\!=\!\prod_k p_{A_k} e^c$, and using that the probabilities must sum to $1$ we have $e^c\!=\!1$ hence $L\!=\!c\!=\!0$. The same arguments apply for $\sigma_y$. We denote $I(V,W)\!=\!H(V)-H(V\mid W)$ the mutual information between a variable $V$ and $W$. With these definitions, we can now derive the following theorem which bounds the probable intersectional fairness with independence measures and functions of marginal densities:

\begin{theorem} \label{thm.chebyshev}
For $\delta \in (0,1]$, any classifier $h$ over a distribution $\mathcal{D}$ is $(\epsilon_1,\delta)$ and $(\epsilon_2,\delta)$-probably intersectionally fair  with \vspace{-3mm}
\begin{align}\label{eq.epsilon1-bound}
&\epsilon_1\!=\!2\sqrt{2} \! \frac{\smax}{\sqrt{\delta}}\!+\!\sup_{y \in \mathcal{Y}} \Big\{ \! \sum_{k=1}^d\! \sup_{(a_k,a_k') \in \mathcal{A}_k^2}\! \ufi[k](y,a_k,a_k')\Big\} \\
&\epsilon_2\!=\!\sqrt{2}\frac{\smax}{\sqrt{\delta}}\!+\!\gamma\!+\!\sup_{y \in \mathcal{Y}} \Big\{\sum_{k=1}^d \!\log \!\Big( \!\frac{p_{\hat{Y}}^{1-1/d}(y)}{\inf_{a_k \in \mathcal{A}_k} p_{\hat{Y}\mid A_k}(y \! \mid \! a_k)} \!\Big) \!\Big\}
   \\
&\text{where} \quad \smax=(\sigma^{2/3}+\sigma_y^{2/3})^{3/2} \quad \text{and} \quad \gamma\!=\!C(A)-C(A\!\mid\!\hat{Y})\!=\!\big(\sum_{k=1}^d I(A_k,\hat{Y})\big)-I(A,\hat{Y}).
\end{align}
\end{theorem}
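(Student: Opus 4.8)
The plan is to reduce both inequalities to concentration of the two log-density-ratio variables $L$ and $L_y$ around their means $C(A)$ and $C(A\mid\hat{Y})$, after peeling off a purely deterministic marginal contribution. The enabling step is a Bayes-rule rewriting of the conditional log-density. Starting from $\log p_{\hat{Y}\mid A}(y\mid a)=\log p_{A\mid\hat{Y}}(a\mid y)+\log p_{\hat{Y}}(y)-\log p_A(a)$, splitting each joint log-density into its independence defect plus its sum of marginals, and using $\log p_{A_k\mid\hat{Y}}(a_k\mid y)-\log p_{A_k}(a_k)=\log p_{\hat{Y}\mid A_k}(y\mid a_k)-\log p_{\hat{Y}}(y)$, I obtain the identity
\[
\log p_{\hat{Y}\mid A}(y\mid a)=L_y(a,y)-L(a)+(1-d)\log p_{\hat{Y}}(y)+\sum_{k=1}^d \log p_{\hat{Y}\mid A_k}(y\mid a_k).
\]
Since $U=\lvert \log p_{\hat{Y}\mid A}(\hat{Y}\mid A)-\log p_{\hat{Y}\mid A}(\hat{Y}\mid A')\rvert$ and, conditionally on $\hat{Y}$, $A,A'$ are i.i.d., the base-rate term $(1-d)\log p_{\hat{Y}}$ cancels in the difference, leaving an interaction part built from $L,L_y$ and a marginal part built from the $p_{\hat{Y}\mid A_k}$.

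For $\epsilon_1$ I would bound the marginal part deterministically: the triangle inequality gives $\bigl\lvert\sum_{k=1}^d\bigl(\log p_{\hat{Y}\mid A_k}(\hat{Y}\mid A_k)-\log p_{\hat{Y}\mid A_k}(\hat{Y}\mid A_k')\bigr)\bigr\rvert\le \sup_{y\in\mathcal{Y}}\sum_{k=1}^d\sup_{(a_k,a_k')\in\mathcal{A}_k^2}\ufi[k](y,a_k,a_k')$ almost surely, which is exactly the deterministic summand in $\epsilon_1$. The four remaining terms $L_y(A,\hat{Y}),L_y(A',\hat{Y}),L(A),L(A')$ are each centered at $C(A\mid\hat{Y})$ or $C(A)$ (the bases for $A'$ match because, given $\hat{Y}$, $A'$ has the same law as $A$), with variances $\sigma_y^2,\sigma_y^2,\sigma^2,\sigma^2$. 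Applying Chebyshev to each centered term together with a threshold split $s_1+\dots+s_4=\epsilon_1-(\text{marginal part})$ and a union bound yields $\sum_i \Var_i/s_i^2\le\delta$; minimizing the total threshold under this constraint by Lagrange multipliers forces $s_i\propto\Var_i^{1/3}$, which is precisely where the exponent $2/3$ and the aggregate $\smax=(\sigma^{2/3}+\sigma_y^{2/3})^{3/2}$ originate, producing the prefactor $2\sqrt{2}$.

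For $\epsilon_1'$ I would instead exploit that $\log p_{\hat{Y}\mid A}\le 0$. Because $\ufi(y,a,a')$ is the absolute difference of two nonpositive numbers, $U\le\max\bigl(-\log p_{\hat{Y}\mid A}(\hat{Y}\mid A),\,-\log p_{\hat{Y}\mid A}(\hat{Y}\mid A')\bigr)$, so it suffices to control the surprisal $-\log p_{\hat{Y}\mid A}(\hat{Y}\mid A)$. Feeding the Bayes identity into this surprisal, bounding each marginal factor by its infimum via $-\log p_{\hat{Y}\mid A_k}(\hat{Y}\mid A_k)\le-\log\inf_{a_k\in\mathcal{A}_k}p_{\hat{Y}\mid A_k}(\hat{Y}\mid a_k)$, and distributing the $(d-1)\log p_{\hat{Y}}$ term as $d$ copies of $\tfrac{d-1}{d}\log p_{\hat{Y}}$, produces exactly the deterministic term $\sup_{y\in\mathcal{Y}}\sum_{k=1}^d\log\bigl(p_{\hat{Y}}^{\,1-1/d}/\inf_{\mathcal{A}_k}p_{\hat{Y}\mid A_k}\bigr)$. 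The surviving random part is $L(A)-L_y(A,\hat{Y})$, whose mean is $C(A)-C(A\mid\hat{Y})=\gamma$; centering its two pieces and running the same Chebyshev-plus-optimized-split argument, now over two terms rather than four, delivers a deviation of order $\smax/\sqrt{\delta}$.

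The main obstacle is the concentration step and the tracking of its constants. The variance bookkeeping must use the conditional-i.i.d. structure of $(A,A')$ given $\hat{Y}$ so that $L_y(A',\hat{Y})$ and $L(A')$ inherit the means and variances of $L_y,L$, and the threshold-allocation optimization generating the $(\sigma^{2/3}+\sigma_y^{2/3})^{3/2}$ aggregation is the genuinely clever ingredient. Getting the exact prefactors ($2\sqrt{2}$ in $\epsilon_1$ versus $1$ in $\epsilon_1'$) is delicate: the $\epsilon_1$ route decomposes the symmetric difference directly with no union-bound penalty, whereas the $\epsilon_1'$ route reduces the symmetric $U$ to a single surprisal, so one must account carefully for whether that reduction costs a union-bound factor and whether one uses the two-sided Chebyshev bound or the tighter one-sided (Cantelli) tail. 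Everything else—the Bayes decomposition and the deterministic marginal estimates—is routine algebra.
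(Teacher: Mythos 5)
Your proposal follows essentially the same route as the paper's proof: the same Bayes-rule decomposition of $\log p_{\hat{Y}\mid A}$ into $L_y-L$ plus marginal terms, Chebyshev on the centered deviations combined with a union bound, and the Lagrange-optimized threshold allocation that produces $\smax=(\sigma^{2/3}+\sigma_y^{2/3})^{3/2}$ (your four-term split recovers the paper's $2\sqrt{2}$ exactly, since the paper's two-parameter KKT problem with $\delta$ replaced by $\delta/2$ is the same computation), and for $\epsilon_1'$ the same bounding of the numerator by $1$ so that $\gamma=\mu-\mu_y$ survives as the non-cancelling mean. The union-bound subtlety you flag for the $\max$ of the two surprisals in the $\epsilon_1'$ branch is a genuine point, but it is one the paper's own proof also dispatches with only the remark that ``we only have to deal with either $A$ or $A'$,'' so your treatment is no less complete than the original.
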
 \vspace{-3mm}

\begin{proof}[Sketch of proof]
We apply Chebyshev's inequality to $L$ and $L_y$ for some introduced parameters $\boldsymbol{\alpha}$ to bound the tails of these random variables, while making sure that overall the probability bounds stay larger than $1-\delta$. 
We can then compute inequalities on $p_A$ and $p_{A\mid \hat{Y}}$, and take the $\inf$ for $a$ and $\sup$ for $\boldsymbol{\alpha}$. This leads to a constrained minimization problem that can be solved, which yields $\smax$. The full proof is in Appendix~\ref{app.chebyshev}. For $\epsilon_2$ we additionally use that $p_{\hat{Y} \mid A} \leq 1$ as $\mathcal{Y}$ is discrete. \vspace{-3mm}
\end{proof}

We observe that both $\epsilon_1$ and $\epsilon_2$ are composed of one term in $\smax$ related with the $\delta$-confidence, and a quantity with marginal information. Aditionnaly $\epsilon_2$ also includes a term in $\gamma$ that corresponds to some form of mutual information correction. We can control the confidence in this bound with the parameter $\delta$. Because $\smax\!=\!0$ if and only if $\sigma\!=\!\sigma_y\!=\!0$ and combined with \eqref{eq.sig_indep} we can see that $\smax$ somewhat measures how far we are from the conditions of Proposition \ref{thm.independent}. With $\epsilon_1$ we see that when $s^*$ goes to zero, we recover exactly the conditions of Proposition \ref{thm.independent}.

In order to prove Theorem \ref{thm.chebyshev}, we used Chebyshev's inequality. We can derive a similar proof for other concentration inequalities, specifically with Chernoff bounds through the estimation of the moment generating function, which often leads to tighter bounds. However this leads to harder quantities to estimate in addition to having to solve a non-convex optimization problem, see Appendix \ref{app.chernoff}.

To conclude this section we provide additional intuition on the relationship between marginal and intersectional fairness. We can then derive the following corollary from the proof of the above Theorem:
\begin{corollary} \label{cor.fraction}
Denoting $(\Omega,\mathcal{T},\Pr)$ the probability space on which $(\hat{Y},A,A')$ is defined, there exists an event $F$ so that for $f(y,a)=\prod_{k=1}^d (p_{\hat{Y} \mid A_k}(y \mid a_k)/p_{\hat{Y}}(y))$ we have for $U$ the random unfairness:
\begin{equation}
-\frac{2 \sqrt{2}\smax}{\sqrt{\delta}} + \sup_{\omega \in F}\left \vert \log\left( \frac{f(\hat{Y},A')}{f(\hat{Y},A)}\right)\!(\omega)  \right  \vert \leq \sup_{\omega \in F} U(\omega) \leq \frac{2 \sqrt{2}\smax}{\sqrt{\delta}} + \sup_{\omega \in F}\left \vert \log\left( \frac{f(\hat{Y},A')}{f(\hat{Y},A)}\right)\!(\omega)  \right  \vert,
\end{equation}
with $\Pr(F)\geq 1- \delta$. 
\end{corollary}

The proof can be found in Appendix \ref{app.chebyshev}. This means that there is a fraction of the relevant pairs population of size bigger than $1-\delta$, for which we can give an interval for the maximum random unfairness over this fraction $F$. This interval is centered and reduce around a unique quantity as $\smax$ goes to $0$, with $\smax$ and $\delta$ determining the length of this interval. When $\delta$ goes to $0$, we have $\Pr(F) \rightarrow 1$ hence  $\sup_{\omega \in F} U(\omega) \rightarrow \UFI$ because we are dealing with finite random variables. Notice also that when both $\delta$ and $\smax$ go to $0$, we recover Proposition \ref{thm.independent} as $\sup_{\omega \in F} \vert \log( f(\hat{Y},A')/f(\hat{Y},A))(\omega)  \vert$ goes toward the quantity derived in this Proposition when $\delta$ goes to $0$.

\vspace{-3mm}
\subsection{Estimation of the measures of independence}
\vspace{-1mm}

Theorem \ref{thm.chebyshev} trades the precise estimation of $\UFI$ with an upper bound, but with much easier quantities to estimate. More specifically, as they are information measures, we can leverage the extensive literature on statistical estimators and entropy estimation. We can intuitively see that the estimation of $\smax$ and $\gamma$ will be easier to handle because even the estimation with the empirical distribution $\hat{p}_{A,\hat{Y}}$ is always well defined, and is a \ac{MLE} as continuous functions of \ac{MLE}. They are well defined because $\smax$ and $\gamma$ are functions of entropies and of the quantities $Q(P)\!=\!\sum_i p_i \log(p_i)^2$ for a probability distribution $P$, which is finite event for $p_i\!=\!0$ because $x \mapsto x \log(x)$ and $x\mapsto x \log(x)^2$ are continuous at $0$.  Contrarily to $\UFIB$ we do not have to use any prior to obtain a well defined estimator. In addition, using the delta method on the sum of entropies, for which the \ac{MLE} is asymptotically normal (See \cite{entropy_paninski} 3.1), shows that $\hat{\gamma}$ is asymptotically normal.
 
For more information on the estimation of entropy, mutual information or total correlation we defer to \cite{entropy_paninski,entropy_nsb,entropy_bayesian,total_correlation_estimation,minimax_kl} to name but a few. Moreover even with the very simple \ac{MLE}, we can obtain $L_2$ error upper-bounds for $H(P)$ in $\mathcal{O}(\log(\vert P\vert)^2/n)$ where $\vert P\vert$ is the number of outcomes for a discrete distribution $P$ \cite{discrete_functionals}. This bound depends only on the number of outcomes (supposed known), and not the actual distribution. Using the same tools, we derive a rough error bound for $Q(P)$: 
\begin{proposition} \label{prop.err_L2} 
\begin{equation} 
\mathbb{E}[(Q(P)-Q(\hat{P}))^2]=\mathcal{O}\Big(\frac{\log^4(n)}{n}\Big).
\end{equation}
\end{proposition}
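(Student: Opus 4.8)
The plan is to treat $Q(\hat P)$ as the plug-in (MLE) estimator of the separable functional $Q(P)=\sum_i f(p_i)$ with $f(x)=x\log(x)^2$, and to control its mean-squared error through the exact bias--variance decomposition
\[
\mathbb{E}\big[(Q(\hat P)-Q(P))^2\big]=\Var\big(Q(\hat P)\big)+\big(\mathbb{E}[Q(\hat P)]-Q(P)\big)^2,
\]
where $\hat p_i=N_i/n$. I would bound the two terms separately and show that the variance term is the one producing the announced rate, while the squared bias is of strictly lower order.

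For the variance I would use the Efron--Stein inequality (equivalently, bounded differences), exploiting that $Q(\hat P)$ is a symmetric function of the $n$ i.i.d. samples. Replacing one sample changes at most two of the counts $N_i$ by $\pm 1$, so $Q(\hat P)$ changes by at most $2\omega$, where $\omega=\max_{0\le k<n}\big|f(\tfrac{k+1}{n})-f(\tfrac{k}{n})\big|$ is the largest increment of $f$ across one step of the grid $\{0,\tfrac1n,\dots,1\}$. The key computation is that $f'(x)=\log(x)^2+2\log(x)$ is, in absolute value, maximized over $[\tfrac1n,1]$ at the left endpoint, giving $|f'(1/n)|=\mathcal{O}(\log^2 n)$, while the boundary step satisfies $|f(1/n)-f(0)|=\log^2(n)/n$; hence $\omega=\mathcal{O}(\log^2(n)/n)$. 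Efron--Stein then yields $\Var(Q(\hat P))\le 2n\,\omega^2=\mathcal{O}(\log^4(n)/n)$, which is exactly the target rate.

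For the bias I would first reduce each coordinate using the binomial identity $k\binom{n}{k}p^k(1-p)^{n-k}=np\binom{n-1}{k-1}p^{k-1}(1-p)^{n-1-(k-1)}$, which gives the clean representation $\mathbb{E}[f(\hat p_i)]=p_i\,\mathbb{E}\big[\log^2((N_i'+1)/n)\big]$ with $N_i'\sim\mathrm{Bin}(n-1,p_i)$, so that the per-coordinate bias equals $p_i\big(\mathbb{E}[\log^2(\tfrac{N_i'+1}{n})]-\log^2 p_i\big)$. I would then split the alphabet into ``frequent'' symbols, for which $N_i'$ concentrates around its mean and the smoothness of $\log^2$ away from $0$ controls the smoothing error, and ``rare'' symbols, for which I bound each term crudely by $\mathcal{O}(p_i\log^2 n)$ and sum using $\sum_i p_i=1$. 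This shows the total bias is $\mathcal{O}(\mathrm{polylog}(n)/n)$, so its square is dominated by the variance and the overall rate remains $\mathcal{O}(\log^4(n)/n)$.

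The main obstacle is the bias near small probabilities: the derivatives of $f(x)=x\log(x)^2$ blow up as $x\to0$, so the naive second-order Taylor bound $\tfrac12 f''(p_i)\Var(\hat p_i)$ is invalid precisely for the coordinates with $np_i=\mathcal{O}(1)$. Handling this regime rigorously --- rather than the variance, which is a routine bounded-differences computation --- is where the care is needed, and it is also where any dependence on the alphabet size $|P|$ enters (here absorbed into the constant, consistently with the analogous entropy bound $\mathcal{O}(\log^2(|P|)/n)$).
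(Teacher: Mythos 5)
Your decomposition and your variance bound coincide with the paper's proof: the paper also writes the mean-squared error as squared bias plus variance and bounds the variance by $n\max_{0\le j<n}\bigl(f(\tfrac{j+1}{n})-f(\tfrac{j}{n})\bigr)^2$ via an Efron--Stein corollary, then shows the grid increment of $f(x)=x\log^2(x)$ is $\mathcal{O}(\log^2(n)/n)$ exactly as you do (worst step at the boundary, $|f(1/n)-f(0)|=\log^2(n)/n$), giving $\mathcal{O}(\log^4(n)/n)$. Where you genuinely diverge is the bias. The paper writes the per-coordinate bias as the Bernstein-polynomial approximation error $B_n(f)(p_k)-f(p_k)$ and bounds it by $\tfrac52\,\omega^2_\varphi(f,n^{-1/2})$, the second-order Ditzian--Totik modulus of smoothness, which it then computes for $x\log^2(x)$ through a decomposition into $x\log^2(x/e)+2x\log(x)-x$ and a three-regime optimization; the payoff is a bound $\mathcal{O}(|P|\log(n)/n)$ that is uniform in $P$ for a given alphabet size. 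Your route via the binomial shift identity $\mathbb{E}[f(\hat p_i)]=p_i\,\mathbb{E}[\log^2((N_i'+1)/n)]$ plus a frequent/rare split is more elementary and self-contained, and for the proposition as stated (fixed $P$, $n\to\infty$) it suffices, since the squared bias only needs to be $o(\log^4(n)/n)$.

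One step of your bias sketch is misstated, though the conclusion is salvageable. Bounding each rare-symbol term by $\mathcal{O}(p_i\log^2 n)$ and then ``summing using $\sum_i p_i=1$'' yields only $\mathcal{O}(\log^2 n)$, not $\mathcal{O}(\mathrm{polylog}(n)/n)$; and indeed for, e.g., the uniform distribution on $n$ symbols the bias really is of order $\log n$, so no argument ignoring the alphabet size can give $\mathcal{O}(\mathrm{polylog}(n)/n)$ uniformly. The correct accounting is that there are at most $|P|$ rare symbols, each with $p_i=\mathcal{O}(1/n)$, and each contributes $\mathcal{O}(\log^2(n)/n)$ to the bias --- using $\log^2((N_i'+1)/n)\le\log^2 n$ for one piece and the monotonicity of $x\log^2 x$ near $0$ to handle $p_i\log^2 p_i$ when $p_i\ll 1/n$ --- so the rare part totals $\mathcal{O}(|P|\log^2(n)/n)$, matching the $|P|$-dependence the paper obtains. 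With that correction, and the (routine but not free) third-moment control you acknowledge for the frequent symbols, your argument goes through.
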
 \vspace{-3mm}
\begin{proof}[Sketch of proof]
We apply the methods described in \cite{discrete_functionals} that bounds the bias using approximation theory for Bernstein polynomials and bounds the variance using the Efron-Stein inequality. See proof in Appendix \ref{app.plogp2}.
\end{proof} 
\vspace{-3mm} More efficient estimators can be created using methods of \cite{minimax_functional}, nevertheless the main interest of this proposition is to show that these quantities have an error rate depending on the number of samples $n$, and not the number of samples per group $N_{a}$, which is much better.

Beyond the practical use of these inequalities and approximations, these theorems also show one crucial idea: we can relate intersectional and marginal unfairness with the help of information on the independence of the protected attributes.

\vspace{-3mm}
\section{Refined approximations and inequalities}
\vspace{-2mm}

In the previous section, we have derived conditions for marginal unfairness to directly relate to $\UFI$, and bounds on probable intersectional fairness. We now would like to propose an approximation of $\UFI$ using similar ideas. Looking at \eqref{eq.epsilon1-bound}, \eqref{eq.indep}, and indirectly through Corollary \ref{cor.fraction} it seems natural to propose as one possible approximation of $\UFI$ the following quantity:
 \begin{equation}
 \UFIind=\sup_{y \in \mathcal{Y}} \sum_{k=1}^d \sup_{(a_k,a_k') \in \mathcal{A}_k^2} \ufi[k](y,a_k,a_k').
 \end{equation}
For the rest of the article, we thus now only focus on $\smax$ and $\UFIind$. Compared with $\UFIB$ the estimator with Bayesian prior, it does not depend on a prior parameter, and is usually well defined as we only need $N_{a_i,y}$ the number of samples per $a_i$ and $y$ to be strictly positive instead of all $N_{a,y}$. However this estimator of $\UFI$ is not consistent. We will show that the previous bounds can be improved and that we can make our estimator consistent by gradually grouping together the protected attributes as the number of samples increases. 
 
\vspace{-3mm}
\subsection{Grouping protected Attributes together}
\vspace{-1mm}

Until now, we have always decomposed the protected attributes $A$ on their marginals $A_i$. However it may be that we have more than just marginal information available. Take the example of 4 protected attributes $A\!=\!(A_1,A_2,A_3,A_4)$. For a set $t \subseteq \{1,2,3,4\}$, we define $A_t\!=\!(A_k)_{k \in t}$. We may not have enough data to compute the full intersectional unfairness, but it may be possible to compute it for the grouped protected attributes $A_{\{1,2\}}\!=\!(A_1,A_2)$ and $A_{\{3,4\}}$. We can use the same decomposition as we did before on the new marginals attributes (which corresponds to flattening $A_1$ and $A_2$ together) with support $\mathcal{A}_{\{1,2\}}\!=\!\mathcal{A}_1 \times \mathcal{A}_2$ and $\mathcal{A}_{\{3,4\}}\!=\!\mathcal{A}_3 \times \mathcal{A}_4$. 

More generally, let $q$ be a partition of $\{1,...,d\}\!=\![d]$. For a partition $q$, we denote $A^{(q)}\!=\!(A_t)_{t \in q}$. This is only a different way to group together the marginal attributes, and is the same as $A$. Whenever quantities are changed according to some partition $q$, it will be indicated with $(q)$. For each of the new marginal attributes defined by a set $t$ of $q$, the new marginal unfairness $\UFI[t]$ corresponds to the intersectional unfairness of the $(A_k)_{k \in t}$. If the $A_t$ are independent, and independent conditionally on $\hat{Y}$, we can apply Proposition~\ref{thm.independent} and obtain directly $\UFI$ through the newly defined marginals. If we relax the independence conditions, the same arguments of the previous section still apply, and we can look at the bounds and approximations defined by these new marginal densities. 
We denote the new approximation with partition $q$ by $\UFIind[(q)]\!=\!\sup_{y \in \mathcal{Y}} \sum_{t \in q} \sup_{(a_t,a_t') \in \mathcal{A}_t^2} \ufi[t](y,a_t,a_t')$ 
where we are using the new marginals defined by $q$. If we use the partition $q$ of singletons then $\UFIind[(q)]\!=\!\UFIind$, and if we use the trivial partition (the whole set) then $\UFIind[(q)]\!=\!\UFI$. The constraints of independence for these new marginals should be more feasible than the original marginals, hence it is possible that the $A_t$ fulfill the independence conditions, even if the $A_k$ do not (the trivial partition is such an example). If we have enough data to compute the marginal densities derived from $q$ and the $A_t$ fulfill the independence conditions, we can then compute $\UFI$ through the partition $q$. Of course most of the time the independence conditions are not satisfied satisfied for a partition $q$. Nonetheless because $\smax$ measures how far we are from the independence conditions, we can more carefully select a partition among those for which we can compute the new marginal densities.

\vspace{-3mm}
\subsection{Efficient Partition Selection}
\vspace{-1mm}

Let $\mathcal{Q}$ be the set of all \emph{feasible} partitions $q$, that is $\mathcal{Q}\!=\!\{q \in \mathcal{P}([d])\! \mid \! \forall t \in q, \forall (a_t,y) \in \mathcal{A}_t\times \mathcal{Y}, N_{a_t,y}>0 \}$ with $\mathcal{P}([d])$ the set of all partitions of $[d]$. This set represents the set of partitions for which we can compute the newly defined marginals without having to use a prior parameter. Note that $\mathcal{Q}$ is a random set that converges to $\mathcal{P}([d])$ almost surely as the number of samples $n$ increases. If $q' \in \mathcal{Q}$, then any partitions $q$ finer than $q'$ (meaning that any element of $q$ is a subset of an element of $q'$) is in $\mathcal{Q}$ as well. We will say that $q$ can be merged further if there exists a partition $q' \in \mathcal{Q}$ so that $q$ is finer than $q'$. Note that the choice of a partition $q$ does not change the value of $\UFI$ but only that of $\UFIind[(q)]$. We therefore want to find a good feasible partition $q$ in $\mathcal{Q}$ so that we can expect heuristically $\vert \UFIind[(q)]-\UFI \vert$ to be the lowest among the partitions. There are two criteria that should help us decide which partition $q \in \mathcal{Q}$ to choose from.

 \begin{wrapfigure}{L}{0.6\textwidth}
\begin{minipage}{0.6\textwidth}
\begin{algorithm}[H]
\caption{Greedy Partition Finder}\label{alg:greedy}
\begin{algorithmic}
\State \textbf{input:} Protected attributes data and occurrences of $\hat{Y}$ 
\State \textbf{require:} The partition of singletons is feasible
\State $q^*$ $\leftarrow$ the partition of singletons
\Repeat 
    \State $\mathcal{M}\!=\!\{ \! \{t_1 \!\cup t_2\}\cup q^* \setminus\!(\{t_1\}\!\cup\! \{t_2\}), (t_1,t_2) \! \in \!q^{*2} \!,t_1 \! \neq \! t_2\}$
    \State $s^*_{\text{min}} \leftarrow +\infty$
    \For{$q$ in $\mathcal{M}$}
        \If{$q$ is feasible and $\smax(q)<s^*_{\text{min}}$}
            \State $(s^*_{\text{min}},q^*) \leftarrow (\smax(q),q)$
        \EndIf
    \EndFor  
\Until{$\mathcal{M}=\emptyset$ or $s^*_{\text{min}} = \infty$} (Nothing possible to merge)
\State \textbf{return:} $q^*$
\end{algorithmic}
\end{algorithm}
\end{minipage}
\end{wrapfigure} 

If a partition $q'$ is coarser than $q$ (which means that $q$ is finer than $q'$), then reasonably the approximation is better with $q'$ than $q$. The reasoning is that by taking coarser partitions, we are taking more interactions between the protected attributes into account. For example the coarsest partition which is the whole set gives us the intersectional unfairness as mentioned earlier. However because the `finer-than' relationship is only a partial order, we are not able to choose between any two sets. Because Theorem $\ref{thm.chebyshev}$ seems to hint that there is a relationship between the error $\vert \UFIind[(q)]-\UFI \vert$, and the distance to the independence conditions $\smax$, the second criterion will be to select the partitions $q$ with the smallest $\smax(q)$ defined as $\smax$ but taking the marginals in $q$. 
These two criteria are closely linked. Selecting coarser partitions does tend to yield partitions with smaller $\smax$, but not always. We give some details on relationship between $\smax(q)$ of a partition $q$ compared to a coarser one in Appendix \ref{app.partitions}. More crucially, finding a good partition with a small $\smax(q)$ will also improve our inequalities as they are a function of $\smax(q)$ which decreases on average as shown in Figure \ref{sec.experiments} as the number of sample grows (and as the partitions get coarser). 

In principle, finding the best partition according to our criteria requires enumerating all feasible partitions which is computationally intractable. Instead we propose a greedy heuristic that we describe in Algorithm \ref{alg:greedy}. We start from the finest partition (the partition of singletons), look at all the feasible partitions (with enough data) that can be obtained from merging two elements of the current partition, select the one with the smallest $\smax$, and repeat until there are no coarser partitions with enough data.
Note that when we want to verify that there is enough data available, we may need to do it multiple times for the same subset of protected attributes. This is an expensive call so it is more efficient to do memoization and remember if there is enough data available for a given subset once encountered, which we do using a hash table to reduce the lookup time. We denote this partition $q^*$. We have the following property with the proof in Appendix \ref{app.consistency}: 
\begin{proposition}
\label{thm.consistent}
The estimator $\UFIind[(q^*)]$ is a consistent estimator of $\UFI$.
\end{proposition}  
This proposition shows that $\UFIind[(q^*)]$ is relevant in estimating $\UFI$, while not needing to use a Bayesian prior with parameters that may overwhelmingly affect the estimation. Note that instead of using $\mathcal{Q}$ which ensures that $N_{a_t,y}>0$, we can instead use $\mathcal{Q}_{\tau}$ for $\tau \in \mathbb{N}$ which ensures that for any $q \in \mathcal{Q}_{\tau}$, $N_{a_t,y}>\tau$ for $t \in q$. 
\vspace{-3mm}
\section{Experiments} \label{sec.experiments}
\vspace{-2mm}
In this section, we present experimental results that show how our inequalities and approximations perform on real and synthetic data-sets, and compare their estimation error rates as the number of samples grow. All the code used in our experiments can be found in the supplementary material or at  \href{https://github.com/mathieu-molina/BoundApproxInterMargFairness}{\textcolor{blue}{https://github.com/mathieu-molina/BoundApproxInterMargFairness}}. 
\vspace{-3mm}
\subsection{Data-sets and processing}
\vspace{-1mm}

In order to compare how well $\UFIB$ and $\UFIind$ perform as estimators on data on datasets with a high number of protected attributes, we need to compute $\UFI$ which is as discussed above inherently difficult. We will always measure the unfairness with respects to the empirical distribution of the dataset. For this empirical distribution to yield a well defined fairness measure, we need that $N_{a,y}>0$ for all $a$ and $y$. 

This means that if we want to take into account a high number of sensitive attributes, we have to pick a very large dataset.

We used  \href{https://archive.ics.uci.edu/ml/datasets/US+Census+Data+(1990)}{\textcolor{blue}{US Census data from 1990}} \cite{UCIarchive} which contains $n\!=\!2,458,285$ samples, and for which we identified many potential protected attributes. We then train a Random Forest binary classifier on a poverty binary label, where we weight the labels differently so as to obtain about the same number of predictions for each outcome. However we still do not have $N_{a,y}>0$ on the whole dataset. To alleviate this issue, we will consider subsets of the protected attributes for which this is true, and we will measure fairness with respect to these subsets.
We obtain about $100$ different subsets with $d=8$ protected attributes, that we denote as $D_i$ which is the original dataset where we only kept the $i$-th subset of protected attributes and the predictions $\hat{Y}$. Each of these subsets yield different values of $\UFI$ and $\smax$. We pick $12$ (for computational reasons) different $D_i$ with various values of $\UFI$ and $\smax$. Some examples of the final protected attributes include sex, not speaking English at home, being overweight, being Hispanic, and others. We will always take $\delta\!=\!0.1$ when relevant. 

We also conduct experiments on synthetic data. We generate $(A,\hat{Y})$ probability distributions from a Dirichlet distribution, thus we can directly compute $\UFI$ without dealing with a very large dataset. We take $d\!=\!10$. This synthetic data is one of the worst case for the approximation of $\UFI$ with $\UFIind$, as the marginal distributions are a sum of $2^{d-1}$ \ac{iid} random variables that all converges to $1/2$ as $d$ grows. We therefore will not plot $\UFIind$ for the synthetic data (it is close to $0$). Nonetheless, this synthetic data remains useful in order to compare the error rates between $\UFIB$ and $\hat{s}^*$ and their respective true value. We denote by $P_i$ a generated probability distribution. We generate $12$ of them.
\vspace{-3mm}
\subsection{Experiments Results} 
\vspace{-1mm}
We first want to compare the convergence rate of $\UFIB$, $\smax$ and $\UFIind$ to their asymptotic value. To do that, and because they can take different values, we compute for each estimator $\hat{T}_n$ that converges in probability to $T$ the relative expected $L_2$ error rate $L_2^r(\hat{T}_n)\!=\!\mathbb{E}[(\hat{T}_n-T)^2/T^2]$.  
We fix a number of available samples $n$ from $100$ to $2,000$, and we sample without replacement from the datasets.  From these available samples, we compute all our estimators. We denote by $\hat{u}_{I}$ the estimator of $\UFIind$ computed with the empirical marginal densities for $n$ samples. In order to compute $L_2^r$, for each subset and each sample size $n$, we sample from $D_i$ and $P_i$ $20$ times for each fixed number of samples $n$. 

We see in Figure \ref{fig.comp_conv} on the left-most plot, that $\hat{u}_I$ is reasonably close to $\UFI$ on average. Still the gap between $\epsilon^*$ and $\epsilon_1$ is quite big. Other bounds such as $\epsilon_2$ or with other concentration inequalities are generally a bit more efficient but still loose, nevertheless we focus here on comparing the error rates between $\smax$ and $\UFIB$, and on how $\UFIind$ performs. The other two plots look at the $L_2^r$ for the various estimators, with the middle one being with the real datasets $D_i$, and the right on the synthetic datasets $P_i$. We can see that $\hat{s}^*$ and $\hat{u}_{I}$ converges much faster than $\UFIB$. Moreover, it seems that the difference in error rate will only grow bigger as $d$ increases, as there is a bigger gap for $P_i$. We can also see that $\UFIB$ is unreliable, because the error rate varies a lot depending on $\alpha$, and can even increase. This is because the parameter $\alpha$ dominates the computation of $\UFIB$ as discussed earlier.

\begin{figure}[ht] 
\begin{center}
\centerline{\includegraphics[width=1.1\columnwidth]{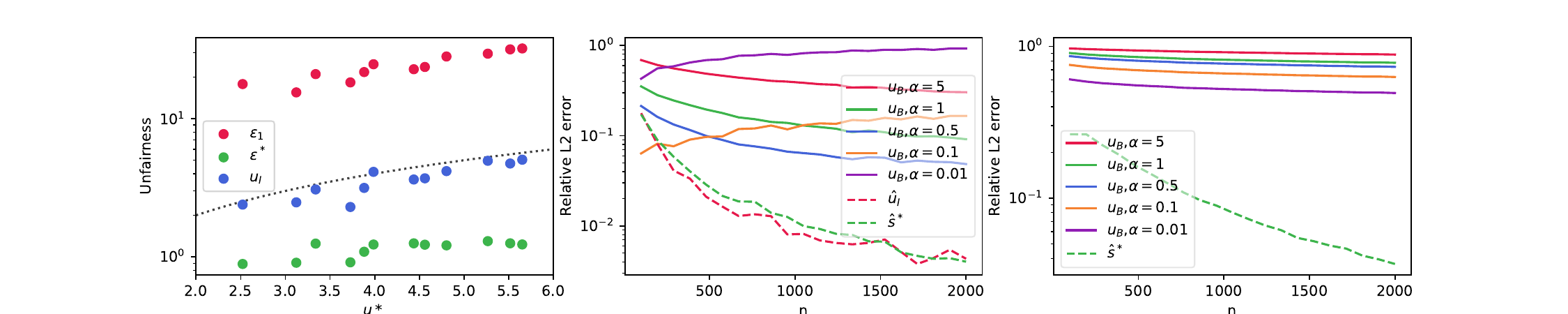}}
\caption{On the left-most plot, each point represents one real dataset $D_i$, and we compare $\epsilon_1$, $\epsilon^*$, and $\UFIind$ with $\UFI$. The dotted line corresponds to the equation $x=y$ for reference. The middle plot describes the average over the $D_i$ of $L_2^r$ as $n$ increases for $\hat{u}_{I}$, $\hat{s}^*$, and $\tilde{u}^*$. $\UFIB$ is computed for multiple values of $\alpha$. The right-most plot is similar, but uses the synthetic datasets $P_i$.}
\label{fig.comp_conv}
\vspace{-3mm}
\end{center}
\end{figure}
\vspace{-3mm}
We now conduct similar experiments, but this time using partitions. We can see in the middle plot of Figure $\ref{fig.exp_part}$ that $\hat{u}_{I}^{(q^*)}$ performs better. The choice of $\tau$ the count threshold for grouping always gives reasonable approximations, with $\tau\!=\!1$ being close to $\UFIB$, and $\tau$ big makes it close to $\UFIind$. Most importantly, the apparent good error rate of $\UFIB$ is merely an artifact of the current range of $\UFI$ being above the starting values of $\UFIB$ for these $\alpha$. 
It is clear that $\UFIB$ is unreliable by looking at the left plot in Figure $\ref{fig.exp_part}$: the estimation with $\UFIB$ at $n=2000$ for different values of $\alpha$ varies very little when $\UFI$ varies (it is almost not a function of $\UFI$). This means that $\UFIB$ depends very little on the data for low amount of samples. Even if it is not perfect, $\UFIind[(q^*)]$ still has better performance and is more coherent. We note that the approximation performs well comparatively only when $d$ is high, and considering more sensitive attributes should make an even bigger difference. These results combined with Proposition \ref{thm.consistent} show that $\UFIind[(q^*)]$ is a relevant estimator of $\UFI$ with scarce data and high number of protected attributes. Concerning $\smax(q^*)$ the right-most plot shows that while it is not completely monotone, $\smax(q^*)$ does decreases on average when using partitions as the number of sample increases. The upper bound will become tighter as $n$ grows, which will make bigger groupings of protected attributes possible. 
\vspace{-3mm}
\begin{figure}[ht] 
\begin{center}
\centerline{\includegraphics[width=1.1\columnwidth]{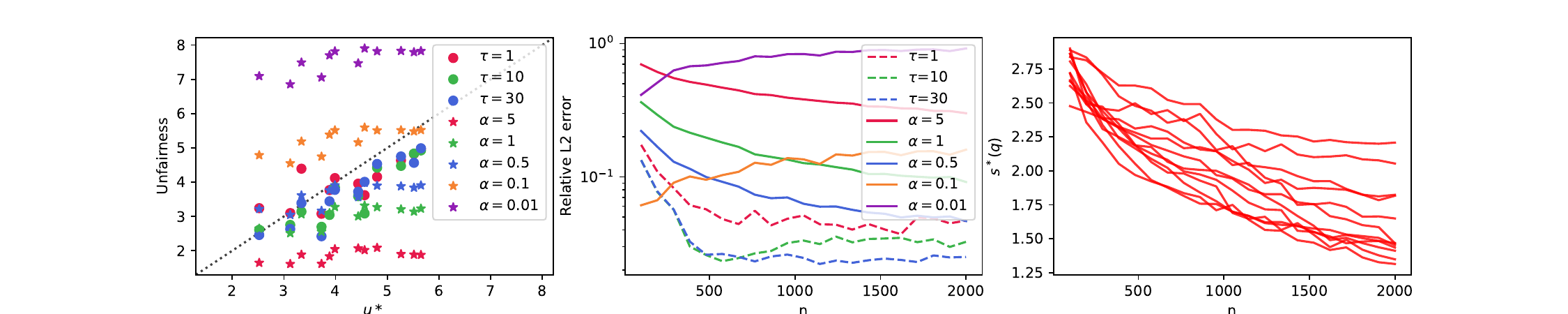}}
\caption{Each point of the same color and shape represent the estimation for one dataset $D_i$. The estimations are computed for $n=2000$. The middle plot is the same as above, but with $\hat{u}_{I}^{(q^*)}$ this time. The rightmost plot is the average evolution of $s^*(q^*)$ for $\tau=10$ as $n$ increases.}
\label{fig.exp_part}
\end{center}
\end{figure}
\vspace{-1mm}

\vspace{-8mm}
\section{Discussion}\label{sec:discussion}
\vspace{-2mm}
In this work, we presented new methods to approximate and to bound (in high probability) a strong intersectional unfairness measure, based on statistical information computable from a reasonable dataset. Our results highlight the key role of independence of the protected attributes conditionally to the classifier, and propose to approach it via a smart grouping of some attributes---which our theoretical bound allows us to compute via an efficient heuristic. 

Our experiments show that the approximations proposed here perform reasonably well for data-sets with a high number of protected attributes, but that our bounds are not very effective. However their main interest is that it gives insight into the link between marginal and intersectional fairness, which was the main goal of this work. It also helps us derive the proposed approximation. We expect that more effective bounds could be derived for our notion of probabilistic fairness, for instance by making additional assumptions on the distribution, but presumably without an explicit dependence on independence measures and marginal densities, making the link between marginal and intersectional fairness harder to see. 

In order to train fair models using the proposed approximations or bounds of this paper, we can use soft counts to compute the empirical densities (based on the classifier score for instance) as suggested in \cite{Fintersectional}. This makes the approximations and bounds differentiable, and ensure that we can apply gradient based methods so as to solve a constrained or penalized optimization problem using these quantities.

We hope that our approach will enable the development of improved bounds, raise interest in the proposed notion of probabilistic unfairness which we think is crucial to the development of fair algorithms, as well as the use of our approximations to penalize classifiers in order to train intersectionally fair classifiers. 

\vspace{-3mm}
\section*{Acknowledgments}
\vspace{-1mm}

This work has been partially supported by MIAI @ Grenoble Alpes (ANR-19-
P3IA-0003), by the French National Research Agency (ANR) through grant
ANR-20-CE23-0007, and by TAILOR (a project funded by EU Horizon 2020
research and innovation programme under GA No 952215). The authors are hosted at the CREST lab (CNRS, GENES, Ecole Polytechnique, Institut Polytechnique de Paris).

\newpage

\bibliographystyle{plain}
\bibliography{references.bib}

\begin{thebibliography}{10}

\bibitem{equal_credit_opportunity}
{Equal Credit Opportunity Act}, 1974.

\bibitem{CodeTravail}
{Code du Travail. Chapitre II : Principe de non-discrimination}, 2020.

\bibitem{entropy_bayesian}
Evan Archer, Il~Memming Park, and Jonathan~W. Pillow.
\newblock Bayesian entropy estimation for countable discrete distributions.
\newblock {\em Journal of Machine Learning Research}, 15(1):2833–2868, 2014.

\bibitem{barocas-hardt-narayanan}
Solon Barocas, Moritz Hardt, and Arvind Narayanan.
\newblock {\em Fairness and Machine Learning}.
\newblock fairmlbook.org, 2019.
\newblock \url{http://www.fairmlbook.org}.

\bibitem{pmlr-v81-buolamwini18a}
Joy Buolamwini and Timnit Gebru.
\newblock Gender shades: Intersectional accuracy disparities in commercial
  gender classification.
\newblock In {\em Proceedings of the 1st Conference on Fairness, Accountability
  and Transparency (FAT*)}, volume~81, pages 77--91, 2018.

\bibitem{total_correlation_estimation}
Pengyu Cheng, Weituo Hao, and Lawrence Carin.
\newblock Estimating total correlation with mutual information bounds.
\newblock Available as arXiv:2011.04794, 2020.

\bibitem{chouldechova2017fair}
Alexandra Chouldechova.
\newblock Fair prediction with disparate impact: A study of bias in recidivism
  prediction instruments.
\newblock {\em Big data}, 5 2:153--163, 2017.

\bibitem{crenshaw}
Kimberle Crenshaw.
\newblock Demarginalizing the intersection of race and sex: A black feminist
  critique of antidiscrimination doctrine, feminist theory and antiracist
  policies.
\newblock {\em University of Chicago Legal Forum}, 1989(1):139--167, 1989.

\bibitem{dembo2009large}
A.~Dembo and O.~Zeitouni.
\newblock {\em Large Deviations Techniques and Applications}.
\newblock Stochastic Modelling and Applied Probability. Springer Berlin
  Heidelberg, 2009.

\bibitem{minimax_fairness_algo}
Emily Diana, Wesley Gill, Michael Kearns, Krishnaram Kenthapadi, and Aaron
  Roth.
\newblock Minimax group fairness: Algorithms and experiments.
\newblock In {\em Proceedings of the 2021 AAAI/ACM Conference on AI, Ethics,
  and Society (AIES)}, page 66–76, 2021.

\bibitem{UCIarchive}
Dheeru Dua and Casey Graff.
\newblock Uci machine learning repository, 2017.

\bibitem{individual_fairness}
Cynthia Dwork, Moritz Hardt, Toniann Pitassi, Omer Reingold, and Richard Zemel.
\newblock Fairness through awareness.
\newblock In {\em Proceedings of the 3rd Innovations in Theoretical Computer
  Science Conference (ITCS)}, page 214–226, 2012.

\bibitem{UFIbayesian}
James~R. Foulds, Rashidul Islam, Kamrun Keya, and Shimei Pan.
\newblock Bayesian modeling of intersectional fairness: The variance of bias.
\newblock In {\em Proceedings of the SIAM International Conference on Data
  Mining (SDM)}, 2020.

\bibitem{Fintersectional}
James~R. Foulds, Rashidul Islam, Kamrun~Naher Keya, and Shimei Pan.
\newblock An intersectional definition of fairness.
\newblock In {\em Proceedings of the 2020 IEEE 36th International Conference on
  Data Engineering (ICDE)}, pages 1918--1921, 2020.

\bibitem{minimax_kl}
Yanjun Han, Jiantao Jiao, and Tsachy Weissman.
\newblock Minimax estimation of divergences between discrete distributions.
\newblock {\em IEEE Journal on Selected Areas in Information Theory},
  1(3):814--823, 2020.

\bibitem{equalopportunity}
Moritz Hardt, Eric Price, Eric Price, and Nati Srebro.
\newblock Equality of opportunity in supervised learning.
\newblock In {\em Proceedings of the Thirtieth Conference on Neural Information
  Processing Systems (NeurIPS)}, 2016.

\bibitem{pmlr-v80-hebert-johnson18a}
Ursula Hebert-Johnson, Michael Kim, Omer Reingold, and Guy Rothblum.
\newblock Multicalibration: Calibration for the
  ({C}omputationally-identifiable) masses.
\newblock In {\em Proceedings of the 35th International Conference on Machine
  Learning (ICML)}, pages 1939--1948, 2018.

\bibitem{trust_fairness_metric}
Disi Ji, Padhraic Smyth, and Mark Steyvers.
\newblock Can i trust my fairness metric? assessing fairness with unlabeled
  data and bayesian inference.
\newblock In {\em Proceedings of the Thirty-fourth Conference on Neural
  Information Processing Systems (NeurIPS)}, pages 18600--18612, 2020.

\bibitem{minimax_functional}
Jiantao Jiao, Kartik Venkat, Yanjun Han, and Tsachy Weissman.
\newblock Minimax estimation of functionals of discrete distributions.
\newblock {\em IEEE Trans. Inf. Theor.}, 61(5):2835–2885, 2015.

\bibitem{discrete_functionals}
Jiantao Jiao, Kartik Venkat, Yanjun Han, and Tsachy Weissman.
\newblock Maximum likelihood estimation of functionals of discrete
  distributions.
\newblock {\em IEEE Transactions on Information Theory}, 63(10):6774--6798,
  2017.

\bibitem{Kgerrymandering}
Michael Kearns, Seth Neel, Aaron Roth, and Zhiwei~Steven Wu.
\newblock Preventing fairness gerrymandering: Auditing and learning for
  subgroup fairness.
\newblock In {\em Proceedings of the 35th International Conference on Machine
  Learning (ICML)}, pages 2564--2572, 2018.

\bibitem{Krich}
Michael Kearns, Seth Neel, Aaron Roth, and Zhiwei~Steven Wu.
\newblock An empirical study of rich subgroup fairness for machine learning.
\newblock In {\em Proceedings of the Conference on Fairness, Accountability,
  and Transparency (FAT*)}, page 100–109, 2019.

\bibitem{large_deviation}
Elena Kosygina.
\newblock Introductory examples and definitions. cramér’s theorem, 2018.
\newblock \url{https://sites.math.northwestern.edu/~auffing/SNAP/Notes1.pdf}.

\bibitem{minimax_fairness}
Natalia Martinez, Martin Bertran, and Guillermo Sapiro.
\newblock Minimax pareto fairness: A multi objective perspective.
\newblock In {\em Proceedings of the 37th International Conference on Machine
  Learning (ICML)}, pages 6755--6764, 2020.

\bibitem{fair_continuous}
Jeremie Mary, Cl{\'e}ment Calauz{\`e}nes, and Noureddine~El Karoui.
\newblock Fairness-aware learning for continuous attributes and treatments.
\newblock In {\em Proceedings of the 36th International Conference on Machine
  Learning (ICML)}, pages 4382--4391, 2019.

\bibitem{entropy_nsb}
Ilya Nemenman, F.~Shafee, and William Bialek.
\newblock Entropy and inference, revisited.
\newblock In {\em Proceedings of the Conference on Neural Information
  Processing Systems (NIPS)}, 2001.

\bibitem{entropy_paninski}
Liam Paninski.
\newblock Estimation of entropy and mutual information.
\newblock In {\em Neural Computation}, volume~15, pages 1191--1253, 2003.

\bibitem{Romano20a}
Yaniv Romano, Stephen Bates, and Emmanuel Candes.
\newblock Achieving equalized odds by resampling sensitive attributes.
\newblock In {\em Proceedings of the Thirty-fourth Conference on Neural
  Information Processing Systems (NeurIPS)}, pages 361--371, 2020.

\bibitem{total_correlation_definition}
Satosi Watanabe.
\newblock Information theoretical analysis of multivariate correlation.
\newblock In {\em IBM Journal of Research and Development}, volume~4, pages
  66--82, 1960.

\bibitem{pmlr-v65-woodworth17a}
Blake Woodworth, Suriya Gunasekar, Mesrob~I. Ohannessian, and Nathan Srebro.
\newblock Learning non-discriminatory predictors.
\newblock In {\em Proceedings of the 2017 Conference on Learning Theory
  (COLT)}, pages 1920--1953, 2017.

\bibitem{overlapping_groups}
Forest Yang, Mouhamadou Cisse, and Sanmi Koyejo.
\newblock Fairness with overlapping groups; a probabilistic perspective.
\newblock In {\em Proceedings of the Thirty-fourth Conference on Neural
  Information Processing Systems (NeurIPS)}, pages 4067--4078, 2020.

\bibitem{causal_intersectionality}
Ke~Yang, Joshua~R. Loftus, and Julia Stoyanovich.
\newblock {Causal Intersectionality and Fair Ranking}.
\newblock In {\em Proceedings of the 2nd Symposium on Foundations of
  Responsible Computing (FORC)}, pages 7:1--7:20, 2021.

\bibitem{Zafar17c}
Muhammad~B. Zafar, Isabel Valera, Manuel Gomez~Rogriguez, and Krishna~P.
  Gummadi.
\newblock {Fairness Constraints: Mechanisms for Fair Classification}.
\newblock In {\em Proceedings of the 20th International Conference on
  Artificial Intelligence and Statistics (AISTATS)}, pages 962--970, 2017.

\bibitem{zafar_disparate}
Muhammad~Bilal Zafar, Isabel Valera, Manuel Gomez~Rodriguez, and Krishna~P.
  Gummadi.
\newblock Fairness beyond disparate treatment \& disparate impact: Learning
  classification without disparate mistreatment.
\newblock In {\em Proceedings of the 26th International Conference on World
  Wide Web (WWW)}, page 1171–1180, 2017.

\bibitem{fairness_missing_data}
Yiliang Zhang and Qi~Long.
\newblock Assessing fairness in the presence of missing data.
\newblock In {\em Proceedings of the Thirty-fifth Conference on Neural
  Information Processing Systems (NeurIPS)}, pages 16007--16019, 2021.

\end{thebibliography}

\newpage
\section*{Checklist}
\begin{enumerate}

\item For all authors...
\begin{enumerate}
  \item Do the main claims made in the abstract and introduction accurately reflect the paper's contributions and scope?
    \answerYes{}
  \item Did you describe the limitations of your work?
    \answerYes{}
  \item Did you discuss any potential negative societal impacts of your work?
    \answerYes{This work is specifically related to Fairness, and as such we highlight in the Introduction existing works which already discuss some of the societal issues that intersectional fairness raises.}
  \item Have you read the ethics review guidelines and ensured that your paper conforms to them?
    \answerYes{}
\end{enumerate}

\item If you are including theoretical results...
\begin{enumerate}
  \item Did you state the full set of assumptions of all theoretical results?
    \answerYes{}
        \item Did you include complete proofs of all theoretical results?
    \answerYes{}
\end{enumerate}

\item If you ran experiments...
\begin{enumerate}
  \item Did you include the code, data, and instructions needed to reproduce the main experimental results (either in the supplemental material or as a URL)?
    \answerYes{}
  \item Did you specify all the training details (e.g., data splits, hyperparameters, how they were chosen)?
    \answerYes{We specified relevant data processing details, the actual supervised learning model used not being the main interest. More details are given in the code.}
        \item Did you report error bars (e.g., with respect to the random seed after running experiments multiple times)?
    \answerYes{}
        \item Did you include the total amount of compute and the type of resources used (e.g., type of GPUs, internal cluster, or cloud provider)?
    \answerYes{}
\end{enumerate}

\item If you are using existing assets (e.g., code, data, models) or curating/releasing new assets...
\begin{enumerate}
  \item If your work uses existing assets, did you cite the creators?
    \answerYes{}
  \item Did you mention the license of the assets?
    \answerNA{}
  \item Did you include any new assets either in the supplemental material or as a URL?
    \answerNA{}
  \item Did you discuss whether and how consent was obtained from people whose data you're using/curating?
    \answerYes{}
  \item Did you discuss whether the data you are using/curating contains personally identifiable information or offensive content?
    \answerYes{}
\end{enumerate}

\item If you used crowdsourcing or conducted research with human subjects...
\begin{enumerate}
  \item Did you include the full text of instructions given to participants and screenshots, if applicable?
    \answerNA{}
  \item Did you describe any potential participant risks, with links to Institutional Review Board (IRB) approvals, if applicable?
    \answerNA{}
  \item Did you include the estimated hourly wage paid to participants and the total amount spent on participant compensation?
    \answerNA{}
\end{enumerate}

\end{enumerate}

\newpage

\appendix

\section{Additional elements on Measures of Fairness} 

\subsection{Generalization to other measures of fairness} \label{app.gen_other_fairness}

Throughout the whole paper, we used a specific measure of fairness for simplicity. Nevertheless, the same arguments apply to a broader set of fairness measures, by modifying $\UFI$.

To define $\UFI$, we decided to take the $\log$ of a ratio. We note that when taking the $\sup$ over all possibles $a$ and $a'$ in $\mathcal{A}$, $\sup_{\mathcal{A}^2} \ufi(1,\cdot,\cdot)\leq \epsilon$ is equivalent to the definition in \cite{Fintersectional}, that is to say:
\begin{equation}
\forall (a,a') \in \mathcal{A}^2, \quad e^{-\epsilon} \leq \frac{\Pr(\hat{Y}=1 \mid A=a)}{\Pr(\hat{Y}=1 \mid A=a')} \leq e^{\epsilon}.
\end{equation}
However if we want to define a measure of unfairness between two protected groups, it is reasonable for it to be symmetric in the groups considered. We make it so by applying $\log$ and an absolute value function to the above middle quantity. Other distances and pseudo distances can also be chosen, such as $\vert \Pr(\hat{Y}=1 \mid A=a) - \Pr(\hat{Y}=1 \mid A=a') \vert $. It is also symmetric, but may be less useful in comparing models with many protected attributes that are not designed to be fair, as this measure will be close to $1$ most of the time, making the comparison less precise between two different models.\\

We can also modify $U$ and the definition of probabilistic fairness accordingly to obtain other desirable measures of unfairness. Say that we are only interest in the unfairness related to the outcome $\hat{Y}=1$. Taking $U'=u(1,A,A')$ and $\Pr(U' > \epsilon \mid \hat{Y}=1)\leq \delta$ the new definition of probabilistic fairness in this case, we can derive similar propositions and theorems as done in this paper. We only need to take the expectation and variance with respect to $\Pr( \cdot \! \mid \! \hat{Y}=1)$ for $L$ and $L_y$. This yields statistical quantities which are harder to interpret ($\sum_{a \in \mathcal{A}} \Pr(A=a \mid \hat{Y}=1) \log(\Pr(A=a) / \prod_{k=1}^d \Pr(A_k=a_k)$) but that should remain easy to estimate as they are always well defined because, considering the empirical distribution $\hat{p}$ we have $\hat{p}_A(A=a)=0 \implies \hat{p}_A(A=a\mid \hat{Y}=1 )=0$.
The changed definitions would be the following if we are only interested in the outcome $y \in \mathcal{Y}$:
\begin{align} \label{eq.new_defs}
&\UFI= \sup_{(a,a')\in \mathcal{A}^2} \ufi(a,a'), \quad \text{and} \quad \UFI[k]=\sup_{(a_k,a_k')\in \mathcal{A}_k^2} \ufi[k]( a_k,a_k') \\
\text{with} \ &\ufi ( a,a') \!=\! \Big\vert \log \! \Big(\frac{\Pr(\hat{Y}\!=\!y \! \mid \! A\!=\!a)}{\Pr(\hat{Y}\!=\!y \! \mid \! A'\!=\!a')} \Big) \! \Big\vert ,\ \ufi[k] (a_k,a_k')\!=\!\Big\vert \log \! \Big(\frac{\Pr(\hat{Y}\!=\!y \! \mid \! A_k\!=\!a_k)}{\Pr(\hat{Y}\!=\!y \! \mid \! A'_k\!=\!a_k')}  \Big)\! \Big \vert,\\
&\gamma=\mathbb{E}[L-L_y \mid \hat{Y}=y], \quad \sigma=\sqrt{\Var(L \mid \hat{Y}=y)}, \quad \sigma_y=\sqrt{\Var(L_y \mid \hat{Y}=1)}. 
\end{align}
Notably, we obtain a much nicer variant of Proposition \ref{thm.independent}, with $\UFI=\sum_{k=1}^d \UFI[k]$.\\ 

Similarly we can also change only the underlying probability distribution. We can replace the underlying probability $\Pr$ by $\Pr_{Y=1}$. Using this new probability measure we see that $\UFI$ is a relaxed version of Equality of Opportunity for a binary predictor defined in \cite{equalopportunity} by 
\begin{equation}
\Pr(\hat{Y}=1 \mid A=a, Y=1)=\Pr(\hat{Y}=1 \mid Y=1).
\end{equation}
Indeed, $\UFI=0$ is now equivalent with Equality of Opportunity. Practically, changing the underlying probability does not make much difference as showed in \cite{Kgerrymandering} because this amounts to measuring unfairness on the part of the dataset for which $Y=1$. \\

We compared the treatment faced by groups between them, such as looking at the discrimination between men and women. Another possibility is to measure the difference in treatment faced by a group compared to a reference value. This reference value is most of the time taken to be the population average of the decision criterion $\mathbb{E}_A[p_{\hat{Y}\mid A}(y \mid A)]=p_{\hat{Y}}(y)$. Therefore instead of evaluating $p_{\hat{Y} \mid A}(y \mid a)/p_{\hat{Y} \mid A}(y \mid a')$ we evaluate $p_{\hat{Y} \mid A}(y \mid a)/p_{\hat{Y}}(y)$. \\

Finally we can change over which treatment criterion we want to evaluate differences. In this paper we decided to look at the variable $\hat{Y}$. We can similarly define our fairness measure with $Y$. We can actually use any $(X,A,Y)$-measurable random variable $Z$ instead of $\hat{Y}$. For instance $\vert \hat{Y} -Y \vert$, which tells us whether or not the prediction is correct for binary classification, can be a good candidate. 

Combining all of the above comments, we can consider a wider array of fairness metrics for which variations of the techniques and theorems described in this paper apply.

\subsection{Some variants of Theorem \ref{thm.chebyshev} for modified fairness measures}

\paragraph{Intersectional Fairness in terms of absolute difference:} We consider the following definition of unfairness:
\begin{align}
&\UFI=\sup_{y \in \mathcal{Y}} \sup_{(a,a')\in \mathcal{A}^2} \ufi(y , a,a') \\
\text{with} \quad &\ufi (y , a,a') = \Big\vert \log \Pr(\hat{Y}=y  \mid  A=a)-\Pr(\hat{Y}=y  \mid  A'=a')  \Big\vert.
\end{align}
The new version of Theorem \ref{thm.chebyshev} is 
\begin{theorem*}
For $\delta \in (0,1]$, any classifier $h$ over a distribution $\mathcal{D}$ is $(\epsilon_1,\delta)$-probably intersectionally fair  with
\begin{equation}
\begin{split}
\epsilon_1=e^{-\gamma} \sup_{\mathcal{Y}} p_{\hat{Y}}^{1-d}  \bigg(e^{\frac{\sqrt{2}s^*}{\sqrt{\delta}}}\prod_{k=1}^d \sup_{\mathcal{A}_k} p_{\hat{Y} \mid A_k} 
- e^{-\frac{\sqrt{2}s^*}{\sqrt{\delta}}} \prod_{k=1}^d \inf_{\mathcal{A}_k} p_{\hat{Y} \mid A_k} \bigg).
\end{split}
\end{equation}
\end{theorem*}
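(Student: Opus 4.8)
The plan is to reproduce the route of the proof of Theorem~\ref{thm.chebyshev}, but to carry the product structure through rather than linearizing it with a logarithm, since the distance is now an absolute difference. The backbone is an exact Bayesian decomposition of the conditional law. Writing $\ell(a)=\log\!\big(p_A(a)/\prod_{k}p_{A_k}(a_k)\big)$ and $\ell_y(a)=\log\!\big(p_{A\mid\hat{Y}}(a\mid y)/\prod_{k}p_{A_k\mid\hat{Y}}(a_k\mid y)\big)$, Bayes' rule applied to the joint law and to each marginal gives, for every $(y,a)$,
\[
p_{\hat{Y}\mid A}(y\mid a)=e^{\ell_y(a)-\ell(a)}\;p_{\hat{Y}}(y)^{\,1-d}\prod_{k=1}^{d}p_{\hat{Y}\mid A_k}(y\mid a_k).
\]
Evaluated at the random triple the exponent $\ell_{\hat{Y}}(A)-\ell(A)$ is precisely $L_y-L$, whose mean is $C(A\mid\hat{Y})-C(A)=-\gamma$; this is the source of the $e^{-\gamma}$ prefactor in the statement.

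On the event where the centered deviation $(\ell_{\hat{Y}}(A)-\ell(A))+\gamma$ has modulus at most some threshold $c$, the factor $e^{\ell_{\hat{Y}}(A)-\ell(A)}$ lies in $[e^{-\gamma-c},\,e^{-\gamma+c}]$, so $p_{\hat{Y}\mid A}(\hat Y\mid A)$ is trapped between $e^{-\gamma-c}\,p_{\hat{Y}}(\hat Y)^{1-d}\prod_k\inf_{\mathcal A_k}p_{\hat{Y}\mid A_k}$ and $e^{-\gamma+c}\,p_{\hat{Y}}(\hat Y)^{1-d}\prod_k\sup_{\mathcal A_k}p_{\hat{Y}\mid A_k}$. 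Since $U=\lvert p_{\hat{Y}\mid A}(\hat Y\mid A)-p_{\hat{Y}\mid A'}(\hat Y\mid A')\rvert$ and the common factor $p_{\hat{Y}}(\hat Y)^{1-d}$ does \emph{not} cancel here (unlike the log-ratio case, where the subtraction sits inside the logarithm), forcing both conditionals into that same interval bounds $U$ by the width of the interval; taking $\sup_{y}$ then produces exactly the claimed $\epsilon_1$.

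It remains to make the deviation event hold with probability at least $1-\delta$ and to identify $c$. I would write $(\ell_{\hat{Y}}(A)-\ell(A))+\gamma=(L_y-\mathbb{E}[L_y])-(L-\mathbb{E}[L])$, bound its modulus by $\lvert L_y-\mathbb{E}L_y\rvert+\lvert L-\mathbb{E}L\rvert$, and apply Chebyshev's inequality to $L$ and $L_y$ separately with split thresholds $\alpha_1,\alpha_2$ summing to $c$. A union bound over these tail events and over both groups $A,A'$ keeps the total failure probability below $\delta$ provided $\sigma^2/\alpha_1^2+\sigma_y^2/\alpha_2^2$ stays within the allotted budget. Minimizing $\alpha_1+\alpha_2$ under that constraint is a two-variable constrained optimization whose Lagrange conditions give $\alpha_1\propto\sigma^{2/3}$ and $\alpha_2\propto\sigma_y^{2/3}$; substituting back yields $\smax=(\sigma^{2/3}+\sigma_y^{2/3})^{3/2}$ and a threshold $c$ proportional to $\smax/\sqrt{\delta}$, which is what enters the exponentials $e^{\pm\smax/\sqrt{\delta}}$.

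The main obstacle is this last bookkeeping-and-optimization step rather than the algebra of the decomposition. One must track that the control is genuinely two-sided — both an upper and a lower bound on the deviation are needed, since the difference can take either sign and both $A$ and $A'$ must be tamed — allocate the probability budget $\delta$ across all the resulting tail events, and solve the split optimization that crystallizes into the specific exponent $\smax/\sqrt{\delta}$. A minor point specific to the absolute-difference measure, worth verifying carefully, is that the surviving $p_{\hat{Y}}(\hat Y)^{1-d}$ factor must be carried inside the final $\sup_{y}$, whereas it cancelled in the corresponding step of Theorem~\ref{thm.chebyshev}.
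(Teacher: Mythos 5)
Your proposal follows essentially the same route as the paper's own (very brief) proof: it reuses the Chebyshev-plus-constrained-optimization machinery from Theorem~\ref{thm.chebyshev} to trap $p_{\hat{Y}\mid A}(\hat Y\mid A)$ in a two-sided interval built from $e^{-\gamma}$, $e^{\pm c}$ and the marginal sups/infs, and then bounds the absolute difference by the interval's width via $\sup_{(x,y)\in E^2}\vert x-y\vert\le \sup_E x-\inf_E y$, correctly keeping the non-cancelling $p_{\hat{Y}}^{1-d}$ factor inside the $\sup_{\mathcal{Y}}$. The only point worth flagging is that your (correct) union bound over both $A$ and $A'$ would actually produce $e^{\pm\sqrt{2}\,\smax/\sqrt{\delta}}$, a $\sqrt{2}$ factor that the paper's stated variant omits.
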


\paragraph{Intersectional Fairness when comparing to the population average:} We consider the following definition of unfairness:
\begin{align}
&\UFI=\sup_{y \in \mathcal{Y}} \sup_{a\in \mathcal{A}} \ufi(y , a) \\
\text{with} \quad &\ufi (y , a) = \Big\vert \log\Big( \frac{\Pr(\hat{Y}=y  \mid  A=a)}{\Pr(\hat{Y}=y)} \Big) \Big\vert.
\end{align}
The new version of Theorem \ref{thm.chebyshev} is 
\begin{theorem*}
For $\delta \in (0,1]$, any classifier $h$ over a distribution $\mathcal{D}$ is $(\epsilon_1,\delta)$-probably intersectionally fair  with
\begin{equation}
\epsilon_1=\sqrt{2}\frac{s^*}{\sqrt{\delta}}+ \sup_{\mathcal{Y}}\max\{\gamma+\sum_{k=1}^d \log \Big( \frac{p_{\hat{Y}}}{\inf_{ \mathcal{A}_k}p_{\hat{Y} \mid A_k}} \Big),-\gamma+\sum_{k=1}^d \log \Big( \frac{\sup_{\mathcal{A}_k}p_{\hat{Y} \mid A_k}}{p_{\hat{Y}}} \Big)\}
\end{equation}
\end{theorem*}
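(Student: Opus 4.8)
The plan is to mirror the proof of Theorem~\ref{thm.chebyshev}, specialised to the single-group comparison against the population average $p_{\hat{Y}}$. First I would establish the governing algebraic identity. Writing $L(a)=\log\bigl(p_A(a)/\prod_k p_{A_k}(a_k)\bigr)$ and $L_y(a,y)=\log\bigl(p_{A\mid\hat{Y}}(a\mid y)/\prod_k p_{A_k\mid\hat{Y}}(a_k\mid y)\bigr)$ and combining Bayes' rule $p_{\hat{Y}\mid A}(y\mid a)=p_{A\mid\hat{Y}}(a\mid y)p_{\hat{Y}}(y)/p_A(a)$ with $p_{A_k\mid\hat{Y}}(a_k\mid y)=p_{\hat{Y}\mid A_k}(y\mid a_k)p_{A_k}(a_k)/p_{\hat{Y}}(y)$, every marginal $p_{A_k}(a_k)$ cancels and one obtains
\[
\log\frac{p_{\hat{Y}\mid A}(y\mid a)}{p_{\hat{Y}}(y)}=\sum_{k=1}^d \log\frac{p_{\hat{Y}\mid A_k}(y\mid a_k)}{p_{\hat{Y}}(y)}+\bigl(L_y(a,y)-L(a)\bigr).
\]
Evaluating at the random pair $(\hat{Y},A)$ and recalling $\mathbb{E}[L]=C(A)$, $\mathbb{E}[L_y]=C(A\mid\hat{Y})$ and $\gamma=C(A)-C(A\mid\hat{Y})$, the error term $L_y-L$ has mean $-\gamma$.

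Next I would apply Chebyshev's inequality separately to $L$ and to $L_y$, splitting the failure budget as $\delta=\delta_1+\delta_2$. On the complementary good event, of probability at least $1-\delta$, both $|L-\mathbb{E}[L]|\le \sigma/\sqrt{\delta_1}$ and $|L_y-\mathbb{E}[L_y]|\le \sigma_y/\sqrt{\delta_2}$ hold, hence $|(L_y-L)+\gamma|\le \sigma/\sqrt{\delta_1}+\sigma_y/\sqrt{\delta_2}$. I would then bound $U=\bigl|\log(p_{\hat{Y}\mid A}/p_{\hat{Y}})\bigr|$ on this event by its two signs: when the ratio is $\ge 1$ the identity gives $U\le \sum_k \log\bigl(\sup_{\mathcal{A}_k}p_{\hat{Y}\mid A_k}/p_{\hat{Y}}\bigr)-\gamma$ plus the excess, and when it is $<1$ it gives $U\le \sum_k \log\bigl(p_{\hat{Y}}/\inf_{\mathcal{A}_k}p_{\hat{Y}\mid A_k}\bigr)+\gamma$ plus the same excess. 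Since the realized sign is unknown, taking $\sup_y$ and the maximum of the two recovers exactly the bracketed $\max\{\cdots\}$ in $\epsilon_1$.

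Finally I would minimise the excess $\sigma/\sqrt{\delta_1}+\sigma_y/\sqrt{\delta_2}$ subject to $\delta_1+\delta_2=\delta$. Setting $\delta_1=\delta t$, the first-order condition yields $t/(1-t)=(\sigma/\sigma_y)^{2/3}$, and substituting back collapses the minimum to $\smax/\sqrt{\delta}$ with $\smax=(\sigma^{2/3}+\sigma_y^{2/3})^{3/2}$ — the one computation shared verbatim with Theorem~\ref{thm.chebyshev}. Assembling the three steps gives $\Pr(U>\epsilon_1)\le\delta$ for the stated $\epsilon_1$.

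The main obstacle is bookkeeping rather than any deep difficulty: one must check that the two-sided control of $L$ and $L_y$ simultaneously dominates both tails of $U$ with a \emph{single} excess term, so that only one $\smax/\sqrt{\delta}$ appears and the two sign-cases combine through a $\max$ (not a sum). A secondary subtlety is the ordering of operations — the $\sup_y$ and the per-coordinate $\sup/\inf$ over $\mathcal{A}_k$ must be introduced only after conditioning on the good event, so they do not consume any of the probabilistic budget.
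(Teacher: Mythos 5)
Your proposal is correct and follows essentially the same route as the paper: the paper derives the upper and lower bounds on $p_{\hat{Y}\mid A}$ exactly as in Theorem~\ref{thm.chebyshev} (Chebyshev on $L$ and $L_y$, union bound, and the same constrained minimization yielding $\smax/\sqrt{\delta}$ with no doubling since only $A$ and not $A'$ appears), and then concludes via $\sup_{x\in E}\vert\log(x/y)\vert\leq\max\{\log(y/\inf_E x),\log(\sup_E x/y)\}$, which is precisely your two-sign case analysis. Your additive identity for $\log\bigl(p_{\hat{Y}\mid A}/p_{\hat{Y}}\bigr)$ is just the logarithm of the paper's multiplicative decomposition, so the two arguments coincide.
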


The proof for both of these variants is exactly the same as for $\ref{thm.chebyshev}$ until we obtain an upper and lower bound on $\sup p_{\hat{Y} \mid A}$ and $\inf p_{\hat{Y} \mid A}$. We then use the fact that for $E \subset \mathbb{R}$, $\sup_{(x,y)\in E^2} \vert x - y \vert \leq \sup_E x - \inf_E y$, and $\sup_{x\in E} \vert \log(x/y) \vert \leq \max\{ \log(y/\inf_E x), \log(\sup_E x/y)\}$.

\subsection{Comparison between other measures of fairness}\label{app.comparison}

In \cite{Kgerrymandering}, a different fairness metric is used. Indeed, instead of simply measuring unfairness as the unweighted difference $\vert \Pr(\hat{Y}=1 \mid A=a) - \Pr(\hat{Y}=1)\vert$, they use the weighted difference $\Pr(A=a) \vert \Pr(\hat{Y}=1 \mid A=a) - \Pr(\hat{Y}=1)\vert$. For this subsection, we will use as a definition of unfairness $\UFI=\sup_{a \in \mathcal{A}} u(a)$ with $u(a)=\vert \Pr(\hat{Y}=1 \mid A=a) - \Pr(\hat{Y}=1)\vert$. We define the weighted unfairness used in \cite{Kgerrymandering} as $w^*=\sup_{a \in \mathcal{A}} w(a)$ with $w(a)=p_A(a) u(a)$ the weighted version of $u$. This definition yields very useful statistical properties in terms of the unfairness estimation, and \cite{Kgerrymandering} shows with Theorem 2.11 that the error made using the empirical estimator is less than $\tilde{\mathcal{O}}( \sqrt{((1+VCDIM(H)) \log(n)-\log(\delta))/n})$ with high probability $1-\delta$. Unfortunately this notion of unfairness is hard to control as the meaning of $w^* \leq \epsilon$ may be difficult to use for a decision maker, and can lead to the discrimination of groups of small sizes compared to using $\UFI$. This is already discussed and supported empirically in \cite{Fintersectional}.

We will briefly give some inequalities relating these quantities.
We have that 
\begin{equation}
w^*= \sup_{\mathcal{A}} p_Au \leq \sum_{a \in \mathcal{A}} p_A(a) u(a) =\EUFI \leq \UFI.
\end{equation}
Through these equations we see that $w^*$ cannot approach $\UFI$. 

The advantage of the notion of probable intersectional fairness compared to $w^*$ is two-fold: we can be arbitrarily close to $\UFI$, and through $\delta$ we explicitly control the size of the population that faces discrimination.  

Additionally we will present an example, which shows that when the number of protected groups grows large, the notion of weighted unfairness can become inadequate for certain scenarios compared to probabilistic unfairness. 

We will consider that we have $991$ protected groups with three different protected groups sets of sizes $1$, $495$ and $495$, for which we will denote any of their elements by $a_1$, $a_2$, and $a_3$ respectively. We will consider that $\Pr(A=a_1)=0.01$, and that the remaining protected groups are distributed uniformly $\Pr(A=a_2)=\Pr(A=a_3)=0.001$.

Suppose that $\Pr(\hat{Y}=1 \mid A=a_1)=1$ and $\Pr(\hat{Y}=1 \mid A=a_2)=\Pr(\hat{Y}=1 \mid A=a_3)=1/2$. Then $\Pr(\hat{Y}=1)=1/100+99/200=101/200$. Thus $u(a_1)=1-101/200=99/200$, $w(a_1)=99/20000$, $u(a_2)=u(a_3)=101/200-1/2=1/200$, and $w(a_2)=w(a_3)=1/200000$. Which means that $w^* = 99/20000$ and the model is $(1/200,0.99)$-probabilistically fair. Now suppose that $\Pr(\hat{Y}=1 \mid A=a_1)=1$,  $\Pr(\hat{Y}=1 \mid A=a_2)=1$, and $\Pr(\hat{Y}=1 \mid A=a_3)=0$. Then we have $\Pr(\hat{Y}=1)=101/200$. Thus $u(a_1)=99/200$,  $w(a_1)=99/20000$,  $u(a_2)=99/200$, $w(a_2)=99/200000$,  $u(a_3)=101/200$, and $w(a_3)=101/200000$. Which means that $w^* = 99/20000$ and the model is $(101/200,0.99)$-probabilistically fair. Here we see from these two examples, that $99\%$ population of their population saw their unfairness multiply by about a $100$ times while $w^*$ did not change. But probabilistic unfairness did manage to capture this change.

What we see is that when there is a high number of protected groups, relatively bigger groups tend to determine the weighted measure of unfairness $w^*$, but they can still consist of only a very small part of the total population overall.

We present here two simple inequalities relating $(\epsilon,\delta)$ probabilistic fairness with $w^*$ and $\EUFI$. 
\begin{align}
w^* &\leq \max \{ \epsilon\sup_{  \mathcal{A}}p_A, \delta \UFI  \}\\
\EUFI &\leq \epsilon + \delta \UFI
\end{align}

\begin{proof}
Let $\mathcal{A}_{\epsilon}=\{a \in \mathcal{A} \mid u(a)\leq \epsilon \}$ and $\mathcal{A}_{\epsilon}^C$ its complementary set. 

If $a^* =\argmax w(a) \in \mathcal{A}_{\epsilon}$ then $w(a^*)=p_A(a^*)u(a^*)\leq \epsilon \sup_{\mathcal{A}}p_A  $. Otherwise using that $\Pr(\mathcal{A}_{\epsilon}^C)\leq \delta$, we have $w(a)\leq \delta \UFI$. 

Now for $\EUFI$:
\begin{align*}
\EUFI &= \sum_{a \in \mathcal{A}_{\epsilon}} p_A(a) u(a)+ \sum_{a \in \mathcal{A}_{\epsilon}^C} p_A(a) u(a)\\
& \leq \sum_{a \in \mathcal{A}_{\epsilon}} p_A(a) \epsilon+ \sum_{a \in \mathcal{A}_{\epsilon}^C} p_A(a) \UFI \\
&= \epsilon \Pr(\mathcal{A}_{\epsilon}) + \UFI  \Pr(\mathcal{A}_{\epsilon}) \\
& \leq \epsilon + \delta \UFI.
\end{align*}
\end{proof}

\subsection{Intersectional Fairness and Continuous Protected Attributes} \label{app.continuous} 

Here we will show that when $A$ is continuous, even for reasonable distributions, we might end up with $\UFI=\infty$. Whereas our definition of probabilistic unfairness still has finite values, and can therefore be used as an interpretable tool to compare unfairness across models. 

Suppose that we have a random vector $(A_1,A_2,...,A_d,\hat{Y})$ distributed according to a multivariate normal $\mathcal{N}(\mu, \Sigma)$ with $\mu$ and $\Sigma$ the mean and covariance. Because $\hat{Y}$ is continuous, we will instead use the density $f_{\hat{Y} \mid A}$ in the definition of $\UFI$. Because this vector is distributed according to a multivariate normal, the conditional distribution is still normal and we can derive the exact parameters. The conditional distribution is 
\begin{equation}
\hat{Y} \mid A=a \sim \mathcal{N}(\bar{\mu},\bar{\Sigma}) 
\end{equation}
with $\bar{\mu}$ a linear form in $a$, and $\bar{\Sigma}$ that depends only in $\Sigma$. Basically, we can make the mean go to $\infty$ by making $a$ go to $\infty$. Hence for a given $y \in \mathcal{Y}$, we have that $\inf_{a \in \mathcal{A}} f_{\hat{Y}\mid A}(y\mid a)= 0$ for all $y$, which means that the unfairness is always infinite. 

Whereas our notion of probabilistic fairness, is finite and computationally tractable as we need to evaluate $\delta=\mathbb{E}[\mathbbm{1}[U>\epsilon]]$. It goes to $1$ as $\epsilon$ goes to $\infty$.

If we want to compare two machine learning models, and we do not want to compare for a specific point $\delta$, then $\epsilon^*(\delta)$ can be seen as a function of $\epsilon$, and we can compare these functions. If for two models $h_1$ and $h_2$ one function is always above the other, we could say that one is more fair than the other.

As an example, we consider for $d=10$ the couples $(A,\hat{Y}) \sim \mathcal{N}(0,\Sigma_w)$ with $\Sigma_w$ generated through a Wishart distribution, and $(A,\hat{Y}) \sim \mathcal{N}(0,\Sigma_c)$ with $\Sigma_c=(\textbf{1}+I_d)/2$ and $\textbf{1}$ the constant matrix equal to $1$. We then compute the probabilistic fairness on Figure \ref{fig.continuous} by computing the expectation of $\mathbbm{1}[U>\epsilon]$.   

\begin{figure}[ht] 
\begin{center} 
\centerline{\includegraphics[width=1\columnwidth]{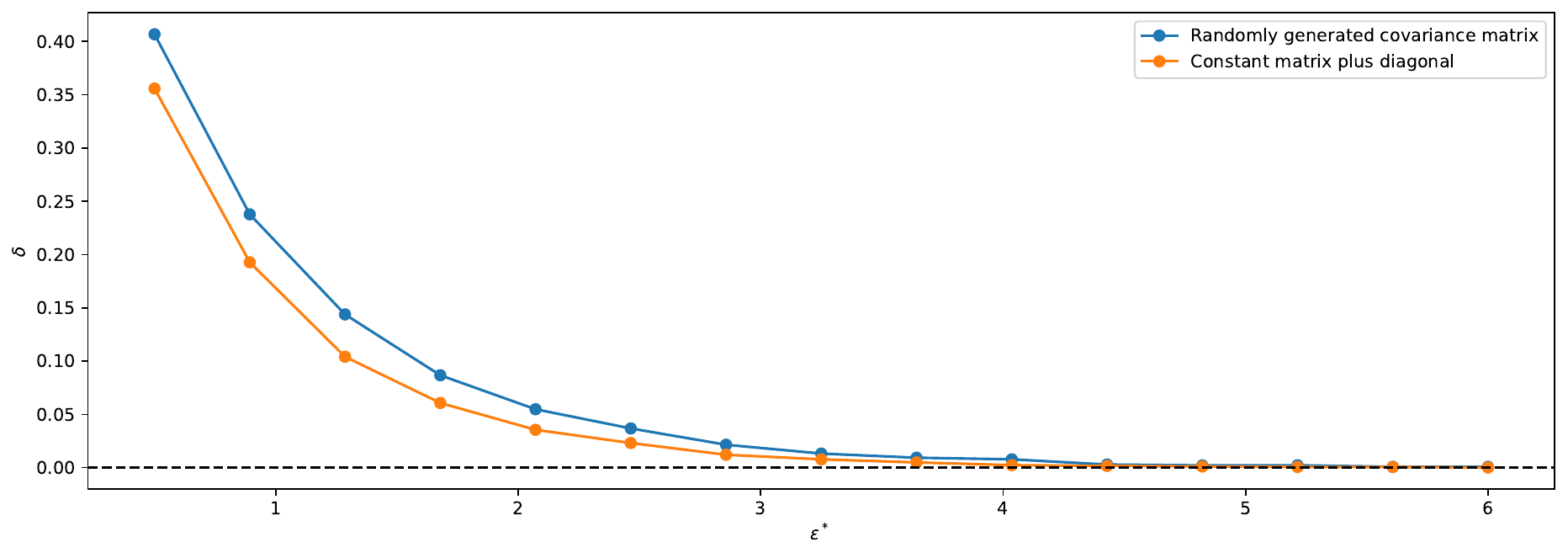}} 
\caption{Probabilistic Unfairness for continuous protected attributes.}\label{fig.continuous}
\end{center}
\end{figure}

There are other fairness metrics specifically for continuous attributes, such as in  \cite{fair_continuous} the HGR coefficient between $A$ and $\hat{Y}$, but which may be less interpretable to decision makers.

\section{Missing proofs and elements of part \ref{sec.bounds}}

\subsection{Counter example with independence of the sensitive attributes}\label{app.counter_example}

Let us define the following probability distribution on $(A_1,A_2,\hat{Y})$, with $A_1$, $A_2$, and $\hat{Y}$ binary:
\begin{align*}
&\frac{3}{16}=\Pr(A_1=0,A_2=0,\hat{Y}=0)\\
&=\Pr(A_1=1,A_2=1,\hat{Y}=0)\\
&=\Pr(A_1=0,A_2=1,\hat{Y}=1)\\
&=\Pr(A_1=1,A_2=0,\hat{Y}=1)\\
\text{and} \quad \frac{1}{16}&=\Pr(A_1=0,A_2=0,\hat{Y}=1)\\
&=\Pr(A_1=0,A_2=1,\hat{Y}=0)\\
&=\Pr(A_1=1,A_2=0,\hat{Y}=0)\\
&=\Pr(A_1=1,A_2=1,\hat{Y}=1)
\end{align*}

We have $p_A=1/4$, and $p_{A_1}=p_{A_2}=1/2$. Therefore $A_1 \indep A_2$. We have $p_{\hat{Y}}=1/2$, hence $p_{A_1\mid \hat{Y}}(0\mid 0)=p_{A_2\mid \hat{Y}}(0\mid 0)=1/2$
and $p_{A_1,A_2\mid \hat{Y}}(0,0,\mid 0)=3/8\neq p_{A_2\mid \hat{Y}}(0\mid 0) p_{A_1\mid \hat{Y}}(0\mid 0)$, therefore the $A_k$ are not independent conditionally on $\hat{Y}$. Because $p_{A_1 \mid \hat{Y}}=p_{A_2 \mid \hat{Y}}=1/2$, any form of marginal unfairness is $0$, and $\UFI=\log((3/4)/(1/4))=\log(3) \neq 0$. In this example we have mutual independence of the $A_k$, independence between $A_k$ and $\hat{Y}$, but still no meaningful relationship between intersectional and marginal fairness because we did not have independence conditionally on $\hat{Y}$.

\subsection{Proof of Proposition \ref{thm.independent}} \label{app.thm_indep}

Using the assumed independence, for any $a$ in $\mathcal{A}$ and $y$ in $\mathcal{Y}$ we can rewrite $p_{\hat{Y}\mid A}$ with marginal quantities:
\begin{align}
\Pr(\hat{Y}=y \mid A=a)&=\frac{\Pr(A=a \mid \hat{Y}=y)\Pr(\hat{Y}=y)}{\Pr(A=a)}\\
&=\Pr(\hat{Y}=y) \frac{\prod_{k=1}^d \Pr(A_i=a_i \mid \hat{Y}=y)}{\prod_{k=1}^d \Pr(A_i=a_i)}\\
&=\Pr(\hat{Y}=y) \prod_{k=1}^d \frac{\Pr(\hat{Y}=y \mid A_i=a_i)}{\Pr(\hat{Y}=y)}.
\end{align}
 Because the numerator is a product of independent variables (in the functional sense), taking the sup in $\mathcal{A}$ yields:
 \begin{equation}
 \sup_{a \in \mathcal{A}} \Pr(\hat{Y}=y \mid A=a) =\Pr(\hat{Y}=y) \prod_{k=1}^d \frac{\sup_{a_k \in \mathcal{A}_k} \Pr(\hat{Y}=y \mid A_k=a_k)}{\Pr(\hat{Y}=y)}.
 \end{equation}
 We can do the same for $\inf$. Hence 
 \begin{equation}
\frac{\sup_{a \in \mathcal{A}} \Pr(\hat{Y}=y \mid A=a)}{\inf_{a \in \mathcal{A}} \Pr(\hat{Y}=y \mid A=a)}=\prod_{k=1}^d \frac{\sup_{a_k \in \mathcal{A}_k} \Pr(\hat{Y}=y \mid A_k=a_k)}{\inf_{a_k \in \mathcal{A}_k} \Pr(\hat{Y}=y \mid A_k=a_k)},
 \end{equation}
 and we obtain 
 \begin{equation}
\UFI=\sup_{y \in \mathcal{Y}} \sup_{(a,a') \in \mathcal{A}^2} \Big \vert \log \Big( \frac{\Pr(\hat{Y}=y \mid A=a)}{\Pr(\hat{Y}=y \mid A=a')}\Big) \Big \vert=\sup_{y \in \mathcal{Y}} \sum_{k=1}^d \sup_{(a_k,a_k') \in \mathcal{A}_k^2} \ufi(y,a_k,a_k').
 \end{equation}
 The inequality is obtained by triangle inequality and because $\sup_{y \in \mathcal{Y}} \sup_{(a_k,a_k') \in \mathcal{A}_k^2} \ufi[k](y,a_k,a_k')=\UFI[k]$ by definition.

\subsection{Proof of Theorem \ref{thm.chebyshev} and Corollary \ref{cor.fraction}} \label{app.chebyshev} 

\begin{theorem*}
For $\delta \in (0,1]$, any classifier $h$ over a distribution $\mathcal{D}$ is $(\epsilon_1,\delta)$ and $(\epsilon_2,\delta)$-probably intersectionally fair  with \vspace{-3mm}
\begin{align*}
&\epsilon_1\!=\!2\sqrt{2} \! \frac{\smax}{\sqrt{\delta}}\!+\!\sup_{y \in \mathcal{Y}} \Big\{ \! \sum_{k=1}^d\! \sup_{(a_k,a_k') \in \mathcal{A}_k^2}\! \ufi[k](y,a_k,a_k')\Big\} \\
&\epsilon_2\!=\!\sqrt{2}\frac{\smax}{\sqrt{\delta}}\!+\!\gamma\!+\!\sup_{y \in \mathcal{Y}} \Big\{\sum_{k=1}^d \!\log \!\Big( \!\frac{p_{\hat{Y}}^{1-1/d}(y)}{\inf_{a_k \in \mathcal{A}_k} p_{\hat{Y}\mid A_k}(y \! \mid \! a_k)} \!\Big) \!\Big\}
   \\
&\text{where} \quad \smax=(\sigma^{2/3}+\sigma_y^{2/3})^{3/2} \quad \text{and} \quad \gamma\!=\!C(A)-C(A\!\mid\!\hat{Y})\!=\!\big(\sum_{k=1}^d I(A_k,\hat{Y})\big)-I(A,\hat{Y}).
\end{align*}
\end{theorem*}

\begin{proof}
We want to show that our classifier is $(\epsilon,\delta)$ probably fair for a given $\delta$.

We will first bound in probability $L$ and $L_y$, to be able to approach the joint densities through the product of marginal densities. We will denote by $\mu$, $\mu_y$, $\sigma$ and $\sigma_y$ the expectations and variances of $L$ and $L_y$
Let us apply Chebyshev's inequality to $L$. We obtain that 
\begin{equation*}
\Pr(\vert L - \mu \vert \geq \alpha_1) \leq \frac{\sigma^2}{\alpha_1^2}.
\end{equation*}
Using the fact that $\{ \vert L - \mu \vert < \alpha_1 \} \subset \{ \vert L - \mu \vert \leq \alpha_1 \}$ and taking the complementary event we can write that 
\begin{equation*}
\Pr(\vert L - \mu \vert \leq \alpha_1) \geq 1- \Pr(\vert L - \mu \vert > \alpha_1) \geq 1-\frac{\sigma^2}{\alpha_1^2}.
\end{equation*}
From this inequality we have
\begin{align*}
\vert L - \mu \vert \leq \alpha_1 & \implies \begin{cases} L - \mu \leq \alpha_1 \\ \mu - L \leq \alpha_1 \end{cases}\\
\implies & \begin{cases} L \leq \alpha_1+ \mu \\
L \geq \mu - \alpha_1 \end{cases} \\
\implies & \begin{cases} p_A(A) \leq e^{\alpha_1+\mu} \prod p_{A_k}(A_k) \\
p_A(A) \geq e^{\mu -\alpha_1} \prod p_{A_k}(A_k) \end{cases}.
\end{align*}
We can do the same for $L_y$ with a parameter $\alpha_2>0$.\newline 

Now we want to consider a condition on the parameters $\boldsymbol{\alpha}=(\alpha_1,\alpha_2)$ so that the probability of the conjunction of the events $\{\vert L - \mu \vert \leq \alpha_1 \}$ and $\{ \vert L_y - \mu_y \vert \leq \alpha_2 \}$ is greater than $1-\delta$. 
For $\delta > 0$, $\alpha_1 >0$ and $\alpha_2 >0$, a sufficient condition is that $\frac{\sigma^2}{\alpha^2}+\frac{\sigma^2_z}{\alpha_2^2} \leq \delta$. We can show this using complementary event and Boole's inequality:
\begin{align*}
&\Pr( \{\vert L - \mu \vert \leq \alpha_1\} \cap \{\vert L_y - \mu_y \vert \leq \alpha_2\}) \\
& \geq \Pr( \{\vert L - \mu \vert \leq \alpha_1\}) + \Pr( \{\vert L_y - \mu_y \vert \leq \alpha_2\}) -1\\
& \geq (1 -\frac{\sigma^2}{\alpha^2})+(1-\frac{\sigma^2_y}{\alpha_2^2}) -1 \\
& \geq 1 - \delta.
\end{align*}
We define $g(\boldsymbol{\alpha})=\frac{\sigma^2}{\alpha_1^2}+\frac{\sigma^2_z}{\alpha_2^2}-\delta$. \\
For any $\boldsymbol{\alpha}$ such that $g(\boldsymbol{\alpha})\leq 0$ we have with probability at least $1-\delta$ that

\begin{align}
p_{\hat{Y} \mid A}(\hat{Y} \mid A)&= \frac{p_{A \mid \hat{Y}}(A \mid \hat{Y})p_{\hat{Y}}(\hat{Y})}{p_A(A)} \\
&\leq p_{\hat{Y}}(\hat{Y}) \frac{e^{\alpha_2+\mu_y}}{e^{\mu-\alpha_1}}   \frac{\prod_{k=1}^d p_{A_k \mid \hat{Y}}(A_k \mid \hat{Y} )}{\prod_{k=1}^d p_{A_k}(A_k)} \\
&\leq p_{\hat{Y}}(\hat{Y}) \frac{e^{\alpha_2+\mu_y}}{e^{\mu-\alpha_1}}   \frac{\prod_{k=1}^d p_{\hat{Y} \mid A_k}(\hat{Y} \mid A_k )}{p_{\hat{Y}}(\hat{Y})^d} \\
&=p_{\hat{Y}}(\hat{Y}) \varphi(\mu_y,\mu)\psi(\boldsymbol{\alpha})f(\hat{Y},A), \label{eq.upper_bound}
\end{align}
where $\varphi(\mu_y,\mu)=e^{\mu_y-\mu}$, $\psi(\boldsymbol{\alpha})=e^{\alpha_1+\alpha_2}$, and $f(y,a)= \prod_{k=1}^d p_{\hat{Y} \mid A_k}(y \mid a_k)/p_{\hat{Y}}(y)^d$. 
Hence by taking the $\sup$ over $a$ and $\inf$ over $\boldsymbol{\alpha}$ on the right hand-side, we obtain 
\begin{equation*}
p_{\hat{Y} \mid A}(A \mid \hat{Y}) \leq p_{\hat{Y}}(\hat{Y}) \varphi(\mu_y,\mu)  \inf_{g(\boldsymbol{\alpha}) \leq 0} \psi(\boldsymbol{\alpha}) \sup_{a \in \mathcal{A}} f(\hat{Y},a).
\end{equation*}

As it is a product of functions of independent variables, $\sup_{a \in \mathcal{A}} f(y,a)$ is just the product of the $\sup$ of each $p_{\hat{Y} \mid A_k}$.

We will now solve the constrained optimization problem for $\psi$. We can write $\inf e^{\alpha_1+\alpha_2}= e^{\inf (\alpha_1+\alpha_2)}$, so we will just need to solve the simpler problem $\inf_{g(\alpha)\leq 0} s(\alpha)$, with $s(\alpha)= \alpha_1+\alpha_2$. Let us compute the gradients of $s$ and $g$:
\begin{align*}
\nabla g &= (-2 \sigma^2 \alpha_1^{-3},-2 \sigma_y^2 \alpha_2^{-3})^{\top}, \\
\nabla s &= (1,1)^{\top}.
\end{align*}
We will now show that this is a convex problem. The function $s$ is linear thus convex, and we will now compute the hessian of $g$:
\begin{align*}
H_g=6 \cdot \begin{pmatrix}
\sigma^2 \alpha_1^{-4} & 0 \\
0 & \sigma_y^2 \alpha_2^{-4} 
\end{pmatrix}. \\
\end{align*}
Clearly we have that the determinant of $H_g$, $Det(H_g)$ is strictly positive. Therefore $H_g$ is definite positive, and $g$ is convex. And for $\delta>0$ there is a feasible interior point by taking $\alpha_1$ and $\alpha_2$ big enough, which means that Slater's conditions hold (e.g. a convex constraint with a feasible interior point). We will now analyze the KKT conditions for minimization with the dual parameter $c\geq 0$: 
\begin{align*}
&\begin{cases}\nabla s + c\nabla g=0 \\
c g(\boldsymbol{\alpha})=0
\end{cases}
\Leftrightarrow \begin{cases} 1= 2 c \sigma^2 \alpha_1^{-3} \\
1=2 c \sigma_y^2 \alpha_2^{-3} \\
c g(\alpha_1,\alpha_2)=0
\end{cases}.
\end{align*}
We obtain that $\alpha_1= \sqrt[3]{2 c \sigma^2}$ and $\alpha_2= \sqrt[3]{2 c \sigma_y^2}$ \\
Clearly $c >0$ otherwise the first two lines cannot be 1, hence using the last equation we have $g(\alpha_1,\alpha_2)=0$. We now develop this last equality to obtain $c$: 
\begin{align*}
&g(\boldsymbol{\alpha})=0  \implies  \frac{\sigma^2}{(2 c \sigma^2)^{2/3}}+\frac{\sigma_y^2}{(2 c \sigma_y^2)^{2/3}} = \delta \\
&\implies  c = \frac{1}{2} \left( \frac{\sigma^{2/3}+\sigma_y^{2/3}}{\delta}\right)^{3/2}.
\end{align*}
Plugging $c$ in the previous expressions we have
\begin{align*}
&\implies \begin{cases} \alpha_1^*=\sqrt{\left( \frac{\sigma^{2/3}+\sigma_y^{2/3}}{\delta}\right)} \sqrt[3]{\sigma^2}=\frac{s_1^*}{\sqrt{\delta}}\\
\alpha_2^*=\sqrt{\left( \frac{\sigma^{2/3}+\sigma_y^{2/3}}{\delta}\right)} \sqrt[3]{\sigma_y^2}=\frac{s_2^*}{\sqrt{\delta}} \end{cases} \\
& \mbox{with} \quad \begin{cases} s_1^*=\sqrt{ \sigma^{2/3}+\sigma_y^{2/3}} \sigma^{2/3} \\ s_2^*=\sqrt{ \sigma^{2/3}+\sigma_y^{2/3}} \sigma_y^{2/3} \end{cases}.
\end{align*}
Finally the minimum is 
\begin{align*}
&\inf_{g(\boldsymbol{\alpha})\leq 0}s=\frac{s^*_1+s^*_2}{\sqrt{\delta}}=\frac{\left(\sigma^{2/3}+\sigma_y^{2/3} \right)^{3/2}}{\sqrt{\delta}}=\frac{s^*}{\sqrt{\delta}} \\
&\mbox{with} \quad s^*=\left(\sigma^{2/3}+\sigma_y^{2/3} \right)^{3/2}.
\end{align*}

We will now do the same in order to lower bound $p_A(\hat{Y} \vert A)$. 
\begin{align}
p_{\hat{Y} \mid A}(\hat{Y} \mid A)&= \frac{p_{A \mid \hat{Y}}(A \mid \hat{Y})p_{\hat{Y}}(\hat{Y})}{p_A(A)} \\
&\geq p_{\hat{Y}}(\hat{Y}) \frac{e^{-\alpha_2+\mu_y}}{e^{\mu+\alpha_1}}   \frac{\prod_{k=1}^d p_{A_k \mid \hat{Y}}(A_k \mid \hat{Y} )}{\prod_{k=1}^d p_{A_k}(A_k)} \\
&\geq p_{\hat{Y}}(\hat{Y}) \frac{e^{-\alpha_2+\mu_y}}{e^{\mu+\alpha_1}}   \frac{\prod_{k=1}^d p_{\hat{Y} \mid A_k}(\hat{Y} \mid A_k)}{p_{\hat{Y}}(\hat{Y})^d} \\
&=p_{\hat{Y}}(\hat{Y}) \varphi(\mu_y,\mu)\psi(\boldsymbol{\alpha})^{-1}f(\hat{Y},A),  \label{eq.lower_bound}
\end{align}
Here we take the $\sup$ over $\boldsymbol{\alpha}$ and $\inf$ over $a$ instead. 
We have that $\sup \psi(\boldsymbol{\alpha})^{-1}=(\inf \psi(\boldsymbol{\alpha}))^{-1}=\exp(-s^*/\sqrt{\delta})$. 

Because $U$ involves the two variables $A$ and $A'$, we need to bound $L'$ and $L'_y$ the variables $L$ and $L_y$ that are taken as a function of $A'$ instead of $A$. Because $(A',\hat{Y}) \sim (A,\hat{Y})$, all the computations above still apply, and we have 

\begin{equation*}
\Pr( \{\vert L - \mu \vert \leq \alpha^*_1\} \cap \{\vert L_y - \mu_y \vert \leq \alpha^*_2\} \cap \{\vert L' - \mu \vert \leq \alpha^*_1\} \cap \{\vert L'_y - \mu_y \vert \leq \alpha^*_2\}) \geq 1-2\delta.
\end{equation*}
Hence we only need to replace $\delta$ by $\delta/2$ in the above inequalities. 

Combining everything, we can conclude that when the event $\{\vert L - \mu \vert \leq \alpha^*_1\} \cap \{\vert L_y - \mu_y \vert \leq \alpha^*_2\} \cap \{\vert L' - \mu \vert \leq \alpha^*_1\} \cap \{\vert L'_y - \mu_y \vert \leq \alpha^*_2\}$ occurs, we have
\begin{align*}
U&=\left \vert \log \left( \frac{p_{\hat{Y} \mid A}(\hat{Y} \mid A)}{p_{\hat{Y} \mid A'}(\hat{Y} \mid A')}\right)\right \vert\\
&\leq \log \left( \frac{p_{\hat{Y}}(\hat{Y}) \varphi(\mu_y,\mu)\exp(\sqrt{2}s^*/\sqrt{\delta})\sup_{\mathcal{A}} f(\hat{Y},a)}{p_{\hat{Y}}(\hat{Y}) \varphi(\mu_y,\mu)\exp(-\sqrt{2}s^*/\sqrt{\delta}) \inf_{\mathcal{A}} f(\hat{Y},a)}\right)\\
&=2\sqrt{2} \frac{s^*}{\sqrt{\delta}}+\log\left( \frac{\prod_{k=1}^d \sup_{a_k \in \mathcal{A}_k} p_{\hat{Y} \mid A_k}(a_k)}{\prod_{k=1}^d \inf_{a_k \in \mathcal{A}_k} p_{\hat{Y} \mid A_k}(a_k)}\right)\\
&\leq 2\sqrt{2} \frac{\smax}{\sqrt{\delta}}\!+\!\sup_{y \in \mathcal{Y}} \! \sum_{k=1}^d\! \sup_{(a,a') \in \mathcal{A}^2}\! \ufi[k](y,a,a').
\end{align*}
We can conclude that 
\begin{align*}
&\Pr(U \leq \epsilon_1) \geq 1-\delta \\
&\mbox{with} \quad \epsilon_1= 2\sqrt{2} \frac{\smax}{\sqrt{\delta}}\!+\!\sup_{y \in \mathcal{Y}} \! \sum_{k=1}^d\! \sup_{(a,a') \in \mathcal{A}^2}\! \ufi[k](y,a,a'),
\end{align*}
which means that our classifier is $(\epsilon_1,\delta)$-probably intersectionally fair. 
Note that $\epsilon_1$ is a function of $(\delta, \sigma, \sigma_y)$.  

In order to derive the proof for $\epsilon_2$, we simply remark that $p_{\hat{Y} \mid A} \leq 1$ which can be used to upper bound the numerator. 
Therefore when the event $\{\vert L - \mu \vert \leq \alpha^*_1\} \cap \{\vert L_y - \mu_y \vert \leq \alpha^*_2\} \cap \{\vert L' - \mu \vert \leq \alpha^*_1\} \cap \{\vert L'_y - \mu_y \vert \leq \alpha^*_2\}$ occurs, we have
\begin{align*}
U&=\left \vert \log \left( \frac{p_{\hat{Y} \mid A}(\hat{Y} \mid A)}{p_{\hat{Y} \mid A'}(\hat{Y} \mid A')}\right)\right \vert\\
&\leq \log \left( \frac{1}{p_{\hat{Y}}(\hat{Y}) \varphi(\mu_y,\mu)\exp(-\sqrt{2}s^*/\sqrt{\delta}) \inf_{\mathcal{A}} f(\hat{Y},a)}\right)\\
&=\!\sqrt{2}\frac{\smax}{\sqrt{\delta}}\!+\!\gamma\!+\!\sum_{k=1}^d \!\log \!\Big( \!\frac{p_{\hat{Y}}^{1-1/d}(\hat{Y})}{\inf_{a_k \in \mathcal{A}_k} p_{\hat{Y}\mid A_k}(\hat{Y} \! \mid \! a_k)} \!\Big) \! \\
&\leq \!\sqrt{2}\frac{\smax}{\sqrt{\delta}}\!+\!\gamma\!+\!\sup_{y \in \mathcal{Y}} \Big\{\sum_{k=1}^d \!\log \!\Big( \!\frac{p_{\hat{Y}}^{1-1/d}(y)}{\inf_{a_k \in \mathcal{A}_k} p_{\hat{Y}\mid A_k}(y \! \mid \! a_k)} \!\Big) \!\Big\}\\
&=\epsilon_2.
\end{align*}
We can conclude in the same way as for $\epsilon_1$.

Looking at the proof, it also holds true that the model will also be  $(\min \{ \epsilon_1, \epsilon_2\},\delta)$-probably intersectionally fair. 
 \end{proof}

Now let us prove Corollary \ref{cor.fraction}:

\begin{corollary*} 
Denoting $(\Omega,\mathcal{T},\Pr)$ the probability space on which $(\hat{Y},A,A')$ is defined, there exists an event $F$ so that for $f(y,a)=\prod_{k=1}^d (p_{\hat{Y} \mid A_k}(y \mid a_k)/p_{\hat{Y}}(y))$ we have for $U$ the random unfairness:
\begin{equation}
-\frac{2 \sqrt{2}\smax}{\sqrt{\delta}} + \sup_{\omega \in F}\left \vert \log\left( \frac{f(\hat{Y},A')}{f(\hat{Y},A)}\right)\!(\omega)\!  \right  \vert \leq \sup_{\omega \in F} U(\omega) \leq \frac{2 \sqrt{2}\smax}{\sqrt{\delta}} + \sup_{\omega \in F}\left \vert \log\left( \frac{f(\hat{Y},A')}{f(\hat{Y},A)}\right)\!(\omega)\!  \right  \vert,
\end{equation}
with $\Pr(F)\geq 1- \delta$. 
\end{corollary*}

\begin{proof}

Let $F$ be the event defined as follows:
\begin{align}
F=\{ \omega \in \Omega \mid p_{\hat{Y} \mid A}(\hat{Y}\mid A)(\omega) \leq  p_{\hat{Y}}(\hat{Y})(\omega) \exp{(\frac{\sqrt{2} \smax}{\sqrt{\delta}} -\gamma)} f(\hat{Y},A)(\omega), \\
p_{\hat{Y} \mid A'}(\hat{Y} \mid A')(\omega) \leq p_{\hat{Y}}(\hat{Y}) (\omega)\exp{(\frac{\sqrt{2} \smax}{\sqrt{\delta}} -\gamma)} f(\hat{Y},A')(\omega), \\
p_{\hat{Y} \mid A}(\hat{Y} \mid A)(\omega) \geq  p_{\hat{Y}}(\hat{Y}) (\omega)\exp{(\frac{-\sqrt{2} \smax}{\sqrt{\delta}} -\gamma)} f(\hat{Y},A)(\omega), \\
p_{\hat{Y} \mid A'}(\hat{Y} \mid A')(\omega) \geq p_{\hat{Y}}(\hat{Y})(\omega) \exp{(\frac{-\sqrt{2} \smax}{\sqrt{\delta}} -\gamma)} f(\hat{Y},A')(\omega)\}.
\end{align}

What we have shown in the above proof of the Theorem before taking the $\sup$ and $\inf$ in $a \in \mathcal{A}$, is that the event  $\{\vert L - \mu \vert \leq \alpha^*_1\} \cap \{\vert L_y - \mu_y \vert \leq \alpha^*_2\} \cap \{\vert L' - \mu \vert \leq \alpha^*_1\} \cap \{\vert L'_y - \mu_y \vert \leq \alpha^*_2\}$ is included in $F$, and hence $\Pr(F)\geq 1-\delta$. Using these upper bounds, and simplifying by $p_{\hat{Y}}(\hat{Y})\exp(-\gamma)$ when taking the ratio of $p_{\hat{Y}\mid A}$ and $p_{\hat{Y}\mid A'}$ ,
we have that 
\begin{equation}
-\frac{2 \sqrt{2}\smax}{\sqrt{\delta}} + \log\left( \frac{f(\hat{Y},A)(\omega)}{f(\hat{Y},A')(\omega)}\right) \leq \log\left( \frac{ p_{\hat{Y} \mid A}(\hat{Y}\mid A)(\omega)}{p_{\hat{Y} \mid A'}(\hat{Y} \mid A')(\omega)} \right) \leq \frac{2 \sqrt{2}\smax}{\sqrt{\delta}} + \log\left( \frac{f(\hat{Y},A)(\omega)}{f(\hat{Y},A')(\omega)}\right), 
\end{equation}
and that 
\begin{equation}
-\frac{2 \sqrt{2}\smax}{\sqrt{\delta}} + \log\left( \frac{f(\hat{Y},A')(\omega)}{f(\hat{Y},A)(\omega)}\right) \leq \log\left( \frac{ p_{\hat{Y} \mid A'}(\hat{Y}\mid A')(\omega)}{p_{\hat{Y} \mid A}(\hat{Y} \mid A)(\omega)} \right) \leq \frac{2 \sqrt{2}\smax}{\sqrt{\delta}} + \log\left( \frac{f(\hat{Y},A')(\omega)}{f(\hat{Y},A)(\omega)}\right), 
\end{equation}
therefore 
\begin{equation}
\left\vert \log\left( \frac{ p_{\hat{Y} \mid A'}(\hat{Y}\mid A')(\omega)}{p_{\hat{Y} \mid A}(\hat{Y} \mid A)(\omega)} \right) \right \vert \leq \frac{2 \sqrt{2}\smax}{\sqrt{\delta}} + \left \vert \log\left( \frac{f(\hat{Y},A')(\omega)}{f(\hat{Y},A)(\omega)}\right)\right \vert, 
\end{equation}
and 
\begin{equation}
-\frac{2 \sqrt{2}\smax}{\sqrt{\delta}} + \left \vert \log\left( \frac{f(\hat{Y},A')(\omega)}{f(\hat{Y},A)(\omega)}\right)\right \vert \leq \left\vert \log\left( \frac{ p_{\hat{Y} \mid A'}(\hat{Y}\mid A')(\omega)}{p_{\hat{Y} \mid A}(\hat{Y} \mid A)(\omega)} \right) \right \vert.
\end{equation}
Taking the $\sup$ over all elements $\omega$ in $F$ we obtain that 
\begin{equation}
-\frac{2 \sqrt{2}\smax}{\sqrt{\delta}} + \sup_{\omega \in F}\left \vert \log\left( \frac{f(\hat{Y},A')}{f(\hat{Y},A)}\right)\!(\omega)\!  \right  \vert \leq \sup_{\omega \in F} U(\omega) \leq \frac{2 \sqrt{2}\smax}{\sqrt{\delta}} + \sup_{\omega \in F}\left \vert \log\left( \frac{f(\hat{Y},A')}{f(\hat{Y},A)}\right)\!(\omega)\!  \right  \vert,
\end{equation}
where $\sup_{\omega \in F} U(\omega)$ is the maximum random unfairness over the event $F$.\\

We cannot recover a form which uses the $u_k$, as here the variable $A$ and $A'$ are linked through $F$. Notice that we can upper bound  $\sup_{\omega \in F} \vert \log( f(\hat{Y},A')/f(\hat{Y},A)) (\omega)\vert$ by  $\sup_{\omega \in \Omega} \vert \log( f(\hat{Y},A')/f(\hat{Y},A))(\omega)  \vert $ because $F\subset \Omega$, which then recovers Theorem \ref{thm.chebyshev} but stated in a slightly different way. The convenient part of this Corollary is that we are also able to lower bound $\sup_{\omega \in F} U(\omega)$, but of course the inconvenience is that $F$ is unknown. 
\end{proof}

\subsection{Additional Bounds on Probable Intersectional Fairness} \label{app.chernoff}

Looking at how we proved Theorem $\ref{thm.chebyshev}$, we can derive more bounds by using other concentration inequalities. Let $\kappa(t)=\log(\mathbb{E}[e^{tL}])$ and $\kappa_y(t)=\log(\mathbb{E}[e^{tL_y}])$ be the cumulant generating-function of $L$ and $L_y$. We define the $\alpha$-Renyi Divergence between two discrete distributions $P$ and $Q$ of size $S$ for $\alpha>0$ as
\begin{equation}
D_{\alpha}(P \Vert Q)=\frac{1}{\alpha-1} \log( \sum_{k=1}^S \frac{p_k^{\alpha}}{q_k^{\alpha-1}}).
\end{equation}
These moments generating functions can be expressed as functions of Renyi Divergences, indeed
\begin{align}
&\kappa(t)=tD_{t+1}(p_A \Vert \bigotimes_{k=1}^d p_{A_k}) \\
\text{and} \quad &\kappa_y(t)=\log(\sum_{y \in \mathcal{Y}} p_{\hat{Y}}(y) \exp(t D_{t+1}(p_{A \mid \hat{Y}}(\cdot \mid y) \Vert \bigotimes_{k=1}^d p_{A_k \mid \hat{Y}}(\cdot \mid y)))).
\end{align}
While $\kappa(t)$ can therefore be estimated using techniques of \cite{minimax_kl} for instance, the estimation of $\kappa_y$ is less straightforward. 

For $\lambda^+$ and $\lambda^-$ in $\mathbb{R}$, we define $I_y^+(\lambda^+)=\sup_{t \in \mathbb{R}^+} \{t\lambda^+ - \kappa_y(t) \}$ and $I^-(\lambda^-)=\sup_{t \in \mathbb{R}^{-}} \{t\lambda^- - \kappa(t) \}$. We apply the generic Chernoff bounds to $L$ and $L_y$:
\begin{align}
&\Pr(L_y \geq \lambda^+) \leq e^{-I_y^+(\lambda^+)}\\
\mbox{\text{and}} \quad &\Pr(L\leq \lambda^-) \leq e^{-I^-(\lambda^-)}.
\end{align}

We will apply the same reasoning used in Appendix $\ref{app.chebyshev}$ and will only highlight the differences. 
\begin{equation*}
\Pr([L_y \geq \lambda^+] \cap [L\leq \lambda^-]) \geq 1 -(e^{-I_y^+(\lambda^+)}+e^{-I^-(\lambda^-)})
\end{equation*}
We want to ensure that the right hand-side is greater than $1-\delta$. Because we need to make sure that it also holds for $L'$ and $L_y'$, we need $2\delta$ instead of $\delta$. We define the constraint
\begin{equation}
g(\lambda^+,\lambda^-)=e^{-I_y^+(\lambda^+)}+e^{-I^-(\lambda^-)}-2\delta,   
\end{equation}
we need to have $g(\lambda^+,\lambda^-)\leq 0$. Hence using feasible values of $\lambda^+$ and $\lambda^-$, the event $[L_y \geq \lambda^+] \cap [L\leq \lambda^-]$ implies 
\begin{align}
U &\leq (\lambda^+ - \lambda^-)+\sup_{\mathcal{Y}} \sum_{k=1}^d \log(\frac{p_{\hat{Y}}^{1-1/d}}{\inf_{\mathcal{A}_k}p_{\hat{Y} \mid A_k}})\\
\mbox{thus} \quad U&\leq \inf_{g(\boldsymbol{\lambda}) \leq 0}(\lambda^+ - \lambda^-)+\sup_{\mathcal{Y}} \sum_{k=1}^d \log(\frac{p_{\hat{Y}}^{1-1/d}}{\inf_{\mathcal{A}_k}p_{\hat{Y} \mid A_k}})
\end{align}

Using that $\sup_{\mathcal{A}} p_{\hat{Y} \mid A} \leq 1$ we therefore have the following Theorem:
\begin{theorem}
For $\delta \in (0,1]$, any classifier $h$ over a distribution $\mathcal{D}$ is $(\epsilon_3,\delta)$-probably intersectionally fair, with
\begin{equation}
\epsilon_3= \inf_{g(\boldsymbol{\lambda}) \leq 0}(\lambda^+ - \lambda^-)+\sup_{\mathcal{Y}} \sum_{k=1}^d \log(\frac{p_{\hat{Y}}^{1-1/d}}{\inf_{\mathcal{A}_k}p_{\hat{Y} \mid A_k}})
\end{equation}
\end{theorem}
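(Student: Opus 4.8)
The plan is to run the argument of Theorem~\ref{thm.chebyshev} essentially unchanged, substituting the two Chebyshev steps by the one-sided Chernoff tail bounds recorded just above the statement. First I would set $\kappa(t)=\log(\mathbb{E}[e^{tL}])$ and $\kappa_y(t)=\log(\mathbb{E}[e^{tL_y}])$, together with the partial Legendre transforms $I^-(\lambda^-)=\sup_{t\in\mathbb{R}^-}\{t\lambda^- - \kappa(t)\}$ and $I_y^+(\lambda^+)=\sup_{t\in\mathbb{R}^+}\{t\lambda^+ - \kappa_y(t)\}$, so that the generic Chernoff bound gives $\Pr(L\leq\lambda^-)\leq e^{-I^-(\lambda^-)}$ and $\Pr(L_y\geq\lambda^+)\leq e^{-I_y^+(\lambda^+)}$. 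It is worth recording, as above, that $\kappa(t)=t\,\renyi{t+1}{p_A}{\bigotimes_{k=1}^d p_{A_k}}$ and that $\kappa_y(t)$ is a log-sum-exp over $y$ of the conditional Rényi divergences $\renyi{t+1}{p_{A\mid\hat{Y}}(\cdot\mid y)}{\bigotimes_{k=1}^d p_{A_k\mid\hat{Y}}(\cdot\mid y)}$, since this makes the finiteness of the transforms transparent.

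Next I would pin down the confidence level. By Boole's inequality the two favourable events hold simultaneously with probability at least $1-(e^{-I^-(\lambda^-)}+e^{-I_y^+(\lambda^+)})$, so imposing $g(\lambda^+,\lambda^-)\leq 0$ forces this to be at least $1-\delta$ for every feasible pair $(\lambda^+,\lambda^-)$. On that event I would feed the tail estimates on $L$ and $L_y$ into the Bayes rewriting $p_{\hat{Y}\mid A}(A\mid\hat{Y})=p_{\hat{Y}}(\hat{Y})\,e^{L_y-L}\prod_{k=1}^d p_{\hat{Y}\mid A_k}(A_k\mid\hat{Y})/p_{\hat{Y}}(\hat{Y})^d$ exactly as in Appendix~\ref{app.chebyshev}, which contributes a factor $e^{\lambda^+-\lambda^-}$. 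Following the derivation of $\epsilon'_1$ rather than $\epsilon_1$, I would bound the numerator of $U$ by the trivial inequality $p_{\hat{Y}\mid A}\leq 1$ and keep the concentration only for the denominator; because we then handle a single variable there is no $\sqrt{2}$ factor. Taking the infimum over $a$ of the product $\prod_k p_{\hat{Y}\mid A_k}/p_{\hat{Y}}^d$ distributes across the independent marginal factors and produces the stated term $\sup_{\mathcal{Y}}\sum_{k=1}^d\log(p_{\hat{Y}}^{1-1/d}/\inf_{\mathcal{A}_k}p_{\hat{Y}\mid A_k})$. It then remains to optimise the confidence contribution, i.e. to take $\inf_{g(\lambda^+,\lambda^-)\leq 0}(\lambda^+-\lambda^-)$, which yields $\epsilon_2$ and hence $\Pr(U\leq\epsilon_2)\geq 1-\delta$.

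The main obstacle is precisely this final optimisation. In the Chebyshev case the constraint $g$ was convex with a clean KKT solution in closed form, giving $\smax$; here $I^-$ and $I_y^+$ are themselves suprema and the feasible region $\{g\leq 0\}$ need not be convex, so in general $\inf_{g\leq 0}(\lambda^+-\lambda^-)$ has no closed form and must be left implicit or solved numerically, which is the price paid for the potentially tighter tails. A secondary, practical difficulty is estimation: whereas $\kappa$ is a single Rényi divergence of $p_A$ against $\bigotimes_k p_{A_k}$ and is accessible through existing estimators, $\kappa_y$ mixes conditional Rényi divergences over the values of $\hat{Y}$ and is markedly harder to estimate reliably.
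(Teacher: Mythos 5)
Your proposal matches the paper's own proof essentially step for step: the same cumulant generating functions $\kappa,\kappa_y$ and their Rényi-divergence expressions, the same one-sided Chernoff tails combined via a union bound under the constraint $g(\boldsymbol{\lambda})\leq 0$, the same Bayes decomposition following the $\epsilon'_1$ route (bounding the numerator by $p_{\hat{Y}\mid A}\leq 1$ so only one of $A,A'$ is involved), and the same implicit infimum $\inf_{g(\boldsymbol{\lambda})\leq 0}(\lambda^+-\lambda^-)$ left unsolved in closed form. Your closing observations on the non-convexity of the feasible region and on $\kappa_y$ being harder to estimate than $\kappa$ are precisely the points the paper itself makes in the adjacent appendix subsections, so there is nothing to add.
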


Compared to using Chebyshev, this should be both tighter as we are using more information than the first and second moment, and this bound should also be more efficient in terms of probability, as we are using one sided concentration inequalities. 

\subsection{Properties of the cumulant generating-function}

We now want to be able to say a bit more on the properties of this constrained minimization problem. We will show by first recalling and developing useful properties on $\kappa$ that the problem is non convex and differentiable almost everywhere. 

We will list the properties about $\kappa$ that will be useful to us developed in \cite{dembo2009large}, with \cite{large_deviation} being a summary containing all the information needed.

\begin{lemma} \label{app.conv_lemma}
We have that $\kappa$ is strictly convex and infinitely many times differentiable. This means that $\kappa'$ is strictly increasing and that it can be inverted on $\Image(\kappa')$. 
We will write $\kappa'^{(-1)}=\eta$, and for all $x$ so that $\kappa''(\eta(x))\neq 0$ we have $\eta'=1/\kappa''(\eta)$. 

We also have that $\kappa(0)=0$, $\kappa'(0)=\mu$, and $\kappa''(0)=\sigma^2$. 
\end{lemma}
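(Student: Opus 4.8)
The plan is to treat $\kappa(t)=\log M(t)$ with $M(t)=\mathbb{E}[e^{tL}]$ as a finite sum of exponentials and read off every claimed property from standard cumulant-generating-function calculus. First I would note that since $A$ has finite support, $L$ takes only finitely many values $\ell(a)=\log\big(p_A(a)/\prod_k p_{A_k}(a_k)\big)$, so that $M(t)=\sum_{a\in\mathcal{A}}p_A(a)e^{t\ell(a)}$ is a finite, strictly positive combination of (real-analytic) exponentials. Hence $M$ is $C^\infty$ and strictly positive on all of $\mathbb{R}$, and therefore $\kappa=\log M$ is infinitely differentiable.

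Next I would compute the first two derivatives. Writing $M'(t)=\sum_a p_A(a)\ell(a)e^{t\ell(a)}$ and $M''(t)=\sum_a p_A(a)\ell(a)^2 e^{t\ell(a)}$, one obtains $\kappa'=M'/M$ and $\kappa''=(M''M-(M')^2)/M^2$. Introducing the exponentially tilted law $P_t(a)\propto p_A(a)e^{t\ell(a)}$, these read $\kappa'(t)=\mathbb{E}_{P_t}[L]$ and $\kappa''(t)=\Var_{P_t}(L)\ge 0$; equivalently, nonnegativity follows directly from Cauchy--Schwarz in the form $M'(t)^2\le M''(t)M(t)$. This inequality is strict unless $\ell$ is constant on the support of $p_A$, i.e.\ unless $L$ is degenerate. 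Since the regime of interest is $\sigma>0$ (by \eqref{eq.sig_indep}, degeneracy $\sigma=0$ is exactly the mutual-independence case $C(A)=0$), I would conclude $\kappa''>0$ everywhere, so $\kappa$ is strictly convex and $\kappa'$ is strictly increasing.

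Strict monotonicity of $\kappa'$ makes it a bijection from $\mathbb{R}$ onto $\Image(\kappa')$, so I would set $\eta=\kappa'^{(-1)}$ and invoke the inverse function theorem. Differentiating the identity $\kappa'(\eta(x))=x$ yields $\kappa''(\eta(x))\,\eta'(x)=1$, whence $\eta'(x)=1/\kappa''(\eta(x))$ at every $x$ with $\kappa''(\eta(x))\ne 0$. Finally I would evaluate at $t=0$, where the tilted law $P_0$ is simply the original law of $A$: this gives $\kappa(0)=\log\mathbb{E}[1]=0$, $\kappa'(0)=\mathbb{E}[L]=\mu$, and $\kappa''(0)=\Var(L)=\sigma^2$.

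The only genuine subtlety is the strict-convexity claim, which fails precisely when $L$ is almost surely constant; I would handle this by pinning down the equality case in Cauchy--Schwarz and appealing to \eqref{eq.sig_indep} to identify that degenerate case with the independence situation already excluded once $\sigma>0$. Everything else---smoothness of $M$ and $\kappa$, the moment formulas, and the inverse-derivative identity---is routine differentiation together with the inverse function theorem.
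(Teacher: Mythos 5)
Your proof is correct, but it is worth noting that the paper does not actually prove this lemma at all: it simply cites the large-deviations literature (Dembo--Zeitouni and the accompanying summary reference) for these standard facts about cumulant generating functions. What you supply is the self-contained textbook argument behind those citations: since $L$ has finite support, $M(t)=\sum_a p_A(a)e^{t\ell(a)}$ is a finite positive combination of exponentials, hence everywhere finite and $C^\infty$, and the tilted-measure identities $\kappa'(t)=\mathbb{E}_{P_t}[L]$, $\kappa''(t)=\Var_{P_t}(L)$ together with the Cauchy--Schwarz equality case give strict convexity off the degenerate case; the inverse function theorem and evaluation at $t=0$ finish the job. Your treatment is in one respect more careful than the paper's statement: strict convexity genuinely fails when $L$ is almost surely constant, i.e.\ exactly when $\sigma=0$, which by \eqref{eq.sig_indep} is the mutual-independence case; the lemma as written asserts strict convexity unconditionally, and you correctly flag that this requires $\sigma>0$ (the regime in which the Chernoff-type bounds are actually of interest). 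So the comparison is: the paper buys brevity by outsourcing to a reference, while your argument buys self-containedness, an explicit identification of where the hypotheses are used, and a correct delimitation of the degenerate case.
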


From these properties we can conclude that $\eta(\mu)=0$, and that $\eta'(\mu)=1/\kappa''(\eta(\mu))=1/\sigma^2$. 

We will recall the definition of the convex conjugate of a function. 

\begin{definition}
Let $\mathcal{E}$ be a Euclidean vector space with scalar product $\langle \cdot, \cdot \rangle$, we define the convex-conjugate of a function $f:\mathcal{E}\to \mathbb{R}$ for $x \in \mathcal{E}$ by
\begin{equation}
f^{\star}(x)=\sup_{t \in \mathcal{E}} \{\langle x, t\rangle-f(t) \}.
\end{equation}
\end{definition}

We will now list the useful properties about $I=\kappa^{\star}$ also developed in \cite{large_deviation}. 

\begin{lemma}
The function $I$ is infinitely many times differentiable on $\Image(\kappa')$, $I(\mu)=0$, and for every $x \in \Image(\kappa')$ we can rewrite $I$ as 
\begin{equation}
I(x)=x\eta(x)-\kappa(\eta(x)).
\end{equation}
\end{lemma}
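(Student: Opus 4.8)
The plan is to compute directly the supremum defining the convex conjugate $I=\kappa^{\star}$ and then read off all three claims from the resulting closed form; in one dimension the scalar product in the definition is ordinary multiplication, so $I(x)=\sup_{t}\{xt-\kappa(t)\}$. For a fixed $x$, consider the map $t\mapsto xt-\kappa(t)$. Since $\kappa$ is strictly convex by Lemma~\ref{app.conv_lemma}, this map is strictly concave, so any stationary point is automatically its unique global maximizer. The first-order condition $x-\kappa'(t)=0$ reads $\kappa'(t)=x$, which admits a solution exactly when $x\in\Image(\kappa')$; because $\kappa'$ is strictly increasing and invertible on $\Image(\kappa')$, that solution is unique and equals $t^{*}=\eta(x)$. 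Evaluating the objective at $t^{*}$ then yields, for every $x\in\Image(\kappa')$,
\begin{equation*}
I(x)=\sup_{t}\{xt-\kappa(t)\}=x\eta(x)-\kappa(\eta(x)),
\end{equation*}
which is the claimed formula.

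The value $I(\mu)=0$ follows by specializing this identity. From Lemma~\ref{app.conv_lemma} we have $\kappa'(0)=\mu$, hence $\eta(\mu)=\kappa'^{(-1)}(\mu)=0$; plugging $x=\mu$ into the formula and using $\kappa(0)=0$ gives $I(\mu)=\mu\cdot 0-\kappa(0)=0$.

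For the regularity statement, I would first establish that $\eta$ is infinitely differentiable on $\Image(\kappa')$ and then let $I$ inherit this. Indeed $\kappa$ is $C^{\infty}$ and strictly convex, so $\kappa''>0$ throughout; the inverse function theorem makes $\eta=\kappa'^{(-1)}$ a $C^{\infty}$ function on $\Image(\kappa')$. Since $I(x)=x\eta(x)-\kappa(\eta(x))$ is assembled from products and compositions of $C^{\infty}$ functions, it is $C^{\infty}$. A cleaner route, which also provides a consistency check on the formula, is to differentiate $I$ directly: using $\kappa'(\eta(x))=x$, the two $\eta'$-terms cancel, leaving $I'(x)=\eta(x)+x\eta'(x)-\kappa'(\eta(x))\eta'(x)=\eta(x)$; since $\eta$ is $C^{\infty}$, so is $I$.

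The one point demanding care is justifying that the supremum is genuinely attained at the interior critical point $\eta(x)$ rather than escaping as $|t|\to\infty$, and that the critical value equals the supremum. This is precisely where strict concavity of $t\mapsto xt-\kappa(t)$ is used: a stationary point of a strictly concave function is its global maximizer whenever it exists, which happens exactly on $\Image(\kappa')$, so the closed form and the domain restriction match up cleanly. I do not expect any genuine obstacle beyond making this attainment argument explicit, as the result is the standard Legendre-duality identity adapted to the cumulant generating function at hand.
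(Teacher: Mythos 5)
Your argument is correct, and it supplies a proof where the paper gives none: the paper simply states this lemma as a known fact from the large deviations literature (citing \cite{dembo2009large} and \cite{large_deviation}) without proving it. What you write is the standard Legendre-duality computation those references contain --- strict concavity of $t\mapsto xt-\kappa(t)$ makes the stationary point $t^{*}=\eta(x)$ the unique global maximizer exactly when $x\in\Image(\kappa')$, the closed form and $I(\mu)=0$ follow by evaluation, and the identity $I'=\eta$ (which the paper itself rederives later in the proof of Proposition \ref{prop.I_continuously_differentiable}) gives smoothness. The one step to tighten is the claim that strict convexity of $\kappa$ forces $\kappa''>0$ throughout: this implication is false for general smooth functions (consider $t^{4}$ at the origin), so the inverse function theorem cannot be invoked on that basis alone. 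It does hold here, but for a reason specific to cumulant generating functions: $\kappa''(t)$ equals the variance of $L$ under the exponentially tilted measure proportional to $e^{tL}\,d\mathbb{P}$, and since that measure is equivalent to $\mathbb{P}$, this variance vanishes for some $t$ only if $L$ is almost surely constant --- in which case $\kappa$ is affine and not strictly convex, contradicting the hypothesis of the preceding lemma. With that justification inserted, $\eta=\kappa'^{(-1)}$ is $C^{\infty}$ by the inverse function theorem and the rest of your argument goes through as written.
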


\begin{proposition} \label{prop.I_continuously_differentiable}
The function $I^+$ is continuously differentiable on $(-\infty,\sup \kappa')$.
\end{proposition}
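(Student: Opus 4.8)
The plan is to make the one-sided rate function explicit and then verify $C^1$ regularity piece by piece, the only delicate point being the junction between the two pieces. By analogy with the definitions of $I_y^+$ and $I^-$, we read $I^+(x)=\sup_{t\ge 0}\{tx-\kappa(t)\}$. Since $\kappa$ is strictly convex (Lemma~\ref{app.conv_lemma}), the map $t\mapsto tx-\kappa(t)$ is strictly concave with $t$-derivative $x-\kappa'(t)$, which vanishes exactly at the unconstrained maximizer $t=\eta(x)$ whenever $x\in\Image(\kappa')$. Because $\kappa'(0)=\mu$ and $\eta$ is increasing with $\eta(\mu)=0$, the constraint $t\ge 0$ is slack precisely when $x\ge\mu$: for $x<\mu$ the concave objective is decreasing on $[0,\infty)$ so the supremum is attained at $t=0$ with value $-\kappa(0)=0$, while for $\mu\le x<\sup\kappa'$ the interior maximizer $\eta(x)\ge 0$ is admissible and $I^+(x)=I(x)=x\eta(x)-\kappa(\eta(x))$. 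This yields the piecewise formula $I^+(x)=0$ on $(-\infty,\mu]$ and $I^+(x)=I(x)$ on $[\mu,\sup\kappa')$.

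On each open piece smoothness is immediate. On $(-\infty,\mu)$, $I^+\equiv 0$ is trivially $C^\infty$. On $(\mu,\sup\kappa')\subset\Image(\kappa')$, the preceding lemma gives that $I$ is infinitely differentiable, and differentiating $I(x)=x\eta(x)-\kappa(\eta(x))$ together with the identity $\kappa'(\eta(x))=x$ cancels the $\eta'$ terms and leaves $I^+{}'(x)=\eta(x)$. Thus away from $x=\mu$ the function $I^+$ is $C^1$ (indeed $C^\infty$).

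The main obstacle is gluing the two pieces at $x=\mu$, and this is exactly where the facts $\kappa'(0)=\mu$ and $\eta(\mu)=0$ do the work. For continuity, $I(\mu)=\mu\,\eta(\mu)-\kappa(\eta(\mu))=\mu\cdot 0-\kappa(0)=0$ matches the left value. For differentiability, the left derivative is $0$ and the right derivative is $\lim_{x\downarrow\mu}\eta(x)=\eta(\mu)=0$, so $I^+$ is differentiable at $\mu$ with $I^+{}'(\mu)=0$; hence $I^+{}'$ equals $\eta(x)$ for $x\ge\mu$ and $0$ for $x\le\mu$. Finally $I^+{}'$ is continuous at $\mu$ because $\eta$, being the inverse of the strictly increasing smooth function $\kappa'$, is continuous (in fact $\eta'=1/\kappa''(\eta)$ with $\kappa''>0$), so $\eta(x)\to\eta(\mu)=0$ as $x\downarrow\mu$. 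Combining the three regions gives $I^+\in C^1(-\infty,\sup\kappa')$. In the degenerate case $\sigma=0$ one has $\sup\kappa'=\mu$ and the domain reduces to $(-\infty,\mu)$ where $I^+\equiv 0$, so the claim is trivial.
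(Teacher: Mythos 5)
Your proof is correct and follows essentially the same route as the paper's: establish the piecewise formula $I^+ = 0$ on $(-\infty,\mu]$ and $I^+ = I$ on $[\mu,\sup\kappa')$ by analyzing where the constraint $t\ge 0$ binds, then glue at $x=\mu$ using $I(\mu)=0$, $I'=\eta$, and $\eta(\mu)=0$. Your explicit matching of the left and right derivatives at the junction (and the remark on the degenerate case $\sigma=0$) is if anything slightly more careful than the paper's version, but it is the same argument.
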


\begin{proof}
We define $\tilde{I}^+(x)=I(x)$ if $x\geq \mu$, and $\tilde{I}^+(x)=0$ otherwise. 
We will first show that $\tilde{I}^+=I^+$. 

We have $\mu \in \Image(\kappa')$ because $\kappa'(0)=\mu$. Let $f_x(t)=tx-\kappa(t)$.
If $x<\mu$, then because $\kappa'$ is increasing we have for any $t\geq 0$
\begin{align*}
f_x'(t)&=x-\kappa'(t) \\
&\leq \mu - \kappa'(0) \\
&=0.
\end{align*}
This means that the $\max$ on $\mathbb{R}^+$ is at $0$, and therefore $I^+(x)=f_x(0)=0x-\kappa(0)=0=\tilde{I}^+(x)$.

If $x \geq \mu$, then for any $t \leq 0$ we have
\begin{align*}
f_x'(t)&=x-\kappa'(t) \\
&\geq \mu - \kappa'(0) \\
&=0.
\end{align*}
Hence for all $t \leq 0$ we have $f_x(t)\leq f_x(0)$ therefore the sup of $f_x$ is not on $\mathbb{R}^-$. Consequently when $x \geq \mu$ we have $I^+(x)=\tilde{I}^+(x)$. 
All in all we can conclude that $\tilde{I}^+=I^+$ on $\Image(\kappa')$. 

Let us analyze the potential discontinuity at $\mu$. We have $I(\mu)=0$ and $I^+=0$ on $(-\infty,\mu)$, so the function is continuous on $(-\infty,\sup \kappa')$. Let us compute $I'$:
\begin{align*}
I'(x)&=\eta(x)+x\eta'(x)-\eta'(x)\kappa'(\eta(x))\\
&=\eta(x)+x\eta'(x)-\eta'(x)x\\
&=\eta(x),
\end{align*}
and we know that $\eta(\mu)=0$. Hence we have that $I^+$ is continuously differentiable on $(-\infty,\sup \kappa')$.
\end{proof}

\begin{proposition}
The function $e^{-I^+}$ is non-convex at $\mu$. 
\end{proposition}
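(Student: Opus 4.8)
The plan is to work on the right half-line $(\mu, \sup\kappa')$, where $e^{-I^+}$ has a clean closed form, and to exhibit a strictly negative second derivative there. Write $\phi = e^{-I^+}$. By Proposition~\ref{prop.I_continuously_differentiable} and the lemmas preceding it, $I^+ = I$ on $(\mu,\sup\kappa')$, and $I$ is infinitely differentiable there with $I' = \eta$; hence $\phi$ is $C^2$ on this interval and may be differentiated directly.

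First I would differentiate: using $I' = \eta$ and the chain rule, $\phi' = -\eta\,e^{-I}$ and then $\phi'' = (\eta^2 - \eta')\,e^{-I}$. Next I would evaluate the one-sided limit at $\mu$. From the remarks following Lemma~\ref{app.conv_lemma} we have $\eta(\mu) = 0$ and $\eta'(\mu) = 1/\kappa''(\eta(\mu)) = 1/\sigma^2$, and strict convexity of $\kappa$ gives $\sigma^2 = \kappa''(0) > 0$. Together with $I(\mu) = 0$, this yields $\phi''(\mu^+) = (0 - 1/\sigma^2)\,e^{0} = -1/\sigma^2 < 0$.

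To conclude I would use continuity of $\phi''$ on $(\mu,\sup\kappa')$: since $\phi''(\mu^+) < 0$, there is $\varepsilon > 0$ with $\phi'' < 0$ on $(\mu, \mu + \varepsilon)$, so $\phi$ is strictly concave immediately to the right of $\mu$. A function strictly concave on an interval abutting $\mu$ cannot be convex on any neighborhood of $\mu$, which is the assertion. To the left of $\mu$ we have $\phi \equiv 1$, so the obstruction to convexity comes purely from the right side.

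The main subtlety is pinning down what ``non-convex at $\mu$'' should mean, since $\phi$ is only piecewise smooth at $\mu$: it is in fact $C^1$ there with $\phi'(\mu) = 0$ (the left branch is flat, and on the right $\phi'(\mu^+) = -\eta(\mu)e^{0} = 0$), so the failure of convexity is not a kink but the jump of the second derivative from $0$ on the left to the strictly negative value $-1/\sigma^2$ on the right. The key quantitative input is $\eta'(\mu) = 1/\sigma^2 > 0$, which is exactly where strict convexity of $\kappa$ (hence $\sigma^2 > 0$) is used.
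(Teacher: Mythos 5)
Your proposal is correct and follows essentially the same route as the paper: differentiate $e^{-I^+}$ twice on the right of $\mu$ using $I'=\eta$ and $\eta'=1/\kappa''(\eta)$, and evaluate at $\mu$ to get $-1/\sigma^2<0$. Your added care about the one-sided limit, the $C^1$ matching at $\mu$, and the precise meaning of ``non-convex at $\mu$'' is a welcome refinement but does not change the argument.
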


\begin{proof}
We will simply look at the second derivative of $h(x)=e^{-I^+(x)}$ for $x\geq \mu$:
\begin{align*}
h''(x)&=(I^{+'}(x)^2-I^{+''}(x))e^{-I(x)} \\
&=(\eta(x)^2-\frac{1}{\kappa''(\eta(x))})e^{-I(x)}. 
\end{align*}
Therefore $h''(\mu)=-1/\sigma^2<0$ for $\sigma \neq 0$, which means that it is non-convex at $\mu$. 
\end{proof}

The same proposition applies to $I_y^+$ and $I^-$, with the relevant $\kappa$ or $\kappa_y$. This means that $g''((\mu_y,\mu))=-(1/\sigma^2+1/\sigma_y^2)<0$ hence the following corollary
\begin{corollary}
The constraint $g$ is non convex at $(\mu_y,\mu)$.
\end{corollary}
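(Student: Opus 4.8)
The plan is to exploit the separable structure of the constraint. Since $g(\lambda^+,\lambda^-)=e^{-I_y^+(\lambda^+)}+e^{-I^-(\lambda^-)}-\delta$ is the sum of a function of $\lambda^+$ alone and a function of $\lambda^-$ alone (plus a constant), its Hessian is diagonal, with the two diagonal entries being exactly the second derivatives of the one-variable maps $\lambda^+\mapsto e^{-I_y^+(\lambda^+)}$ and $\lambda^-\mapsto e^{-I^-(\lambda^-)}$. The whole question therefore reduces to evaluating these two one-dimensional second derivatives at $(\mu_y,\mu)$ and checking that the resulting diagonal matrix fails to be positive semidefinite.

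For each coordinate I would invoke the preceding proposition, which establishes that $x\mapsto e^{-I^+(x)}$ has second derivative $-1/\sigma^2$ at $x=\mu$ (since $\eta(\mu)=0$, $\kappa''(\eta(\mu))=\sigma^2$ and $I(\mu)=0$). Running the identical computation with $\kappa_y$ in place of $\kappa$ shows that $\lambda^+\mapsto e^{-I_y^+(\lambda^+)}$ has second derivative $-1/\sigma_y^2$ at $\lambda^+=\mu_y$, and running it for the lower-tail rate function $I^-$ gives $-1/\sigma^2$ at $\lambda^-=\mu$. Consequently the Hessian of $g$ at $(\mu_y,\mu)$ is $\mathrm{diag}(-1/\sigma_y^2,\,-1/\sigma^2)$, which is negative definite whenever $\sigma\neq 0$ and $\sigma_y\neq 0$. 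A negative definite Hessian rules out local convexity (it in fact certifies strict local concavity), so $g$ is non-convex at $(\mu_y,\mu)$; in particular its trace $-(1/\sigma^2+1/\sigma_y^2)$ is strictly negative, matching the value $g''((\mu_y,\mu))$ reported above.

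The only genuine care needed — and what I would treat as the main obstacle — is justifying that the preceding proposition transfers verbatim to $I_y^+$ and to $I^-$. For $I_y^+$ one must check that $\kappa_y$ enjoys the same regularity as $\kappa$ (strict convexity and smoothness, with $\kappa_y'(0)=\mu_y$ and $\kappa_y''(0)=\sigma_y^2$); since $\kappa_y$ is the log of a positive convex combination of moment generating functions of the conditional variables $L_y\mid\hat{Y}=y$, the cumulant identities and Lemma~\ref{app.conv_lemma} apply to it as well. For $I^-$ one must redo the argmax analysis over $t\in\mathbb{R}^-$ rather than $\mathbb{R}^+$: the sign of $f_x'(t)=x-\kappa'(t)$ flips, so the supremum is attained on the negative half-line for $x\leq\mu$, giving $I^-=I$ on that side, and the second-derivative evaluation at $x=\mu$ still yields $-1/\sigma^2$. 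Once these sign conventions are verified, the computation is purely routine.
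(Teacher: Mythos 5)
Your proof is correct and follows essentially the same route as the paper: the separable structure of $g$ reduces the question to the one-dimensional second derivatives, the preceding proposition (transferred to $\kappa_y$ for $I_y^+$ and, with the sign flip on the half-line, to $I^-$) gives the diagonal entries $-1/\sigma_y^2$ and $-1/\sigma^2$, and negativity yields non-convexity. Your extra care in justifying the transfer to $I_y^+$ and $I^-$ is exactly what the paper compresses into ``the same proposition applies with the relevant $\kappa$ or $\kappa_y$.''
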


Finally we remark that when $\mathcal{A}$ is finite, the sup of $\kappa'$ is bounded and therefore there are finite values of $\lambda$ for which $I(\lambda)=\infty$. Hence there are feasible points for any $\delta \in [0,1]$.

\subsection{Errors bounds on $Q$}\label{app.plogp2}

Let $P=(p_1,...,p_S)$ be a discrete distribution of size $\vert P \vert=S$, we want to estimate the quantity $Q(P)=\sum_{k=1}^S p_k \log^2(p_k)$ with $n$ \ac{iid} realizations of $P$. We denote $N_k$, the number of realizations for category $k$. 

In order to bound the $L_2$ error of $\hat{Q}=\sum_{k=1}^{S} (N_k/n) \log^2(N_k/n)$, we will use the bias variance decomposition of $\hat{Q}$:
\begin{align}
&\mathbb{E}[(\hat{Q}-Q)^2]=b(\hat{Q})^2+\Var(\hat{Q})\\
\text{where} \quad  &b(\hat{Q})=\mathbb{E}[\hat{Q}]-Q, \Var(\hat{Q})=\mathbb{E}[(\hat{Q}-\mathbb{E}[\hat{Q}])^2]
\end{align}

The analysis of these error terms is completely derived from \cite{discrete_functionals}. In particular, the method they use for entropy is close to this problem. They show that the bias term can be bounded by deriving smoothness modulus for the function $x \mapsto x \log(x)$, and that the variance term can be bounded using an Efron-Stein inequality. Here, we need to analyse $ x \mapsto x \log^2(x)$, which is technically more difficult as some nice properties such as the convexity of $x \log(x)$ is lost. Still we can show the following two lemmas with the proof further down: 

\begin{lemma} \label{lemma.variance}
\begin{equation}
\Var(Q(\hat{P}_n))= \mathcal{O}\Big(\frac{\log^4(n)}{n}\Big). 
\end{equation}
\end{lemma}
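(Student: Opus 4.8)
The plan is to control the variance with the Efron--Stein inequality, exploiting the fact that $\hat{Q}=\sum_{k=1}^{S} g(N_k/n)$ is a bounded-differences functional of the $n$ \ac{iid} samples, where $g(x)=x\log^2 x$ (with $g(0)=0$ by continuity). First I would set up the resampling structure: let $\hat{Q}^{(i)}$ denote the estimator obtained by replacing the $i$-th observation with an independent copy. Since changing a single sample only moves it from one category to another, at most two of the counts $N_k$ change, each by $\pm 1$. The Efron--Stein inequality then gives
\[
\Var(\hat{Q})\le \tfrac{1}{2}\sum_{i=1}^n \mathbb{E}\big[(\hat{Q}-\hat{Q}^{(i)})^2\big],
\]
so it suffices to bound $\vert \hat{Q}-\hat{Q}^{(i)}\vert$ uniformly.

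The key quantitative step is to bound the increment $\vert g((N+1)/n)-g(N/n)\vert$ for $g(x)=x\log^2 x$. For $N\ge 1$ I would use the mean value theorem together with $g'(x)=\log^2 x+2\log x$: since the relevant argument lies in $[1/n,1]$, we have $\vert g'(x)\vert\le \log^2 n+2\log n=\mathcal{O}(\log^2 n)$, whence $\vert g((N+1)/n)-g(N/n)\vert=\mathcal{O}(\log^2 n/n)$. The boundary case $N=0$ must be handled separately because $g$ is not differentiable at $0$; there one checks directly that $\vert g(1/n)-g(0)\vert=\log^2(n)/n$, which is of the same order. Because resampling one sample changes at most two terms, each by $\mathcal{O}(\log^2 n/n)$, we obtain the worst-case bound $\vert \hat{Q}-\hat{Q}^{(i)}\vert=\mathcal{O}(\log^2 n/n)$ with probability one.

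Finally I would assemble the pieces: plugging this almost-sure bound into Efron--Stein gives $\mathbb{E}[(\hat{Q}-\hat{Q}^{(i)})^2]=\mathcal{O}(\log^4 n/n^2)$ for each $i$, and summing the $n$ terms yields $\Var(\hat{Q})=\mathcal{O}(\log^4 n/n)$, as claimed. The main obstacle is the increment estimate for $x\log^2 x$: unlike $x\log x$, this function is not convex and its derivative is not monotone, so the smoothness arguments of \cite{discrete_functionals} for entropy do not transfer verbatim. The care is concentrated near $x\sim 1/n$, where $\log^2 x$ is largest, and in treating the $N=0\to 1$ jump by hand; once a uniform $\mathcal{O}(\log^2 n)$ bound on $\vert g'\vert$ over $[1/n,1]$ is secured, the remaining Efron--Stein computation is routine.
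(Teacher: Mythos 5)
Your proof is correct and takes essentially the same route as the paper's: the paper invokes Lemma 13 of \cite{discrete_functionals} (itself a corollary of Efron--Stein) to reduce the variance to $n\max_{0\le j\le n}\big(f(\tfrac{j+1}{n})-f(\tfrac{j}{n})\big)^2$ with $f(x)=x\log^2 x$, and then bounds that increment by $\mathcal{O}(\log^2(n)/n)$ via an explicit algebraic expansion, which matches your bounded-differences-plus-increment argument (with the mean value theorem on $[1/n,1]$ and a separate check at $N=0$ in place of their expansion). Both yield $\Var(\hat{Q})=n\cdot\mathcal{O}(\log^4(n)/n^2)=\mathcal{O}(\log^4(n)/n)$.
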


\begin{lemma} \label{lemma.bias}
\begin{equation}
b(\hat{Q})^2=\mathcal{O}\Big( \Big(\frac{\vert P \vert \log(n)}{n} \Big)^2 \Big). 
\end{equation}
\end{lemma}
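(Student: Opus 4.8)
The plan is to reduce the bias to a sum of Bernstein-polynomial approximation errors and then control each such error uniformly. Since each count $N_k$ is $\mathrm{Binomial}(n,p_k)$-distributed, writing $g(x)=x\log^2(x)$ with the convention $g(0)=0$, we have the exact identity $\mathbb{E}[g(N_k/n)]=B_n g(p_k)$, where $B_n g(p)=\sum_{j=0}^n \binom{n}{j}p^j(1-p)^{n-j}g(j/n)$ is the degree-$n$ Bernstein operator applied to $g$. Hence the bias decomposes coordinatewise as
\begin{equation}
b(\hat{Q})=\mathbb{E}[\hat{Q}]-Q=\sum_{k=1}^{\vert P\vert}\bigl(B_n g(p_k)-g(p_k)\bigr),
\end{equation}
so it suffices to bound the pointwise Bernstein error $\vert B_n g(p)-g(p)\vert$ uniformly in $p\in[0,1]$ and sum the $\vert P\vert$ contributions.

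The core of the argument, following the approximation-theoretic machinery of \cite{discrete_functionals}, is to show that $\sup_{p\in[0,1]}\vert B_n g(p)-g(p)\vert=\mathcal{O}(\log(n)/n)$. I would establish this by splitting $[0,1]$ at a threshold of order $\log(n)/n$. On the \emph{bulk} region, where $p$ is bounded below by this threshold, I would use the Voronovskaya-type expansion $B_n g(p)-g(p)=\tfrac{p(1-p)}{2n}g''(p)+R_n(p)$; since $g''(x)=2(\log x+1)/x$, the leading term equals $\tfrac{(1-p)(\log p+1)}{n}$, which is $\mathcal{O}(\log(n)/n)$ because $\vert\log p\vert\le\log n+\mathcal{O}(\log\log n)$ on this region, while the remainder $R_n$ must be controlled through a weighted (Ditzian--Totik) second modulus of smoothness $\omega_\phi^2(g,\cdot)$ with $\phi(x)=\sqrt{x(1-x)}$. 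The key input is the estimate $\phi^2(x)\vert g''(x)\vert=2(1-x)\vert\log x+1\vert=\mathcal{O}(\log(1/x))$, which is precisely the smoothness quantity the general bias bound consumes. On the \emph{boundary} region $p\lesssim\log(n)/n$, I would instead bound $\vert B_n g(p)-g(p)\vert$ directly: both $g(p)$ and $B_n g(p)$ a priori carry a spurious factor of logarithmic-square order, but expanding $\log^2(j/n)=\log^2 n-2\log n\,\log j+\log^2 j$ and using $\mathbb{E}[N_k/n]=p_k$ shows that the $\log^2 n$ contributions cancel exactly, leaving a remainder of order $\log(n)/n$.

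Summing the uniform pointwise bound over the $\vert P\vert$ coordinates then yields $\vert b(\hat{Q})\vert=\mathcal{O}(\vert P\vert\log(n)/n)$, and squaring gives the claimed $b(\hat{Q})^2=\mathcal{O}\bigl((\vert P\vert\log(n)/n)^2\bigr)$.

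I expect the main obstacle to be controlling the Bernstein error near the origin. Unlike the entropy kernel $x\log x$, the function $g(x)=x\log^2 x$ is concave on $(0,1/e)$ and convex on $(1/e,1)$, so convexity-based approximation arguments are unavailable, and its derivatives blow up like $\log(x)/x$ as $x\to 0$. The delicate point is that the naive pointwise error at $p\asymp 1/n$ would appear to be of order $\log^2(n)/n$, and only the exact cancellation of the top-order logarithmic terms---made rigorous via the smoothness-modulus estimates above---brings it down to the required $\log(n)/n$. Obtaining clean, $p$-uniform constants through this cancellation, rather than a weaker $\log^2(n)$ rate, is where the real work lies.
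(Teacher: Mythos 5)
Your reduction of the bias to a sum of pointwise Bernstein-polynomial errors, $b(\hat{Q})=\sum_k (B_n g(p_k)-g(p_k))$, is exactly the paper's starting point, and you correctly identify where the difficulty lives (the blow-up of the derivatives of $x\log^2 x$ near the origin and the loss of convexity). But the core estimate $\sup_p\vert B_n g(p)-g(p)\vert=\mathcal{O}(\log(n)/n)$ is where all the work is, and your two sub-arguments for it are asserted rather than established, and one of them is circular. In the bulk region you invoke a Voronovskaya expansion whose leading term is indeed $\mathcal{O}(\log(n)/n)$, but the remainder $R_n(p)$ is \emph{not} automatically $o(1/n)$ uniformly down to $p\asymp\log(n)/n$ for a function with $g''(x)\asymp\log(x)/x$; your stated fallback is to control $R_n$ "through a weighted Ditzian--Totik second modulus of smoothness," which is precisely the quantity whose estimation constitutes the entire proof in the paper. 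If you are going to bound $\omega^2_\varphi(g,n^{-1/2})$, you do not need Voronovskaya at all --- the general bound $\vert B_n f(x)-f(x)\vert\le\tfrac{5}{2}\,\omega^2_\varphi(f,n^{-1/2})$ already gives the result --- so the bulk argument either collapses into the paper's argument or is missing. In the boundary region, the cancellation of the $p\log^2 n$ terms that you describe is real, but after it one is left with a cross term of the form $2\log n\cdot\bigl(\mathbb{E}[(N/n)\log(N/n)]-p\log p\bigr)$, i.e., $\log n$ times the per-cell entropy bias; getting $\mathcal{O}(\log(n)/n)$ from this requires a uniform $\mathcal{O}(1/n)$ bound on that entropy bias near the boundary, which is itself a nontrivial modulus-of-smoothness fact (Lemma~8 of the cited reference) that you use silently.

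For comparison, the paper's route is: decompose $f(x)=x\log^2 x$ as $f(x)=x\log^2(x/e)+2x\log(x)-x$, use subadditivity of $\omega^2_\varphi$ together with the known modulus of the entropy kernel $x\log x$, observe that $g(x)=x\log^2(x/e)$ is strictly concave on $(0,1]$ (since $g''(x)=2\log(x)/x<0$), and exploit that concavity to reduce the computation of $\omega^2_\varphi(g,t)$ to evaluations at the endpoints of the feasible interval, handled explicitly over three regimes; this yields $\omega^2_\varphi(g,t)\sim -8\log(2)\,t^2\log t$ and hence the $\mathcal{O}(\log(n)/n)$ pointwise rate at $t=n^{-1/2}$. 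To complete your proof you would need either to carry out this modulus computation (at which point your bulk/boundary split is superfluous), or to supply a genuine uniform remainder bound for the Voronovskaya expansion of $x\log^2 x$ on $[\log(n)/n,1]$ together with a self-contained $\mathcal{O}(1/n)$ boundary estimate for the entropy-kernel bias.
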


Using these two lemmas, we can directly conclude that $\mathbb{E}[(\hat{Q}-Q)^2]=\mathcal{O}(\log^4(n)/n)$.

Note that we will directly use some elements already derived in \cite{discrete_functionals}, and will only show here the parts where special care is needed.  

\begin{proof}[Proof of Lemma \ref{lemma.variance}]

Let $f: x \mapsto x \log^2(x)$. \cite{discrete_functionals} analyze the statistic of the form $F(P)=\sum_{k=1}^S f(p_k)$. 
We apply Lemma $13$ of \cite{discrete_functionals} for discrete functionals of $P$, which is derived from a corollary of the Efron-Stein inequality, on $Q$:
\begin{equation}
\Var(\hat{Q}) \leq n \max_{0\leq j \leq n} ( f(\frac{j+1}{n}) - f(\frac{j}{n}))^2.
\end{equation}
We will look for $n$ in $\mathbb{N}^*$ at the function $g : x \mapsto \frac{x+1}{n} \log^2(\frac{x+1}{n}) - \frac{x}{n} \log^2(\frac{x}{n})$ for $x \in [0,n]$. We have
\begin{align}
&g(x)=\frac{x}{n} \log(\frac{x+1}{x}) (\log(x(x+1))-\log(n^2))+\frac{1}{n}\log^2(\frac{x+1}{n}) \\
&=\!\frac{x}{n} \!\log(1\!+\!\frac{1}{x}) (\log(x(x\!+\!1))\!-\!\log(n^2))+\frac{1}{n}\!(\log^2(x\!+\!1) \!+\! \log^2(n) \!-\! 2 \log(x\!+\!1) \log(n)). \label{eq.var_last}
\end{align}
We first look at the term in $1/n \cdot h(x)$. We have that $h'(x)=2(\log(x+1)-\log(n))/(x+1)$ which is $0$ for $x=n-1$. Thus $\argmax_{x \in [0,n]} \vert h(x) \vert \in \{0, n-1, n\}$. We evaluate $h$ at these points: $h(n-1)=0$, $h(n)=\log^2(1+1/n) \sim 1/n^2$, and $h(0)=\log(n)^2$. Hence the max of $\vert h(x) \vert$ over $[0,n]$ is $\log^2(n)$
Using on \eqref{eq.var_last} the inequality $\log(1+1/x)\leq 1/x$, and because $x \mapsto \log(x(x+1))$ is increasing over $\mathbb{R}^+$, we obtain 
\begin{equation}
\vert g(x) \vert\leq \frac{1}{n} (\log(n(n+1))+\log(n^2)) + \frac{1}{n} \log^2(n)=\mathcal{O} \Big( \frac{\log^2(n)}{n} \Big)
\end{equation}
Finally
\begin{equation}
\Var(\hat{Q}) = n \mathcal{O} \Big( \frac{\log^4(n)}{n^2} \Big) =\mathcal{O} \Big( \frac{\log^4(n)}{n} \Big).
\end{equation}
\end{proof}

\begin{proof}[Proof of Lemma \ref{lemma.bias}]

In order to bound the bias \cite{discrete_functionals} use the fact that for any $f$,
\begin{align}
&\mathbb{E}[\hat{Q}]-Q=\sum_{k=1}^S (B_n(f)(p_k) - f(p_k))\\
\text{with} \quad &B_n(f)(x)= \sum_{i=1}^n f(\frac{i}{n}) \binom{i}{n}x^i (1-x)^{n-i}.
\end{align}
The function $B_n(f)(x)$ is the Bernstein polynomial of $f(x)$. Lemma $5$. of \cite{discrete_functionals} shows that for $\varphi(x)=\sqrt{x(1-x)}$:
\begin{align}
&\vert \mathbb{E}[B_n(f)(x)] -f(x)\vert \leq \frac{5}{2}\omega^2_\varphi(f,n^{-1/2})\\
\text{with} \quad & \omega^2_\varphi(f,t)\!=\!\sup\{\vert \! f(u)\!+\!f(v)\!-\!2f(\frac{u+v}{2}) \! \vert , (u,v) \in [0,1]^2, \vert u-v \vert \!\leq\! 2t\varphi(\frac{u\!+\!v}{2})\!\}.
\end{align}
The quantity $\omega^2_\varphi$ is the second-order Ditzian-Totik modulus of smoothness of $f$. It is shown in Lemma $8$ of \cite{discrete_functionals} that for the function $x \mapsto x \log(x)$ (which corresponds the the entropy), $\omega^2_\varphi(f,t)=t^2 \log(4)/(1+t^2)$. We will use a proof similar to Lemma 8 to derive the modulus of smoothness for $x \mapsto x^2 \log(x)$.

In addition, we remark using the triangle inequality that for any $f$ and $g$ continuous functions on $[0,1]$, then 
\begin{equation}
\omega^2_\varphi(f+g,t) \leq \omega^2_\varphi(f,t)+\omega^2_\varphi(g,t).
\end{equation}
Let $f: x \mapsto x \log^2(x)$ for $x$ in $[0,1]$. By expanding $g(x)=x \log^2(x/e) = x \log^2(x)+ x - 2x \log(x)$ we see that we can rewrite $f$ as $f(x)=x \log^2(x/e)+2x \log(x) -x$. Therefore using the previous remark and because $\omega^2_\varphi(x \mapsto x,t)=0$ we have 
\begin{equation}
\omega^2_\varphi(f,t) \leq \omega^2_\varphi(g,t)+2\omega^2_\varphi(f,t)+\omega^2_\varphi(x \mapsto x,t)\leq \omega^2_\varphi(f,t)+\frac{2t^2 \log(4)}{1+t^2}. \label{eq.modulus_bound}
\end{equation}
It remains to upper bound $\omega^2_\varphi(g,t)$. 

First we will show that $g$ is concave on $(0,1]$. We compute the first and second derivative of $g$ for $x$ in $(0,1]$:
\begin{align}
g'(x)&=2 \log(\frac{x}{e}) + \log^2 (\frac{x}{e})=\log^2(x)-1,\\
\text{thus} \quad g''(x)&=2\frac{\log(x)}{x} < 0.
\end{align}
Hence $g$ is strictly concave over $(0,1]$.

Now we will upper bound $\omega^2_\varphi(g,t)$. Let $t$ in $[0,1/2]$. \emph{To be clear, we will use the same language and logic developed in \cite{discrete_functionals} for the proof of Lemma 8 to make the comparison easier}. 
Defining $M=(u+v)/2 \in [0,1]$, then the computation of the second order modulus is an optimization over the regime $\vert u-v \vert \leq 2t \sqrt{M(1-M)}$. Equivalently, it is in the interval $[M (1- \Delta), M(1 + \Delta)] \cap [0,1]$, where $\Delta=t \sqrt{(1-M)/M}$. Because $g$ is strictly concave on $[0,1]$, the maximum of $\vert g(x)+g(y)-2g((x+y)/2) \vert$  is reached at the boundaries of the above feasible interval. We have 
\begin{align}
M(1-\Delta) \geq 0 \Leftrightarrow M \geq \frac{t^2}{1+t^2}, \\
M(1+\Delta) \leq 1 \Leftrightarrow M \leq \frac{1}{1+t^2}.
\end{align}
Therefore the optimization problem defined by the second order modulus of smoothness is equivalent to the maximization of $h(u,v)=\vert g(u)+g(v)-2g((u+v)/2) \vert$ over three different regimes:
\begin{align}
&\text{Regime A:} \quad u=0,v=2M,0\leq M \leq \frac{t^2}{1+t^2}\\
&\text{Regime B:} \quad u=2M-1,v=1, 1 \geq M \geq \frac{1}{1+t^2}\\
&\text{Regime C:} \quad u=M(1+\Delta),v=M(1-\Delta),M\in [\frac{t^2}{1+t^2}, \frac{1}{1+t^2}].
\end{align}

Over the regime $A$: 
\begin{align*}
h(u,v)&=\vert 2M \log^2(\frac{2M}{e})-2M \log^2(\frac{M}{e}) \vert \\
&=2M \vert(\log(\frac{2M}{e})-\log(\frac{M}{e})) (\log(\frac{2M}{e})+\log(\frac{M}{e})) \vert \\
&=2M \log(2) \log(\frac{1}{2} (\frac{e}{M})^2) \\
&=2M \log(2) (2-\log(2)- 2\log(M)). \\
\end{align*}
The function $M \mapsto M$ reaches its max over regime $A$ at $t^2/(1+t^2)$. The function $x \mapsto -x \log(x)$ is positive increasing until $x=1/e$. Hence because $t^2/(1+t^2) \leq 1/e$ for $t \in [0,1/2]$, $M \mapsto - M \log(M)$ reaches its max over regime $A$ also at $t^2/(1+t^2)$. Thus over regime $A$
\begin{equation}
h(u,v)\leq \frac{2t^2}{1+t^2}\log(2)(2-\log(2)+2\log(1+ \frac{1}{t^2})) =_{t \rightarrow 0} \frac{-8\log(2)t^2\log(t)}{1+t^2}+o(-t^2\log(t))
\end{equation}

Over the regime $B$:
\begin{align*}
&h(u,v)=\vert (2M-1) \log^2(\frac{2M-1}{e})+1-2M\log^2(\frac{M}{e}) \vert \\
&=\vert (2M\!-\!1)(\log^2(2M\!-\!1)+1\!-\!2 \log(2M\!-\!1))+1-2M(\log^2(M)+1-2 \log(M))
\vert \\
&=\vert 2M(\log^2(2M-1)- \log^2(M))+4M(\log(M)-\log(2M-1))+2\log(2M-1)-\log^2(2M-1)  \vert \\
&\leq \vert 2M(\log^2(2M-1)- \log^2(M))\vert \!+\! \vert 4M(\log(M)-\log(2M-1))\vert \!+\! \vert 2\log(2M-1) -\log^2(2M-1) \vert \\
&=\vert 2M\log(\frac{2M-1}{M})\log(M(2M-1))\vert \!+\! \vert 4M(\log(\frac{M}{2M-1})\vert \!+\! \vert \log(2M-1)(2-\log(2M-1)) \vert
\end{align*}
We will upper bound each of those three terms. 
First note that as $t \in [0,1/2]$, $t^2/(1-t^2)\leq 1/3$ and $(1+t^2)^2/(1-t^2)\leq 25/12$
Because $-\log$ is decreasing and $2M-1\leq1$, we have that 
\begin{equation*}
\vert  \log(2M-1) \vert=- \log(2M-1) \leq  \log(\frac{1+t^2}{1-t^2})= \log(1+\frac{2t^2}{1-t^2}) \leq \frac{2t^2}{1-t^2}.
\end{equation*}
Hence 
\begin{equation}
\vert \log(2M-1)(2-\log(2M-1)) \vert \leq \frac{t^2}{1-t^2} \frac{16}{3}
\end{equation}

For $M \leq 1$, we have that $(2M-1)/M \leq 1$. Hence 
\begin{equation*}
\vert 4M\log(\frac{M}{2M-1})\vert=4M\log(\frac{M}{2M-1}) \leq 4 \log(\frac{M}{2M-1}) \leq 4 \log(1+\frac{t^2}{1-t^2}) \leq \frac{4t^2}{1-t^2}.
\end{equation*}
The functions $M \mapsto 1/(M(2M-1))$ and $M \mapsto M/(2M-1)$ are decreasing in $M$ over $(1/2,1]$ and bigger than $1$. Therefore 
\begin{align*}
 &\vert \log(M(2M-1)) \vert = \log(\frac{1}{M(2M-1)})\leq \log(\frac{(1+t^2)^2}{1-t^2})\leq\log(\frac{25}{12})\\
 \text{and} \quad &\vert \log(\frac{2M-1}{M}) \vert=\log(\frac{M}{2M-1}) \leq \log(\frac{1}{1-t^2})\leq \frac{t^2}{1-t^2}
\end{align*}
Combining everything we have over regime $B$ using that $M\leq 1$
\begin{equation}
h(u,v)\leq \frac{(4+16/3+\log(25/12))t^2}{1-t^2} =_{t \rightarrow 0} o( - t^2 \log(t))
\end{equation}

Over the regime C:
\begin{align*}
&h(u,v)=M\vert (1-\Delta)\log^2(\frac{M}{e}(1-\Delta))+(1+\Delta)\log^2(\frac{M}{e}(1+\Delta))-2\log^2(\frac{M}{e})   \vert \\
&=\!M \vert (1\!-\!\Delta) \log^2(1\!-\!\Delta)\!+\!(1\!+\!\Delta)\log^2(1\!+\!\Delta) \!-\! 2 \log(\frac{M}{e})( (1\!-\!\Delta) \log(1\!-\!\Delta)\!+\! (1\!+\!\Delta) \log(1\!+\!\Delta)) \vert \\
&=\!\frac{t^2}{t^2+\Delta^2} \vert (1\!-\!\Delta) \log^2(1\!-\!\Delta)\!+\!(1\!+\!\Delta)\log^2(1\!+\!\Delta) \!-\! 2 \log(\frac{M}{e})( (1\!-\!\Delta) \log(1\!-\!\Delta)\!+\! (1\!+\!\Delta) \log(1\!+\!\Delta)) \vert \\
 &\leq \!\frac{t^2}{\Delta^2} \vert (1\!-\!\Delta) \log^2(1\!-\!\Delta)\!+\!(1\!+\!\Delta)\log^2(1\!+\!\Delta)\vert \!+\! \frac{2(1-\log(M))t^2}{\Delta^2} \vert (1\!-\!\Delta) \log(1\!-\!\Delta)\!+\! (1\!+\!\Delta) \log(1\!+\!\Delta) \vert \\
  &\leq \!\frac{t^2}{\Delta^2} \vert (1\!-\!\Delta) \log^2(1\!-\!\Delta)\!+\!(1\!+\!\Delta)\log^2(1\!+\!\Delta)\vert \!+\! \frac{2(1+\log(1+\frac{1}{t^2}))t^2}{\Delta^2} \vert (1\!-\!\Delta) \log(1\!-\!\Delta)\!+\! (1\!+\!\Delta) \log(1\!+\!\Delta) \vert 
\end{align*}
The functions $g_1 : \Delta \mapsto((1+\Delta)\log(1+\Delta)+(1-\Delta)\log(1-\Delta))/\Delta^2$ and $g_2 : \Delta \mapsto((1+\Delta)\log^2(1+\Delta)+(1-\Delta)\log^2(1-\Delta))/\Delta^2$ are both continuous over $[0,1]$ hence bounded with a max reached respectively (can be seen graphically, or by looking at the derivative) at $1$ and $0$. With $g_1(1)=2\log(2)$, and for $\Delta \rightarrow 0$:
\begin{equation}
g_2(\Delta)=\frac{(1-\Delta)(\Delta^2+\Delta^3+o(\Delta^3))+(1+\Delta)(\Delta^2-\Delta^3+o(\Delta^3))}{\Delta^2}\longrightarrow_{\Delta \rightarrow 0}2.
\end{equation}
Finally
\begin{equation}
h(u,v) \leq_{t \rightarrow 0} -8\log(t)t^2 +o(-\log(t)t^2).
\end{equation}

Hence using these bounds over regime $A$, $B$, and $C$, and remarking that it is reached for regime $A$ on $t^2/(1+t^2)$, we obtain
\begin{equation}
\omega^2_\varphi(g,t)=_{t \rightarrow 0} \frac{-8\log(2)t^2\log(t)}{1+t^2}+o(-t^2\log(t)).
\end{equation}
By applying on Equation \eqref{eq.modulus_bound} the upper bounds we derived and taking $t=n^{-1/2}$, we can conclude
\begin{align}
\vert b(\hat{Q}) \vert &\leq \sum_{k=1}^S\vert \mathbb{E}[B_n(f)(p_k)]  -f(p_k)\vert \\ 
&\leq S\frac{5}{2}\omega^2_\varphi(f,n^{-1/2}) \\
& S \leq_{n \rightarrow \infty} 10 \log(2) \frac{\log(n)}{n} +o(\frac{\log(n)}{n})\\
& =_{n \rightarrow \infty} \mathcal{O} \Big( \frac{S\log(n)}{n}\Big) 
\end{align}

\end{proof}

\section{Using partitions of protected attributes}

\subsection{Consistency of $u_{I}^{(q^*)}$}\label{app.consistency}

The main idea of this proof, is that when the number of samples $n$ increases, the probability that $\min_{\mathcal{A},\mathcal{Y}} N_{a,y}< \tau$ goes to $0$ as $n \rightarrow \infty$. And when $\min_{\mathcal{A},\mathcal{Y}} N_{a,y} \geq \tau$ then by definition $\UFIind[(q^*)]=\hat{u}^*$ which is consistent.

We define for $a$ in $\mathcal{A}$ the modified empirical estimator $\hat{p}_A(A=a)$, with $\hat{p}_A(A=a)=N_a/n$ if $N_a>0$ and $1$ otherwise. Using Chebyshev's inequality and because $N_{a,y} \sim \mathcal{B}(p_{A,\hat{Y}}(a,y),n)$ we have for $\epsilon >0$:
\begin{align}
&\Pr(\vert \hat{p}_A(a) \!-\! p_A(a) \vert \! \geq \! \epsilon) \!=\!\Pr(\vert \hat{p}_A(a) \!-\! p_A(a) \vert \! \geq \! \epsilon, N_a \!=\!0)\!+\!\Pr(\vert \hat{p}_A(a) \!-\! p_A(a) \vert \! \geq \! \epsilon, N_a \!>\!0) \\
&\leq \Pr(N_a=0) + \Pr(\vert \frac{N_a}{n} - p_A(a) \vert \geq \epsilon, N_a >0)\\
&\leq (1-p_A(a))^n + \Pr(\vert N_a - n p_A(a) \vert \geq n\epsilon) \\
& \leq (1-p_A(a))^n  + \frac{p_A(a)(1-p_A(a))}{n\epsilon^2} \\
& \longrightarrow_{n \rightarrow \infty} 0,
\end{align}
which means that $\hat{p}_{A}$ is a consistent estimator of $p_{A}$. By Slutsky's Theorem and because $p_A(a)>0$, $\hat{p}_{A,\hat{Y}}(a,y)/\hat{p}(a)$ is a consistent estimator of $p_{\hat{Y} \mid A} (y \mid a)$. Hence by the Continuous Mapping Theorem using the continuous functions $\max$, $\min$, $\log$ and $\vert \cdot \vert$, we have that $\hat{\UFI}$ the estimator using the modified empirical probabilities, is a consistent estimator of $\UFI$. 

Now for the consistency of $\UFIind[(q^*)]$, for $\tau>0$ and $\epsilon>0$, we have 
\begin{align}
\Pr(\vert \UFIind[(q^*)] - \UFI \vert > \epsilon) &= 1 - \Pr(\vert \UFIind[(q^*)] - \UFI \vert \leq \epsilon) \\
& \leq  1 - \Pr(\vert \UFIind[(q^*)] - \UFI \vert \leq \epsilon , \min_{\mathcal{A},\mathcal{Y}} N_{a,y} > \tau)\\
& =  1 - \Pr(\vert \hat{u} - \UFI \vert \leq \epsilon , \min_{\mathcal{A},\mathcal{Y}} N_{a,y} > \tau)\\
& = \Pr( [\vert \hat{u} - \UFI > \leq \epsilon] \cup [\min_{\mathcal{A},\mathcal{Y}} N_{a,y} \leq \tau])\\
& \leq \Pr(\vert \hat{u} - \UFI \vert > \epsilon) + \Pr(\min_{\mathcal{A},\mathcal{Y}} N_{a,y} \leq \tau)\\
& \leq \Pr(\vert \hat{u} - \UFI \vert > \epsilon) + \Pr(\exists (a,y) \in \mathcal{A} \times \mathcal{Y}, N_{a,y} \leq \tau)\\
&  \leq \Pr(\vert \hat{u} - \UFI \vert > \epsilon) + \sum_{(a,y) \in \mathcal{A} \times \mathcal{Y}} \Pr(N_{a,y} \leq \tau).
\end{align}
The first term goes to $0$ by the consistency of $\hat{u}$. We will show that the second term also goes to zero as $n\rightarrow \infty$. 
For $\tau>0$ we apply Hoeffding's inequality on $N_{a,y} \sim \mathcal{B}(p_{A,\hat{Y}}(a,y),n)$ to obtain the following concentration inequality:
\begin{equation}
\Pr(N_{a,y} \leq \tau) \leq \exp(-2n(p_{A,\hat{Y}}(a,y)-\frac{ \tau}{n})) \longrightarrow_{n \rightarrow \infty}0.
\end{equation}
Therefore because $\vert \mathcal{A} \vert \vert \mathcal{Y} \vert$ is finite we have the consistency of $\UFIind[(q^*)]$.

\subsection{Some intuition on the impact of grouping protected attributes} \label{app.partitions}
Let $q$ a partition and $\rho$ a partition coarser than $q$, which means that every element of $q$ is a subset of some element of $\rho$. We want to somewhat relate the approximations and inequalities obtained using $\rho$ or $q$. Using the fact that $\rho$ is coarser than $q$, we can define for any $r\in \rho$ the partition $q_{r}=\{t \in q \mid t \subset r\}$ of $r$. This is a partition because the $t$ are disjoints, cover the whole set so $r$ as well in particular, and any $t \in q$ can either be a subset of $r$ or disjoint as $\rho$ is coarser than $q$.

We redefine the random variable $L$ for these partitions. We define $L^{(q)}=\log(p_A/\prod_{t \in q} p_{A_t})\circ A$ and $L^{(q,r)}=\log(p_{A_r}/\prod_{t \in q_r} p_{A_t}) \circ A$.

\begin{proposition}
We have
\begin{equation}
L^{(q)}=L^{(\rho)}+\sum_{r \in \rho \setminus q} L^{(q,r)}.
\end{equation}
Therefore 
\begin{equation}
\mathbb{E}[L^{(q)}]=\mathbb{E}[L^{(\rho)}]+\sum_{r \in \rho \setminus q} \mathbb{E}[L^{(q,r)}] \label{eq.part_mean}
\end{equation}
and 
\begin{equation}
\begin{split}
&\Var(L^{(q)})=\Var(L^{(\rho)})+\sum_{r \in \rho \setminus q } \Var(L^{(q,r)})+  \\ 
&2 \sum_{r \in \rho \setminus q} \! \Cov(L^{(\rho)},L^{(q,r)}) + \! \sum_{\substack{(r_1,r_2) \in (\rho \setminus q)^2 \\ r_1 \neq r_2}} \! \Cov(L^{(q,r_1)},L^{(q,r_2)}).
\end{split}
\end{equation}
\end{proposition}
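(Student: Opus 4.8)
The plan is to first establish a pointwise identity between the random variables and then read off the mean and variance formulas as immediate consequences. The key combinatorial observation is that since $\rho$ is coarser than $q$, every block $t \in q$ is contained in exactly one block $r \in \rho$, so the family $\{q_r\}_{r \in \rho}$ itself partitions $q$. In particular the product over all blocks of $q$ factors according to which block of $\rho$ each block of $q$ sits inside: $\prod_{t \in q} p_{A_t}(A_t) = \prod_{r \in \rho} \prod_{t \in q_r} p_{A_t}(A_t)$.

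First I would write out the difference $L^{(q)} - L^{(\rho)}$ directly from the definitions. Since both variables share the same numerator $p_A(A)$, this difference equals $\log\big(\prod_{r \in \rho} p_{A_r}(A_r) / \prod_{t \in q} p_{A_t}(A_t)\big)$. Substituting the factorization above and splitting the logarithm over the blocks $r \in \rho$ yields $\sum_{r \in \rho} \log\big(p_{A_r}(A_r) / \prod_{t \in q_r} p_{A_t}(A_t)\big) = \sum_{r \in \rho} L_r^{(q_r)}$. Finally, for each $r \in \rho \cap q$ the induced partition is trivial, $q_r = \{r\}$, so $L_r^{(q_r)} = \log(p_{A_r}(A_r)/p_{A_r}(A_r)) = 0$; these terms drop out, leaving the sum restricted to $r \in \rho \setminus q$ and establishing the first displayed identity.

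The mean identity then follows directly by applying linearity of expectation to the established pointwise identity. For the variance, I would view the identity as expressing $L^{(q)}$ as a sum of the $|\rho \setminus q| + 1$ random variables $L^{(\rho)}$ and $\{L_r^{(q_r)}\}_{r \in \rho \setminus q}$, all of which are functions of $A$ and hence admit a well-defined joint law. Applying the standard expansion $\Var(\sum_k Z_k) = \sum_k \Var(Z_k) + \sum_{k \neq l}\Cov(Z_k, Z_l)$ and then separating the cross terms that involve $L^{(\rho)}$ from those between two distinct $L_r^{(q_r)}$ reproduces the stated four-term formula.

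There is no substantive analytic obstacle here: the entire argument is a deterministic algebraic identity between functions of $A$, and the probabilistic statements are automatic consequences. The only point requiring a little care is the combinatorial bookkeeping, namely verifying that $\{q_r\}_{r\in\rho}$ genuinely partitions $q$ (which uses precisely that $\rho$ is coarser than $q$, so each $t$ lies in a unique $r$) and that the degenerate blocks $r \in \rho \cap q$ contribute exactly zero, which is what justifies restricting the summation to $\rho \setminus q$.
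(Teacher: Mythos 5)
Your proof is correct and follows essentially the same route as the paper: both arguments regroup the product $\prod_{t\in q}p_{A_t}$ according to the induced partitions $q_r$ of each block $r\in\rho$, take logarithms, observe that the blocks $r\in\rho\cap q$ contribute zero since $q_r=\{r\}$ there, and then obtain the mean and variance statements by linearity of expectation and the standard bilinear expansion of the variance of a sum. The only cosmetic difference is that you compute the difference $L^{(q)}-L^{(\rho)}$ directly while the paper multiplies and divides by $\prod_{r\in\rho}p_{A_r}$ before taking logs; these are the same manipulation.
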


\begin{proof}
Using the partitions $q_r$ we can group together the $p_{A_t}$ terms:
\begin{align*}
\frac{p_A(a)}{\prod_{t \in q}p_{A_t}(a_t)}&=\frac{p_A(a)}{\prod_{t \in q}p_{A_t}(a_t)}\frac{\prod_{r \in \rho} p_{A_r}(a_r)}{\prod_{r \in \rho} p_{A_r}(a_r)} \\
&=\frac{p_A(a)}{\prod_{r \in \rho} p_{A_r}(a_r)}\prod_{r \in \rho \setminus q} \frac{p_{A_r}(a_r)}{\prod_{t \in q_r} p_{A_t}(a_t)}.
\end{align*}
Then by taking the $\log$, we obtain the proposition. 
\end{proof}
Looking at \eqref{eq.part_mean}, we have the interesting property that if $\rho$ is a coarser partition than $q$, then $C(A^{(\rho)}) \leq C(A^{(q)})$. This corresponds to the intuition that taking coarser partition decreases some measure of independence, which is here the total correlation. 

We define $\ell_t=\log(p_{A_t}/\prod_{k \in t} p_{A_k})$ and $L_t=\ell_t(A_t)$. By applying the previous proposition, and by remarking that any partition is coarser than the set of all singletons we obtain the following corollary. 

\begin{corollary}
For any partition $q \in \mathcal{Q}$ we have
\begin{align}
L=L^{(q)}+\sum_{\substack{t \in q \\ \vert t \vert>1}} L_t
\end{align}
\end{corollary}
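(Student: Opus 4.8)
The plan is to obtain this as an immediate specialization of the preceding proposition, taking the partition into singletons in the role of the finer partition (the proposition's $q$), and the given partition $q$ in the role of the coarser partition (the proposition's $\rho$). First I would write $\mathbf{1}=\{\{1\},\dots,\{d\}\}$ for the singleton partition of $[d]$ and note that it is the finest partition, so that every $q \in \mathcal{Q}$ is coarser than $\mathbf{1}$. The proposition then applies verbatim with $\mathbf{1}$ substituted for its $q$ and the corollary's $q$ substituted for its $\rho$, yielding the identity $L^{(\mathbf{1})}=L^{(q)}+\sum_{r \in q \setminus \mathbf{1}} L_r^{(\mathbf{1}_r)}$, where $\mathbf{1}_r=\{t \in \mathbf{1}\mid t \subset r\}$ is the induced sub-partition of the block $r$.

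Next I would match up the three families of terms in this identity with those appearing in the claim. Since $\ell^{(\mathbf{1})}=\log(p_A/\prod_{t \in \mathbf{1}} p_{A_t})=\log(p_A/\prod_{k=1}^d p_{A_k})=\ell$, the left-hand side $L^{(\mathbf{1})}$ is exactly the original $L$. For each block $r$ of $q$, the induced sub-partition $\mathbf{1}_r$ consists precisely of the singletons $\{\{k\}:k\in r\}$, so $\prod_{t \in \mathbf{1}_r} p_{A_t}=\prod_{k \in r} p_{A_k}$; hence $\ell_r^{(\mathbf{1}_r)}=\log(p_{A_r}/\prod_{k\in r}p_{A_k})=\ell_r$, which gives $L_r^{(\mathbf{1}_r)}=L_r$ with the $L_t$ defined just above the corollary.

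Finally I would reconcile the index set of the sum: a block $r$ of $q$ lies in $\mathbf{1}$ iff it is a singleton, so $q \setminus \mathbf{1}=\{t \in q : \vert t\vert>1\}$ — equivalently, any singleton block contributes $\ell_r^{(\mathbf{1}_r)}=\log(p_{A_r}/p_{A_r})=0$ and drops out automatically. Substituting these identifications produces $L=L^{(q)}+\sum_{t \in q,\,\vert t\vert>1} L_t$, which is the claim, with the degenerate case $q=\mathbf{1}$ covered since the sum is then empty. There is no genuine obstacle in this argument, as it is pure bookkeeping; the only point requiring a little care is to verify that the nontrivial blocks of $q$ are exactly those absent from the singleton partition, so that the within-block corrections collapse to the $L_t$ in the statement.
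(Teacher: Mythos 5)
Your argument is correct and is precisely the paper's own route: the corollary is obtained by applying the preceding proposition with the singleton partition as the finer partition and the given $q$ as the coarser one, then observing that the blocks of $q$ absent from the singleton partition are exactly those with $\vert t\vert>1$. Your write-up simply makes explicit the bookkeeping that the paper leaves in one sentence.
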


Which is why when using a partition $q$ to group together the protected attributes, we may be able to reduce the original variance of $L$ and $L_y$, hence reduce $\smax$ which is an increasing function of the variances. The decrease in $\smax$ is not guaranteed when using a coarser partition because of the covariance terms, but empirically this is often the case. 

\section{Additional Experiments and Plots} \label{app.experiments}

In this section we will present additional plots from the experiments conducted in Section \ref{sec.experiments}. 

The experiments were conducted on a machine with a i7 7700HQ CPU, and 8gb of ram. Running all the experiments took about 1 full day. 

The main dataset used is a publicly available sample from the 1990 US census. The US census is legally mandated, hence every citizen has to give its information to the US government. No identifiable information is available, and the samples were randomly chosen from the original full dataset. Full information is available at the \hyperlink{https://archive.ics.uci.edu/ml/datasets/US+Census+Data+(1990)}{UCI archive link}.

We reproduce here Figure \ref{fig.comp_conv} and Figure \ref{fig.exp_part} on Figure \ref{fig.comp_conv_CI} and Figure \ref{fig.exp_part_CI} adding the $1^{\text{st}}$ and $10^{\text{th}}$ decile but only using $\alpha=1$ for $\UFIB$ and $\tau=10$ to make it readable. 

\begin{figure}[ht] 
\begin{center} 
\centerline{\includegraphics[width=1.3\columnwidth]{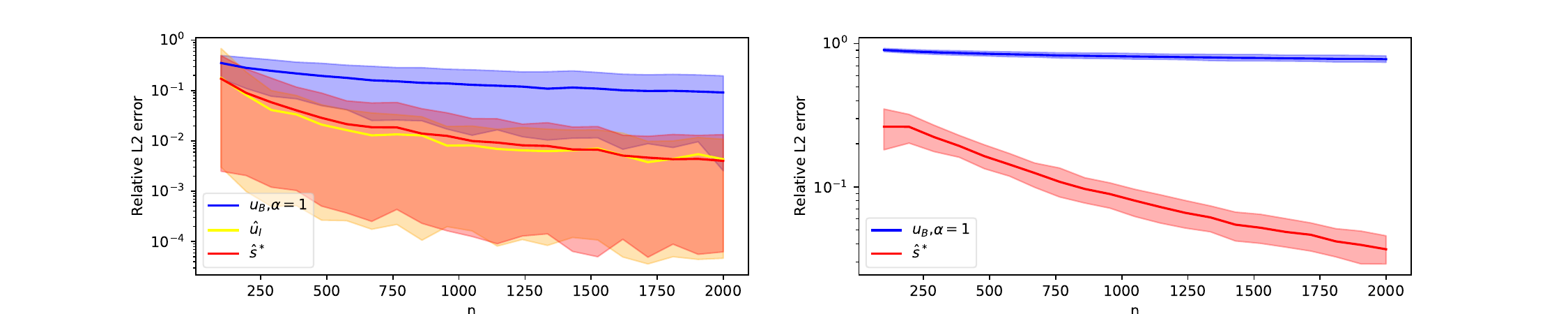}} 
\caption{Average $L_2^r$ convergence rate, on real data for the left one, and synthetic data for the right one. In all these graphs the intervals represent the $1^{\text{st}}$ and $10^{\text{th}}$ decile.}\label{fig.comp_conv_CI}
\end{center}
\end{figure}

\begin{figure}[ht] 
\begin{center}
\centerline{\includegraphics[width=1.3\columnwidth]{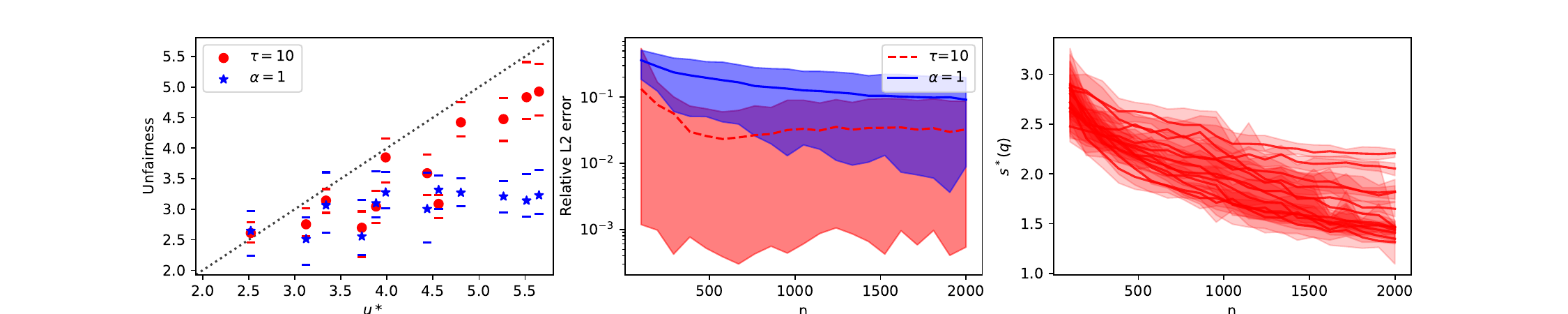}} 
\caption{In the left-most plot each point with same shape and color correspond a different $D_i$, with the estimators values taken at $n=2000$. The middle plot is the average $L_2^r$ error rate on the real data. The rightmost plot is the average evolution of $s^*(q^*)$ for $\tau=10$ as $n$ increases. In all these graphs the intervals represent the $1^{\text{st}}$ and $10^{\text{th}}$ decile.} \label{fig.exp_part_CI}
\end{center}
\end{figure}

We recall that we always take $\delta=0.1$. We present on Figure \ref{fig.chernoff_error} a comparison of the relative error rate between $\UFIB$, $\smax$, and $\inf_{g(\boldsymbol{\lambda})} \lambda^+-\lambda^-$ where $g$ is estimated through the empirical distribution, and the optimization problem is solved numerically. We see that while it is easier to estimate than $\UFIB$, it is harder than $\smax$. Note that numerically solving the minimization problem may lead to numerical errors for too low number of samples.

\begin{figure}[ht] 
\begin{center} 
\centerline{\includegraphics[width=1.3\columnwidth]{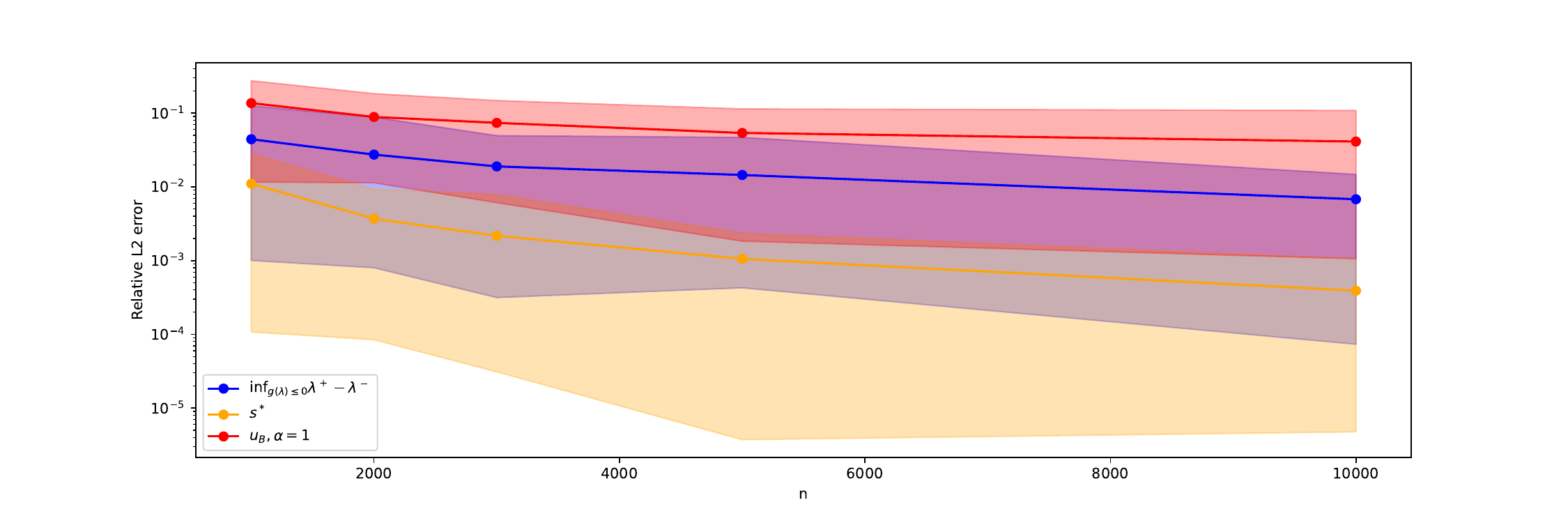}} 
\caption{Average $L_2^r$ convergence rate, on real data. The intervals represent the $1^{\text{st}}$ and $10^{\text{th}}$ decile.} \label{fig.chernoff_error}
\end{center}
\end{figure}

We also compare some of the bounds presented throughout this paper. Here we do not care about their estimation, but only their asymptotic value. In addition, we want to evaluate the impact of using partitions on these bounds. In order to do so we take a sample of size $n=2000$ of each of our $D_i$, and compute $q^*$. Then we use the full dataset to compute $s^*(q^*)$ for $\tau=10$, $\gamma(q^*)$ and $\inf_{g(\boldsymbol{\lambda})} \lambda^+-\lambda^-$. We also compute the exact unfairness quantile $\epsilon^*(\delta)$. We obtain Figure \ref{fig.various_bounds}. We see that using partitions seem to always yield tighter bounds, and that most of the time $\epsilon_1\geq \epsilon_2 \geq \epsilon_3$. Even the improved bounds are still far from $\epsilon^*$ (the optimal bound in probability), but it shows that these bounds can be improved. We conjecture that if we want to find reliable information on $\UFI$ when $d$ becomes very large, these bounds can be useful in practice. Conversely, these bounds and approximations should not be used if sufficient information is available to directly use $\UFIB$ (for instance at least $1$ sample by protected group). 

\begin{figure}[ht] 
\begin{center}
\centerline{\includegraphics[width=1.3\columnwidth]{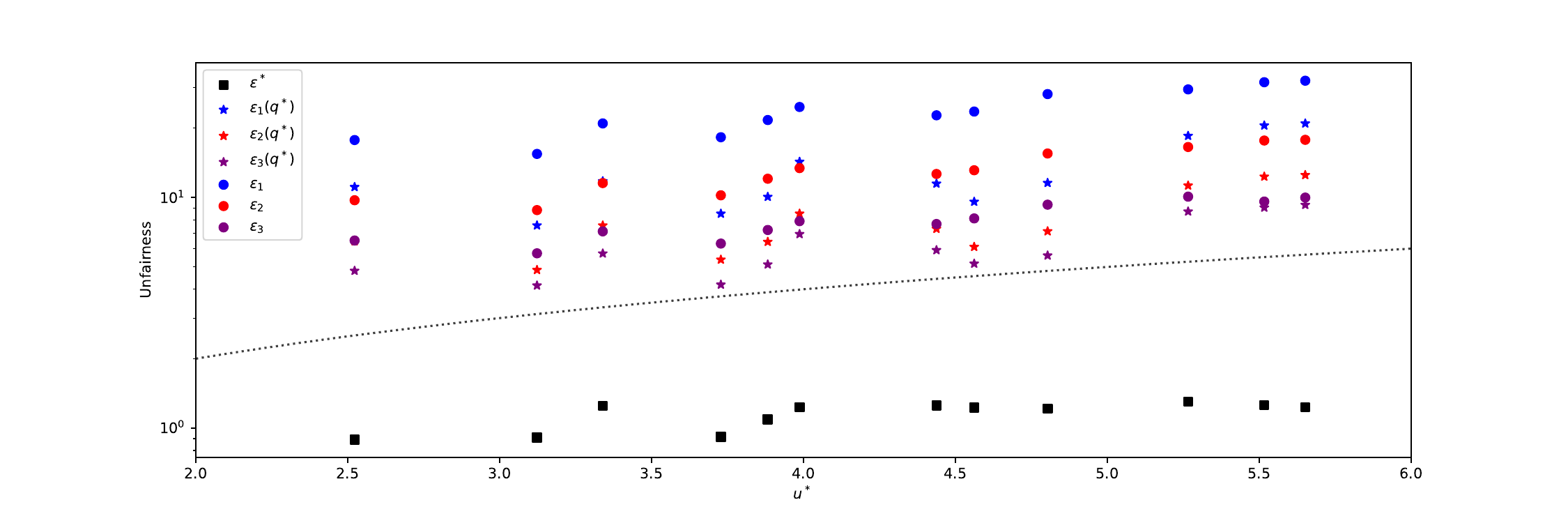}}  
\caption{Exact computation of the various bounds for each of the 12 $D_i$ selected.} \label{fig.various_bounds}
\end{center}
\end{figure}

\end{document}